\definecolor{darkpastelgreen}{rgb}{0.01, 0.75, 0.24}
	\definecolor{cadmiumgreen}{rgb}{0.0, 0.42, 0.24}
\definecolor{armygreen}{rgb}{0.29, 0.33, 0.13}
\title{FedAvg with Fine Tuning: Local Updates Lead to Representation Learning}
\author{Liam Collins\thanks{Department of Electrical and Computer Engineering, 
The University of Texas at Austin, Austin, TX,  USA. \qquad\qquad \qquad Email: \{liamc@utexas.edu, mokhtari@austin.utexas.edu, sanjay.shakkottai@utexas.edu\}.}, \quad Hamed Hassani\thanks{Department of Electrical and Systems Engineering, University of Pennsylvania, Philadelphia, PA, USA. \qquad\qquad\qquad Email: \{hassani@seas.upenn.edu\}.},\quad  Aryan Mokhtari$^*$, \quad Sanjay Shakkottai$^*$}
\begin{document}
\maketitle

\newcommand{\ones}{\mathbf{1}}
\newcommand{\integers}{{\mbox{\bf Z}}}
\newcommand{\symm}{{\mbox{\bf S}}}  

\newcommand{\nullspace}{{\mathcal N}}
\newcommand{\range}{{\mathcal R}}
\newcommand{\Rank}{\mathop{\bf Rank}}
\newcommand{\Tr}{\mathop{\bf Tr}}
\newcommand{\diag}{\mathop{\bf diag}}
\newcommand{\card}{\mathop{\bf card}}
\newcommand{\rank}{\mathop{\bf rank}}
\newcommand{\conv}{\mathop{\bf conv}}
\newcommand{\prox}{\mathbf{prox}}

\newcommand{\ind}{\mathds{1}}
\newcommand{\E}{\mathbb{E}}
\newcommand{\Prob}{\mathbb{P}}
\newcommand{\bigO}{\mathcal{O}}
\newcommand{\B}{\mathcal{B}}
\newcommand{\s}{\mathcal{S}}
\newcommand{\Ev}{\mathcal{E}}
\newcommand{\R}{\mathbb{R}}
\newcommand{\Co}{{\mathop {\bf Co}}} 
\newcommand{\dist}{\mathop{\bf dist{}}}
\newcommand{\argmin}{\mathop{\rm argmin}}
\newcommand{\argmax}{\mathop{\rm argmax}}
\newcommand{\epi}{\mathop{\bf epi}} 
\newcommand{\Vol}{\mathop{\bf vol}}
\newcommand{\dom}{\mathop{\bf dom}} 
\newcommand{\intr}{\mathop{\bf int}}
\newcommand{\sign}{\mathop{\bf sign}}

\newcommand{\cf}{{\it cf.}}
\newcommand{\eg}{{\it e.g.}}
\newcommand{\ie}{{\it i.e.}}
\newcommand{\etc}{{\it etc.}}

\newtheorem{innercustomthm}{Theorem}
\newenvironment{customthm}[1]
  {\renewcommand\theinnercustomthm{#1}\innercustomthm}
  {\endinnercustomthm}

\newtheorem{theorem}{Theorem}
\newtheorem{remark}{Remark}
\newtheorem{definition}{Definition}
\newtheorem{corollary}{Corollary}
\newtheorem{proposition}{Proposition}
\newtheorem{lemma}{Lemma}
\newtheorem{fact}{Fact}
\newtheorem{assumption}{Assumption}
\newtheorem{claim}{Claim}

\makeatletter
\newcommand*\rel@kern[1]{\kern#1\dimexpr\macc@kerna}
\newcommand*\widebar[1]{%
  \begingroup
  \def\mathaccent##1##2{%
    \rel@kern{0.8}%
    \overline{\rel@kern{-0.8}\macc@nucleus\rel@kern{0.2}}%
    \rel@kern{-0.2}%
  }%
  \macc@depth\@ne
  \let\math@bgroup\@empty \let\math@egroup\macc@set@skewchar
  \mathsurround\z@ \frozen@everymath{\mathgroup\macc@group\relax}%
  \macc@set@skewchar\relax
  \let\mathaccentV\macc@nested@a
  \macc@nested@a\relax111{#1}%
  \endgroup
}
\makeatother

\newcommand{\numberthis}{\addtocounter{equation}{1}\tag{\theequation}}

\newcommand{\simiid}{\overset{\text{i.i.d.}}{\sim}}
\def\dist{\operatorname{dist}}
\def\col{\operatorname{col}}
\def\sign{\operatorname{sign}}
\def\del{\boldsymbol{\Delta}}
\def\deld{\boldsymbol{\bar{\Delta}}}

\begin{abstract}
The Federated Averaging (FedAvg) algorithm, which  consists of  alternating between a few local stochastic gradient updates at client nodes, followed by a model averaging update at the server, is perhaps the most commonly used method in Federated Learning.  Notwithstanding its simplicity, several empirical studies have illustrated that the output model of FedAvg, after a few fine-tuning steps, leads to a model that generalizes well to new unseen tasks. This surprising performance of such a simple method, however, is not fully understood from a theoretical point of view. In this paper, we formally investigate this phenomenon  in the multi-task linear representation setting. We show that the reason behind generalizability of the FedAvg's output is its power in learning the common data representation among the clients' tasks, by leveraging the diversity among client data distributions via local updates. We formally establish the iteration complexity required by the clients for proving such result in the setting where the underlying shared representation is a linear map. To the best of our knowledge, this is the first such result for any setting. We also provide empirical evidence demonstrating FedAvg's representation learning ability in federated image classification with heterogeneous data.
\end{abstract}

\section{Introduction} \label{sec:intro}



Federated Learning (FL) \cite{mcmahan2017communication} provides a communication-efficient and privacy preserving means to learn from data distributed across clients such as cell phones, autonomous vehicles, and hospitals. 
FL aims for each client to benefit from collaborating in the learning process without sacrificing data privacy or paying a substantial communication cost. 
Federated Averaging (FedAvg) \cite{mcmahan2017communication} is the predominant FL algorithm. In FedAvg, also known as Local SGD \cite{stich2018local,stich2019error,wang2019cooperative}, the clients achieve communication efficiency by making multiple local updates of a shared global model before sending the result to the server, which averages the locally updated models to compute the next global model.

FedAvg is motivated by settings with {\em homogeneous} data across clients, since multiple local updates should improve model performance on all other clients' data when their data is similar. In contrast,  FedAvg faces two major challenges in more realistic {\em heterogeneous} data settings:  learning a single global model may not necessarily yield good performance for each individual client, and, multiple local updates may cause the FedAvg updates to drift away from  solutions of the global objective \cite{karimireddy2020scaffold,malinovskiy2020local,pathak2020fedsplit,charles2021convergence, wang2020tackling}. 
Despite these challenges, several empirical studies \cite{yu2020salvaging,reddi2020adaptive,li2020ditto} have observed that this shared global  model trained by  FedAvg {\em with several local updates per round} when further fine-tuned for individual clients is surprisingly effective in heterogeneous FL settings.
These studies motivate us to explore the impact of local updates on post-fine-tuning performance.

Meanwhile, a large number of recent works have shown that representation learning is a powerful paradigm for attaining high performance in multi-task settings, including FL. This is because the tasks' data often share a small set of features which are useful for downstream tasks, even if the datasets as a whole are heterogeneous. Consider, for example, heterogeneous federated image classification in which each client (task) may have images of different types of animals. It is safe to assume the images share a small number of features, such as body shape and color, which admit a simple  and accurate mapping  from feature space to label space. Since the number of important features is much smaller than the dimension of the data, knowing these features greatly simplifies each client's task. 

\begin{figure*}[t]
\centering
\begin{minipage}{.48\linewidth}
    \centering
    \includegraphics[width=0.94\linewidth]{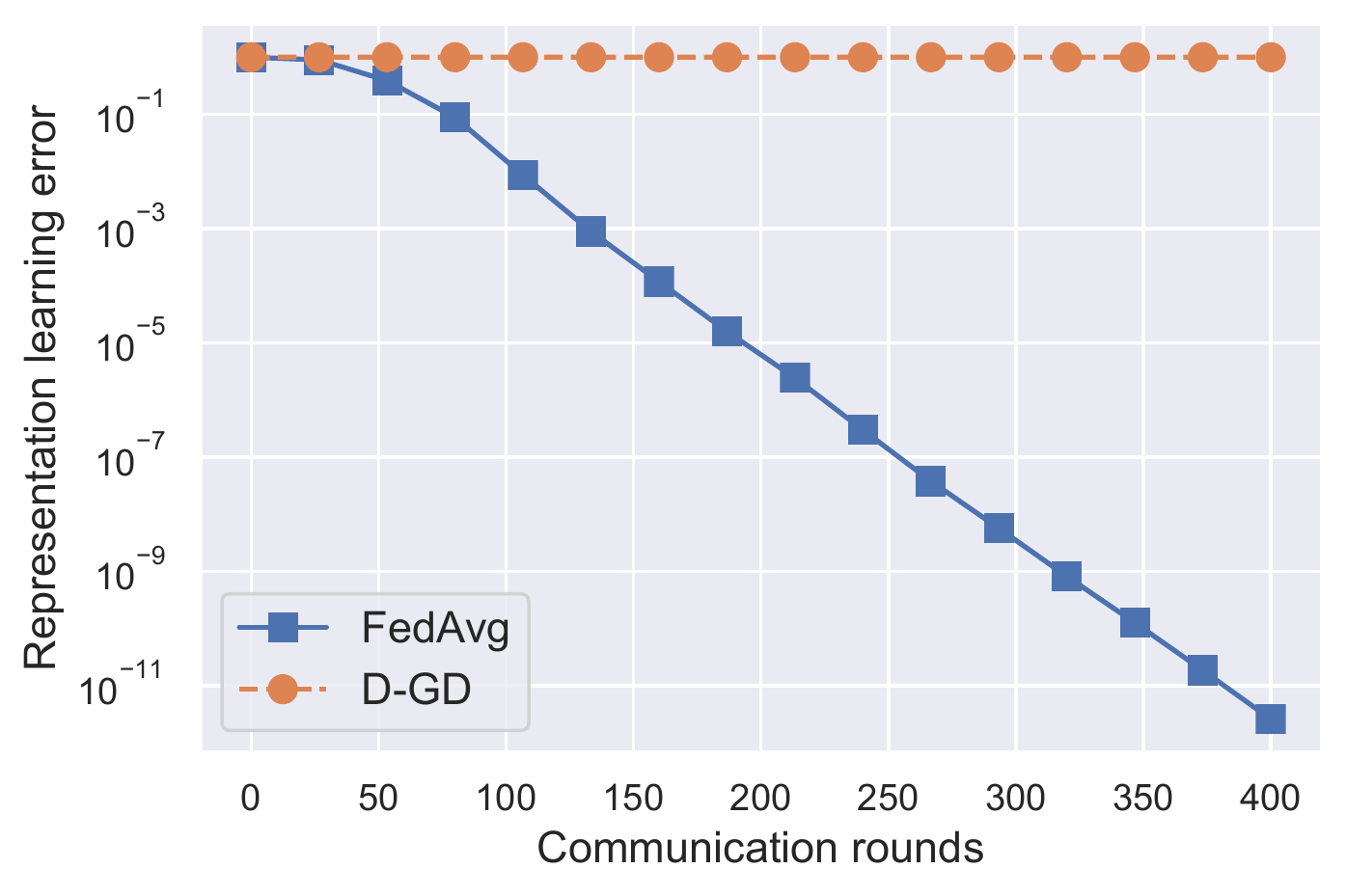}
    \vspace{-.2cm}
    \caption{
    In multi-task linear regression with population losses, FedAvg linearly converges to the ground-truth representation, while D-GD (FedAvg with one local update) fails to learn it.}
    \label{fig:1}
\end{minipage}%
\hspace{4mm}
\begin{minipage}{.48\linewidth}
    \centering
    \includegraphics[width=0.9\linewidth]{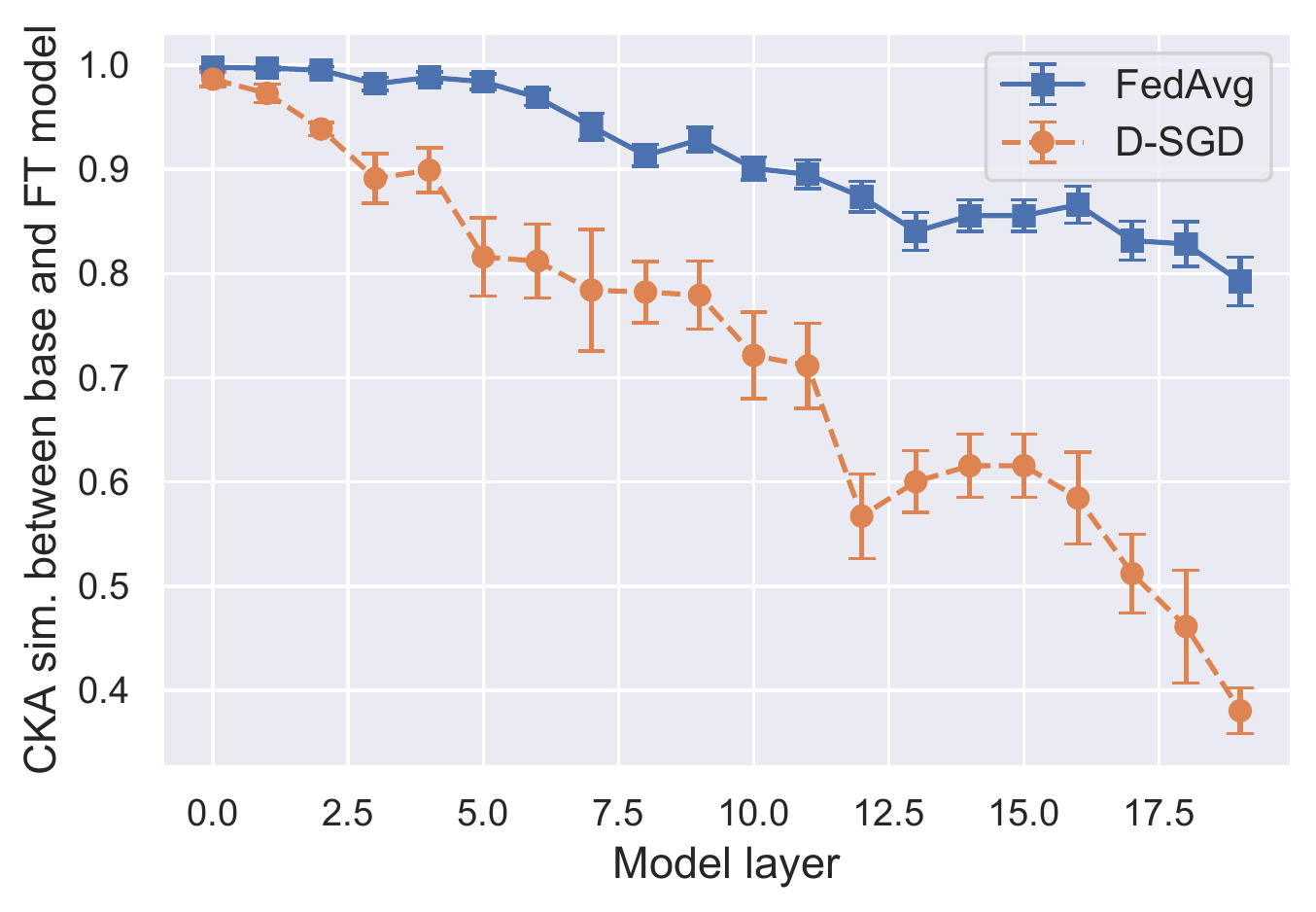}
     \vspace{-.2cm}
    \caption{The NN representation learned by FedAvg on CIFAR-100 with 5 classes/client does not change significantly when fine-tuned on a new dataset (CIFAR-10), unlike D-SGD.
    }
    \label{fig:2}
\end{minipage}
\end{figure*}

To explore the connection between local updates and representation learning, we 
first study multi-task linear regression
sharing a common ground-truth representation (Figure \ref{fig:1}).
We observe that FedAvg converges (exponentially fast) to the ground-truth representation in principal angle distance, while Distributed-GD (D-GD), which is effectively FedAvg with one local gradient update, fails to learn the shared representation. A similar concept can be shown in the nonlinear setting. We study a multi-layer CNN on a heterogeneous partition of CIFAR-100 (Figure \ref{fig:2}). Since there is not necessarily a ground-truth model here, we evaluate representation learning as follows. We first train the models with FedAvg and Distributed-SGD (D-SGD) then fine-tune the pre-trained models on clients from a new dataset, CIFAR-10.
Finally we evaluate the quality of the learned representation by measuring the amount that each model layer changes during fine-tuning using CKA similarity \cite{kornblith2019similarity}. Observe that the early layers of FedAvg's pre-trained model (corresponding to the representation) change much less than those of D-SGD. 
More details for both experiments are in Section \ref{sec:experiments} and Appendix \ref{app:experiments}.
These observations suggest that FedAvg learns a shared representation that generalizes to new clients, even when trained in a heterogeneous setting.  Hence, a natural question that arises is:

\vspace{-1.5mm}
\begin{quote}
\centering
{\em   {Does FedAvg provably learn effective representations of heterogeneous data?}}
\vspace{-1.5mm}
\end{quote}

We answer ``yes'' to this question by proving that FedAvg recovers the ground-truth representation in the case of multi-task linear regression. 
Critically, we show that FedAvg's local updates leverage the diversity among client data distributions to learn their common representation. This is surprising because FedAvg is a general-purpose algorithm not designed for representation learning.
Our analysis thus yields new insights on how FedAvg finds generalizable models. Our contributions are:

\vspace{-2mm}
\begin{itemize}[leftmargin=25pt, labelindent=1.5pt]
\item \textbf{Representation learning guarantees.} We study the behavior of FedAvg in multi-task linear regression with common representation. Here, each client aims to solve a $d$-dimensional regression with ground-truth solution that belongs to a shared $k$-dimensional subspace of $\mathbb{R}^d$, where $k\ll d$.
Our results show that FedAvg with $\tau\geq 2$ local updates learns the representation at a linear rate when each client accesses population gradients.
{\em To the best of our knowledge, this  is the first result showing that FedAvg learns an effective representation in any setting.}
\item \textbf{Insights on the importance of local updates.} Our analysis reveals that executing  more than one local update between communication rounds \textit{exploits the diversity} of the clients' ground truth regressors to improve the learned representation in all $k$ directions in the linear setting. 
In contrast, 
we prove that D-GD (FedAvg with one local update) fails to learn the representation.
    \item \textbf{Empirical evidence of representation learning.} We provide experimental results showing Fedavg learns a generalizable representation when we use deep neural networks on real-world data sets. This confirms the main massage of our theoretical results in the bilinear setting and suggest that our results can generalize to more complex scenarios.
\end{itemize}

\subsection{Related work}
Recently there has been a surge of interest motivated by FL in analyzing FedAvg/Local SGD in heterogeneous settings. Multiple works have shown that FedAvg converges to a global optimum (resp. stationary point) of the global  objective in convex (resp. nonconvex) settings but with decaying learning rate \cite{khaled2020tighter,koloskova2020unified,li2019convergence,qu2020federated,karimireddy2020scaffold}, leading to sublinear rates and communication complexity sometimes dominated by Distributed-SGD \cite{woodworth2020minibatch}.
These results are tight in the sense that FedAvg with fixed learning rate may {\em not} converge to a stationary point of the global objective in the presence of data heterogeneity, as its multiple local updates cause it to optimize a distinct, unknown objective \cite{malinovskiy2020local,mitra2021linear,pathak2020fedsplit,charles2021convergence, wang2020tackling,wang2019federated,li2019convergence}. Several methods have tried to correct this objective inconsistency via gradient tracking \cite{karimireddy2020scaffold, mitra2021linear, haddadpour2020federated,liang2019variance,murata2021bias,gorbunov2021local}, local regularization \cite{li2018federated,wang2019federated,zhang2020federated,t2020personalized}, operator splitting \cite{pathak2020fedsplit}, and strategic client sampling \cite{ribero2020communication,chen2020optimal,cho2020client}.
We take an orthogonal approach by arguing that FedAvg's local updates actually {\em benefit} convergence  in heterogeneous settings  by resulting in more generalizable models. 

Several papers have also analyzed FedAvg from a generalization perspective. It was shown in \cite{chen2021theorem} that in a setting with strongly convex losses, either  local training or FedAvg with fine-tuning  (but not both) achieves minimax risk, depending on the level of data heterogeneity.
Similarly, \cite{cheng2021fine} argued that FedAvg with fine-tuning generalizes as well as more sophisticated methods, including model-agnostic meta-learning (MAML) \cite{finn2017model,fallah2020personalized}, in a strongly convex regularized linear regression setting. Additional work has studied the generalization of FedAvg in kernel regression, but for convex objectives that do not allow for representation learning  \cite{su2021achieving}, and the generalization of a variant of FedAvg, known as Reptile \cite{nichol2018reptile}, on wide two-layer ReLU networks with homogeneous data \cite{huang2021fl}. 
We focus on the multi-task linear representation learning setting \cite{maurer2016benefit}, which has become popular in recent years as it is an expressive but tractable nonconvex setting for studying the sample-complexity benefits of learning representations and the representation learning abilities of popular algorithms in data heterogeneous settings \cite{ collins2021exploiting,du2020fewshot, tripuraneni2020provable,kong2020robust, thekumparampil2021statistically,collins2022maml,sun2021towards}. Remarkably, our study of FedAvg reveals that it can learn an effective representation even though it was not designed for this goal, unlike a variety of personalized FL methods specifically tailored for representation learning \cite{collins2021exploiting,liang2020think,arivazhagan2019federated,oh2021fedbabu}.

\textbf{Notations.} We use $\mathcal{N}(\mathbf{u}, \mathbf{\Sigma})$ to signify the multivariate Gaussian distribution with mean $\mathbf{u}$ and covariance $\mathbf{\Sigma}$.  $\mathcal{O}^{d \times k}$ denotes the set of matrices in $\mathbb{R}^{d \times k}$ with orthonormal columns. The notation $\col(\mathbf{B})$ represents the column space of the matrix $\mathbf{B}$, and $\col(\mathbf{B})^\perp$ is the orthogonal complement to this space. The norm $\|\cdot\|$ is the spectral norm and  $\mathbf{I}_d$ is the identity matrix in $\mathbb{R}^{d \times d}$.  We use $[m]$ to indicate the set of natural numbers up to and including $m$.

\section{Problem Formulation} \label{sec:formulation}

Consider a federated setting with a central server and $M$ clients. Each client $i\in [M]$ has a training dataset $\mathcal{\hat{D}}_i$ of $n_i$ labeled samples drawn from a distribution $\mathcal{D}_i$ over $\mathcal{X}\times \mathcal{Y}$, where $\mathcal{X}$ is the input space and $\mathcal{Y}$ is the label space. The learning model is given by $h_{\boldsymbol{\theta}}:\mathcal{X}\rightarrow \mathcal{Y}$ for model parameters $\boldsymbol{\theta}\in \mathbb{R}^D$. The loss of the model on a sample 
$(\mathbf{x}, \mathbf{y}) \in \mathcal{X}\times \mathcal{Y}$ is given by $\ell( h_{\boldsymbol{\theta}}(\mathbf{x}), \mathbf{y} )$, which may be, for example, the squared or cross entropy loss.  
The loss of model parameters $\boldsymbol{\theta}$ on the $i$-th client is the average loss of the model $h_{\boldsymbol{\theta}}$ on the samples in $\mathcal{\hat{D}}_i$, namely $f_i(\boldsymbol{\theta}) \coloneqq \tfrac{1}{n_i}\sum_{j=1}^{n_i}\ell(h_{\boldsymbol{\theta}}( \mathbf{x}_{i,j}),  \mathbf{y}_{i,j})$, where $(\mathbf{x}_{i,j}, \mathbf{y}_{i,j})$ is the $j$-th sample in $\mathcal{\hat{D}}_i$.
The server aims to  leverage all of the data across clients to find models that achieve small loss $f_i(\boldsymbol{\theta})$ for each client. To do so, the standard approach is to find a single model $\boldsymbol{\theta}$ that minimizes  the average of the client losses weighted by number of samples:
\begin{equation}
    \min_{\boldsymbol{\theta}} \frac{1}{N}\sum_{i=1}^M n_i f_i(\boldsymbol{\theta}) \label{global}\ = \
    \frac{1}{N}\sum_{i=1}^M 
    \sum_{j\in \mathcal{\hat{D}}_i}\ell(h_{\boldsymbol{\theta}}( \mathbf{x}_{i,j}),\mathbf{y}_{i,j})),
\end{equation}
where $N= \sum_{i=1}^M n_i$. Due to communication and privacy constraints, the clients cannot share their local data $\mathcal{\hat{D}}_i$, so \eqref{global} must be solved in a federated manner. 


\textbf{FedAvg.} The most common FL method is FedAvg. On each round $t$ of FedAvg, the server uniformly samples a set $\mathcal{I}_t$ of $m\leq M$ clients.
Each selected client receives the current global parameters $\boldsymbol{\theta}_t$, executes multiple SGD steps on its local data  starting from $\boldsymbol{\theta}_t$, then sends the result back to the server. The server then computes $\boldsymbol{\theta}_{t+1}$ as the weighted average of the updates.  Specifically, upon receiving the global model $\boldsymbol{\theta}_t$, client $i$ computes
\begin{align}
    \boldsymbol{\theta}_{t,i,s+1} &= \boldsymbol{\theta}_{t,i,s} - \alpha \mathbf{g}_{t,i,s}(\boldsymbol{\theta}_{t,i,s}), \label{eq_local_updates}
\end{align}
for $s=1,\dots,\tau-1$,
where $\tau$ is the number of local steps, $\boldsymbol{\theta}_{t,i,0} = \boldsymbol{\theta}_t$ and $\mathbf{g}_{t,i,s}(\boldsymbol{\theta}_{t,i,s})$ is a stochastic gradient of $f_i$ evaluated at $\boldsymbol{\theta}_{t,i,s}$ using $b$ samples from $\mathcal{\hat{D}}_i$. The client then sends $\boldsymbol{\theta}_{t,i,\tau}$ back to the server, which computes the next global iterate as:
\begin{equation}
    \boldsymbol{\theta}_{t+1} = \frac{1}{N_t}\sum_{i \in \mathcal{I}_t} n_i \boldsymbol{\theta}_{t,i,\tau},
\end{equation} where $N_t \!\coloneqq\! \sum_{i \in \mathcal{I}_t} n_i$.
Note that $\tau\!=\!1$ corresponds to D-SGD, also known as mini-batch SGD whose convergence properties are well-understood \cite{woodworth2020minibatch,nguyen2018sgd,shamir2014distributed,gower2019sgd}. FedAvg improves the communication efficiency of D-SGD by making  $\tau\!\geq\! 2$ local updates between communication rounds. 


\textbf{Fine-tuning.} After training for $T$ communication rounds, the global parameters $\boldsymbol{\theta}_T$ learned by FedAvg are typically fine-tuned on each client before testing. In particular, starting from $\boldsymbol{\theta}_T$, client $i$ executes $\tau'$ steps of SGD on its local data as follows:
\begin{align}
   \boldsymbol{\theta}_{T,i,s+1} = \boldsymbol{\theta}_{T,i,s} - \alpha \mathbf{g}_{T,i,s}(\boldsymbol{\theta}_{t,i,s})
\end{align}
for $s\in [\tau'-1]$. The fine-tuned model ultimately used for testing is $\boldsymbol{\theta}_{T,i,\tau'}$. Note that a new client, indexed by $M+1$, entering the system after FedAvg training has completed can also fine-tune $\boldsymbol{\theta}_{T}$ using the same procedure to obtain a personalized solution $\boldsymbol{\theta}_{T,M+1,\tau'}$. 



\textbf{Representation learning.} We aim to answer why the fine-tuned models $\{\boldsymbol{\theta}_{T,i,\tau'}\}_{i=1}^{M+1}$ perform well in practice by taking a representation learning perspective. We show that the output of FedAvg, i.e., $\mathbf{\theta}_T$, has learned the common data representation among clients. To formalize this result, we consider a class of models that can be written as the composition of a representation $h^{\text{rep}}$ and a prediction module, i.e. head, denoted as $h^{\text{head}}$. Let the model parameters be split as $\boldsymbol{\theta} := [\boldsymbol{\phi}, \boldsymbol{\psi}]$, where $\boldsymbol{\phi}$ contains the representation parameters and $\boldsymbol{\psi}$ contains the head  parameters. Then, for any $\mathbf{x}\in \mathcal{X}$, the prediction of the learning model is $h_{\boldsymbol{\theta}}(\mathbf{x}) = (h_{\boldsymbol{\psi}}^{\text{head}} \circ h_{\boldsymbol{\phi}}^{\text{rep}})(\mathbf{x}) = h_{\boldsymbol{\psi}}^{\text{head}} (h_{\boldsymbol{\phi}}^{\text{rep}}(\mathbf{x}))$. 
For instance, if $h_{\boldsymbol{\theta}}$ is a neural network with weights $\boldsymbol{\theta}$, then $h_{\boldsymbol{\phi}}^{\text{rep}}$ is the first many layers of the network with weights $\boldsymbol{\phi}$, and $h_{\boldsymbol{\psi}}^{\text{head}}$ is the network last few layers with weights $\boldsymbol{\psi}$. 
A standard assumption in multi-task settings is existence of a common representation $h_{\boldsymbol{\phi}_\ast}^{\text{rep}}$ that admits an easily learnable head $h_{\boldsymbol{\psi}_{\ast,i}}^{\text{rep}}$ such that $h_{\boldsymbol{\psi}_{\ast,i}}^{\text{head}} \circ h_{\boldsymbol{\phi_*}}^{\text{rep}}$ performs well for task $i$. It is thus of interest to all the clients to learn $h_{\boldsymbol{\phi}_\ast}^{\text{rep}}$.

\section{Main Results} \label{sec:linear}

We employ the standard setting used for algorithmic representation learning analysis: multi-task linear regression \cite{tripuraneni2020provable,thekumparampil2021sample,collins2021exploiting,chua2021fine}. In this setting, samples $(\mathbf{x}_{i,j},y_{i,j})$ for each client $i$ are drawn independently from a distribution $\mathcal{D}_i$ on $\mathbb{R}^d\times \mathbb{R}$  such that
\begin{align}
    \mathbf{x}_{i,j} \stackrel{\text{i.i.d.}}{\sim} p_{\mathbf{x}}, \;\; y_{i,j} = \langle \boldsymbol{\beta}_{\ast,i}, \mathbf{x}_{i,j} \rangle + \zeta_{i,j} \; \text{ where } \; \zeta_{i,j} \stackrel{\text{i.i.d.}}{\sim} p_{\zeta} \nonumber
\end{align}
for an unobserved ground-truth regressor $\boldsymbol{\beta}_{\ast,i}\!\in \!\mathbb{R}^d$ and label noise $\zeta_{i,j}$. We assume the distributions $p_{\mathbf{x}}$ and $p_{\zeta}$ are such that $\mathbb{E}[\mathbf{x}_{i,j}]\! =\! \mathbf{0}, \mathbb{E}[\mathbf{x}_{i,j}\mathbf{x}_{i,j}^\top]\! =\! \mathbf{I}_d$ and $\mathbb{E}[\zeta_{i,j}]\! =\! 0$.

To incentivize representation learning, each $\boldsymbol{\beta}_{\ast,i}$ belongs to the same $k$-dimensional subspace of $\mathbb{R}^d$, where $k\ll d$. Let $\mathbf{B}_\ast \in \mathcal{O}^{d\times k}$ have columns that form an orthogonal basis for the shared subspace, so that $\boldsymbol{\beta}_{\ast,i} = \mathbf{B}_\ast \mathbf{w}_{\ast,i}$ for some  $\mathbf{w}_{\ast,i}\in\mathbb{R}^k$ for each $i$. In other words, there  exists a low-dimensional set of parameters known as the ``head'' that can specify the ground-truth model for client $i$ once the shared representation, i.e., $\col(\mathbf{B}_\ast)$, is known. It is advantageous to learn $\col(\mathbf{B}_\ast)$ because once it is known, all clients (including potentially new clients entering the system) have sample complexity $O(k)\ll d$ as they only need to learn the parameters of their head \cite{tripuraneni2020provable,du2020fewshot}. 

Each client $i$ ultimately aims to learn a model $\hat{\boldsymbol{\beta}}_i$ that approximates $\boldsymbol{\beta}_{\ast,i}$ in order to achieve good generalization on its local distribution. 
To eventually achieve this for each client, FedAvg with fine-tuning first aims to learn a global model consisting of a representation $\mathbf{B}\in\mathbb{R}^{d \times k}$ and a head $\mathbf{w}\in\mathbb{R}^k$ that minimizes the average loss across clients. The loss for client $i$ is $f_i(\mathbf{B}, \mathbf{w}) \coloneqq \frac{1}{2n_i}\sum_{j=1}^{n_i} (y_{i,j} - \langle \mathbf{Bw}, \mathbf{x}_{i,j} \rangle)^2$, i.e. the average squared loss on the local data, so
FedAvg tries to learn a global model that solves the nonconvex problem:
\begin{align}
 \min_{\mathbf{B}\in\mathbb{R}^{d\times k}, \mathbf{w}\in \mathbb{R}^{k}} \frac{1}{N} \sum_{i=1}^M n_i\bigg\{ f_{i}(\mathbf{B},\mathbf{w}) \coloneqq \frac{1}{2n_i}\sum_{j=1}^{n_i} (y_{i,j} - \langle \mathbf{Bw}, \mathbf{x}_{i,j} \rangle)^2\bigg\}. \label{global_linear}
\end{align}
To solve \eqref{global_linear} in a distributed manner, FedAvg dictates that each client makes a series of local updates of the current global model before returning the models to the server for averaging, as discussed in Section \ref{sec:formulation}. 
We aim to show that the FedAvg training procedure learns the column space of $\mathbf{B}_\ast$.
The first step is to  make standard diversity and normalization assumptions on the ground-truth heads.
\begin{assumption}[Client normalization] \label{assump:n}
There exists $L_{\max}<\infty$ s.t. $\forall i \in [M]$, $\|\mathbf{w}_{\ast,i}\|_2 \leq L_{\max}$.
\end{assumption}
\begin{assumption}[Client diversity] \label{assump:td}
There exists $\mu \!>\! 0$ s.t. $\sigma_{\min}(\tfrac{1}{M}\sum_{i=1}^M  (\mathbf{w}_{\ast,i}-\mathbf{\bar{w}}_{\ast})(\mathbf{w}_{\ast,i}-\mathbf{\bar{w}}_{\ast})^\top) \geq \mu^2 $, where $\mathbf{\bar{w}}_{\ast} \coloneqq \tfrac{1}{M} \sum_{i=1}^M \mathbf{w}_{\ast,i}$. Define $\kappa_{\max}\coloneqq \nicefrac{L_{\max}}{\mu}$. 
\end{assumption}
Assumption \ref{assump:td} is very similar to typical task diversity assumptions except that it quantifies the diversity of the centered rather than un-centered tasks \cite{du2020fewshot,tripuraneni2020provable}. Intuitively, task diversity is required so that all of the directions in $\col(\mathbf{B}_\ast)$ are observed.
Next, to obtain convergence results we must define the variance of the ground-truth heads and the principal angle distance between representations.

\begin{definition}[Client variance] \label{def:var}
For $\gamma > 0$, define: $\gamma^2 \coloneqq \frac{1}{M}\sum_{i=1}^M \|\mathbf{w}_{\ast,i} - \mathbf{\bar{w}}_{\ast}\|^2$, where $\mathbf{\bar{w}}_{\ast}$ is defined in Assumption \ref{assump:td}. For $H > 0$, define $H^4  \coloneqq \frac{1}{M}\sum_{i=1}^M \| \mathbf{w}_{\ast,i}\mathbf{w}_{\ast,i}^\top - \tfrac{1}{M}\sum_{i'=1}^M\mathbf{\bar{w}}_{\ast,i}\mathbf{\bar{w}}_{\ast,i}^\top\|^2$.
\end{definition}
\begin{definition}[Principal angle distance]
For two matrices $\mathbf{B}_1, \mathbf{B}_2 \in \mathbb{R}^{d \times k}$, the {\em principal angle distance} between $\mathbf{B}_1$ and $\mathbf{B}_2$ is defined as $\dist(\mathbf{B}_1, \mathbf{B}_2) \coloneqq \|\mathbf{\bar{B}}_{1,\perp}^\top \mathbf{\bar{B}}_{2}\|_2$, 
where the columns of $\mathbf{\bar{B}}_{1,\perp} \in \mathcal{O}^{d \times d-k}$ and $\mathbf{\bar{B}}_{2} \in \mathcal{O}^{d \times k}$ form orthonormal bases for $\col(\mathbf{B}_1)^\perp$ and $\col(\mathbf{B}_2)$, respectively.
\end{definition}
Intuitively, the principal angle distance between  $\mathbf{B}_1$ and $\mathbf{B}_2$ is the sine of the largest angle between the subspaces spanned by their columns.
Now we are ready to state our main result. We consider the case that each client has access to gradients of the population loss on its local data distribution. 


\begin{theorem}[FedAvg Representation Learning] \label{thm:main_pop} Consider the case that each client takes gradient steps with respect to their population loss $f_i(\mathbf{B},\mathbf{w})\coloneqq \tfrac{1}{2}\|\mathbf{Bw}-\mathbf{B}_{\ast}\mathbf{w}_{\ast,i}\|^2$ and all losses are weighted equally in the global objective. 
Suppose Assumptions  \ref{assump:n} and \ref{assump:td}  hold, the number of clients participating each round satisfies $m \geq \min(M, 20((\nicefrac{\gamma}{L_{\max}})^2 + (\nicefrac{H}{L_{\max}})^4)(\alpha L_{\max})^{-4}\log(kT))$, and the initial parameters satisfy (i)  $ \delta_0 \coloneqq \dist(\mathbf{B}_0, \mathbf{B}_\ast)\leq \sqrt{1\! -\! E_0}$ for any $E_0 \in (0,1]$, (ii) $\|\mathbf{I}- \alpha \mathbf{B}_0^\top \mathbf{B}_0\|_2 = O(\alpha^2 \tau L_{\max}^2 \kappa_{\max}^2)$ and (iii) $\|\mathbf{w}_0\|_2 = O(\alpha^{2.5} \tau L_{\max}^3)$.
Choose  step size $\alpha = O(\tfrac{1-\delta_0}{ \sqrt{\tau} L_{\max} \kappa_{\max}^2})$. 
    Then for any $\epsilon \in (0, 1)$, the distance of the representation learned by FedAvg with $\tau\geq 2$ local updates satisfies  $\dist(\mathbf{B}_T, \mathbf{B}_\ast) <\epsilon$ after at most
\begin{align}
T = O \big(\tfrac{1}{\alpha^2 \tau \mu^2 E_0}\log(\nicefrac{1}{\epsilon})\big)
\end{align}
communication rounds with probability at least $1 - 4 (kT)^{-99}$. 
\end{theorem}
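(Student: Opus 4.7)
The plan is to exploit a key structural identity: projecting $\mathbf{B}_{t+1}$ onto $\col(\mathbf{B}_\ast)^\perp$ annihilates the client-specific gradient signal, so the orthogonal component evolves by right-multiplication by a $k\times k$ matrix driven entirely by the local $\mathbf{w}$-iterates. Let $\hat{\mathbf{B}}_{\ast,\perp}\in\mathcal{O}^{d\times(d-k)}$ be any orthonormal basis for $\col(\mathbf{B}_\ast)^\perp$ and set $\mathbf{U}_t\coloneqq \hat{\mathbf{B}}_{\ast,\perp}^\top\mathbf{B}_t$. Since $\hat{\mathbf{B}}_{\ast,\perp}^\top\mathbf{B}_\ast=\mathbf{0}$, left-multiplying the $\mathbf{B}$-update $\mathbf{B}_{t,i,s+1}=\mathbf{B}_{t,i,s}-\alpha(\mathbf{B}_{t,i,s}\mathbf{w}_{t,i,s}-\mathbf{B}_\ast\mathbf{w}_{\ast,i})\mathbf{w}_{t,i,s}^\top$ by $\hat{\mathbf{B}}_{\ast,\perp}^\top$ yields $\hat{\mathbf{B}}_{\ast,\perp}^\top\mathbf{B}_{t,i,s+1}=(\hat{\mathbf{B}}_{\ast,\perp}^\top\mathbf{B}_{t,i,s})(\mathbf{I}_k-\alpha\mathbf{w}_{t,i,s}\mathbf{w}_{t,i,s}^\top)$. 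Iterating over $s$ and averaging across sampled clients gives the exact recursion
\[
\mathbf{U}_{t+1}\;=\;\mathbf{U}_t\,\mathbf{M}_t,\qquad \mathbf{M}_t\;\coloneqq\; \frac{1}{m}\sum_{i\in\mathcal{I}_t}\prod_{s=0}^{\tau-1}\bigl(\mathbf{I}_k-\alpha\,\mathbf{w}_{t,i,s}\mathbf{w}_{t,i,s}^\top\bigr),
\]
so the theorem reduces to showing $\|\mathbf{M}_t\|\leq 1-\Omega(\tau\alpha^2\mu^2 E_0)$ at every round; the bound $\dist(\mathbf{B}_t,\mathbf{B}_\ast)\leq \|\mathbf{U}_t\|/\sigma_{\min}(\mathbf{B}_t)$ then converts the contraction of $\|\mathbf{U}_t\|$ into the final distance bound.

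The second step is to extract this contraction by approximating the local $\mathbf{w}$-iterates. I would carry the theorem's hypotheses forward as inductive invariants --- chiefly $\alpha\mathbf{B}_t^\top\mathbf{B}_t\approx \mathbf{I}_k$ together with controlled $\|\mathbf{w}_t\|$ --- so that the $\mathbf{w}$-update $\mathbf{w}_{t,i,s+1}=(\mathbf{I}_k-\alpha\mathbf{B}_{t,i,s}^\top\mathbf{B}_{t,i,s})\mathbf{w}_{t,i,s}+\alpha\mathbf{B}_{t,i,s}^\top\mathbf{B}_\ast\mathbf{w}_{\ast,i}$ collapses to $\mathbf{w}_{t,i,s}\approx \alpha\mathbf{B}_t^\top\mathbf{B}_\ast\mathbf{w}_{\ast,i}$ after a single local step, using $\|\mathbf{I}_k-\alpha\mathbf{B}_t^\top\mathbf{B}_t\|=O(\alpha^2\tau L_{\max}^2\kappa_{\max}^2)$ from hypothesis (ii). Consequently, for every $s\geq 1$,
\[
\frac{1}{M}\sum_{i=1}^M \mathbf{w}_{t,i,s}\mathbf{w}_{t,i,s}^\top \;\approx\; \alpha^2\,\mathbf{B}_t^\top\mathbf{B}_\ast\,\mathbf{W}_\ast\,\mathbf{B}_\ast^\top\mathbf{B}_t,\qquad \mathbf{W}_\ast\;\coloneqq\; \frac{1}{M}\sum_{i=1}^M\mathbf{w}_{\ast,i}\mathbf{w}_{\ast,i}^\top.
\]
Assumption \ref{assump:td} guarantees $\lambda_{\min}(\mathbf{W}_\ast)\geq\mu^2$, and combining $\sigma_{\min}(\mathbf{B}_t)^2\gtrsim 1/\alpha$ with $\sigma_{\min}(\mathbf{B}_\ast^\top\bar{\mathbf{B}}_t)^2\geq 1-\delta_t^2\geq E_0$ forces $\lambda_{\min}(\mathbf{B}_t^\top\mathbf{B}_\ast\mathbf{W}_\ast\mathbf{B}_\ast^\top\mathbf{B}_t)\gtrsim \mu^2 E_0/\alpha$.

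Expanding $\mathbf{M}_t$ to leading order (and handling sampling noise separately below) gives $\mathbf{M}_t\approx \mathbf{I}_k-\alpha\sum_{s=0}^{\tau-1}\tfrac{1}{M}\sum_i\mathbf{w}_{t,i,s}\mathbf{w}_{t,i,s}^\top$ plus controllable cross terms. The $s=0$ slice is only the rank-one $\alpha\mathbf{w}_t\mathbf{w}_t^\top$, harmless because it can only contract a single direction, but each of the $\tau-1$ slices with $s\geq 1$ contributes the full-rank positive semidefinite block above, yielding
\[
\|\mathbf{M}_t\|\;\leq\;1\;-\;c(\tau-1)\alpha^3\cdot\frac{\mu^2 E_0}{\alpha}\;=\;1\;-\;c'\,\tau\alpha^2\mu^2 E_0.
\]
This is precisely where $\tau\geq 2$ is essential: for $\tau=1$ the sum over $s\geq 1$ is empty, the contraction in $\mathbf{M}_t$ is rank-one, and only one of the $k$ principal angles is ever corrected --- matching the failure of D-GD shown in Figure \ref{fig:1}. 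Client sampling noise is then absorbed by a matrix Bernstein bound on $\mathbf{M}_t-\mathbb{E}[\mathbf{M}_t]$: the per-term variance proxy is driven by $\alpha^2\gamma^2+\alpha^4 H^4$ (Definition \ref{def:var}), so the theorem's lower bound on $m$ makes the deviation $o(\tau\alpha^2\mu^2 E_0)$ with probability at least $1-(kT)^{-100}$ per round, and a union bound over $T$ rounds yields the advertised failure probability. Iterating the per-round contraction for $T=O\bigl(\tfrac{1}{\alpha^2\tau\mu^2 E_0}\log(1/\epsilon)\bigr)$ rounds drives $\|\mathbf{U}_T\|$ below $\epsilon\,\sigma_{\min}(\mathbf{B}_T)$, which is exactly $\dist(\mathbf{B}_T,\mathbf{B}_\ast)\leq\epsilon$.

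The hardest part will be the inductive propagation of the invariants on $\mathbf{w}_t$ and on $\alpha\mathbf{B}_t^\top\mathbf{B}_t$ across the roughly $1/\alpha^2$ rounds of training. A naive per-round bound on the perturbations of $\mathbf{w}_{t+1}$ and $\alpha\mathbf{B}_{t+1}^\top\mathbf{B}_{t+1}$ is of order $\sqrt{\alpha}L_{\max}$ and $\alpha\tau L_{\max}^2$ respectively, either of which would destroy the tight invariants within $O(1)$ rounds. The resolution is to show that FedAvg's averaging step cancels the first-order, client-homogeneous parts of both drifts --- only centered, second-order residuals survive --- and then to couple these residuals to the decaying representation error $\dist(\mathbf{B}_t,\mathbf{B}_\ast)$, so that the invariants either tighten or remain within their budget as training proceeds. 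Threading this coupled induction through the Taylor expansions of the local updates, while keeping the dominant $(1-\Omega(\tau\alpha^2\mu^2 E_0))$ contraction signal in $\|\mathbf{M}_t\|$ uncontaminated by higher-order error terms, is the most delicate step of the argument.
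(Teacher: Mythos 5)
Your plan matches the paper's proof essentially step-for-step: the exact recursion $\mathbf{B}_{\ast,\perp}^\top\mathbf{B}_{t+1}=\mathbf{B}_{\ast,\perp}^\top\mathbf{B}_t\cdot\frac{1}{m}\sum_{i\in\mathcal{I}_t}\prod_{s=0}^{\tau-1}(\mathbf{I}_k-\alpha\mathbf{w}_{t,i,s}\mathbf{w}_{t,i,s}^\top)$ is the core identity in the paper's Lemma for $A_4(t+1)$, the approximation $\mathbf{w}_{t,i,s}\approx\alpha\mathbf{B}_t^\top\mathbf{B}_\ast\mathbf{w}_{\ast,i}$ for $s\geq 1$ is the paper's local hypothesis $A_{1,t,i}(s)$, the observation that the $s=0$ slice contributes no diversity is exactly why $\tau\geq 2$ is required, and your identification of the coupled invariant propagation (controlling $\boldsymbol{\Delta}_t=\mathbf{I}_k-\alpha\mathbf{B}_t^\top\mathbf{B}_t$ and $\|\mathbf{w}_t\|$ over $\Theta(1/(\alpha^2\tau\mu^2 E_0))$ rounds by coupling the residuals to the geometrically decaying $\dist_t$) as the hardest step is precisely where the paper expends its main technical effort (Lemma establishing $A_3(t+1)$, which extracts a $-\frac{1}{2}\boldsymbol{\Delta}_t\mathbf{P}_t-\frac{1}{2}\mathbf{P}_t\boldsymbol{\Delta}_t$ negative-feedback term from $\alpha^2\mathbf{B}_t^\top\mathbf{G}_t$). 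The only cosmetic differences are that you invoke matrix Bernstein while the paper uses a sampling-without-replacement Chernoff bound from Gross, and that the paper organizes the argument as an explicit nested local-global induction rather than the single threaded induction you sketch — but these are presentational, not substantive.
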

Theorem \ref{thm:main_pop} shows that FedAvg converges exponentially fast to the ground-truth representation when executed on the clients' population losses.
We provide intuition for the proof in Section \ref{sec:sketch} and the full proof in Appendix \ref{app:proof}. First, some comments are in order.

\textbf{Mild initial conditions.} 
Theorem \ref{thm:main_pop} holds under benign initial conditions. In particular, condition ($i$) requires that the initial distance is only a constant away from 1. Condition ($ii$) ensures that the initial representation is well-conditioned with appropriate scaling, and ($iii$) guarantees the initial head is not too large. The last two conditions can be easily achieved by normalizing the inputs. 


\textbf{Generalization without convergence in terms of the global loss.} When each client accesses its population loss as in Theorem \ref{thm:main_pop}, the global objective is:
\begin{align}
     \min_{\mathbf{B}\in\mathbb{R}^{d\times k}, \mathbf{w}\in \mathbb{R}^{k}} \frac{1}{M} \sum_{i=1}^M \|\mathbf{Bw}-\mathbf{B}_\ast\mathbf{w}_{\ast,i}\|^2 \label{glob_pop}
\end{align}
However, Theorem \ref{thm:main_pop} does not imply that FedAvg solves \eqref{glob_pop}. In fact, our simulations in Section \ref{sec:experiments} show that it does not even reach a stationary point of \eqref{glob_pop}. This is consistent with prior works that have noticed the ``objective inconsistency'' phenomenon of FedAvg: it solves an unknown objective distinct from the global objective due to the fact that after multiple local updates, local gradients are no longer unbiased estimates of gradients of \eqref{glob_pop} \cite{wang2020tackling}. 
Nevertheless, our results show that FedAvg is able to learn a generalizable model  {\em even when it does not optimize the global loss in data heterogeneous settings}.

\textbf{Multiple local updates critically harness diversity, whereas Distributed GD (D-GD) does not learn the representation.} Key to the proof of Theorem \ref{thm:main_pop} is that the locally-updated heads become {\em diverse}, meaning that they cover all directions in $\mathbb{R}^k$, with greater diversity corresponding to more evenly covering in all directions. We will show in Section \ref{sec:sketch} that the locally-updated heads become roughly as diverse as the ground-truth heads, and this causes the representation to move towards the ground-truth at rate depending on the  diversity level. Theorem \ref{thm:main_pop} reflects this: the convergence rate improves with the diversity metric $\nicefrac{\mu}{L_{\max}}$. In this way FedAvg {\em exploits} data heterogeneity to learn the representation, as more diverse $\{\mathbf{w}_{\ast,i}\}_{i\in [M]}$ implies more heterogeneous data.
Importantly, head diversity only benefits the global representation update if $\tau\geq 2$. We formally prove that D-GD (equivalent to FedAvg with $\tau\!=\!1$ and $m\!=\!M$) cannot recover $\col(\mathbf{B}_\ast)$ in the following result.
\begin{proposition}[Distributed GD lower bound] \label{prop:dgd}
Suppose we are in the setting described in Section \ref{sec:linear} and $d\! >\! k \!>\! 1$. Then for any set of ground-truth heads $\{\mathbf{w}_{\ast,i}\}_{i=1}^M$, full-rank initialization $\mathbf{B}_0\in\mathbb{R}^{d\times k}$, initial distance $\delta_0 \in (0,1/2]$, step size $\alpha > 0$, and number of rounds $T$, there exists $\mathbf{B}_\ast\in \mathcal{O}^{d \times k}$ such that $\dist(\mathbf{B}_0, \mathbf{B}_\ast) = \delta_0$, $\mathbf{B}_\ast\mathbf{\bar{w}}_{\ast}\in \col(\mathbf{B}_0)$ and $\dist(\mathbf{B}_T^{\text{D-GD}}, \mathbf{B}_\ast) \geq 0.7\delta_0$, where $\mathbf{B}_T^{\text{D-GD}}\equiv \mathbf{B}_T^{\text{D-GD}}(\mathbf{B}_0,\mathbf{B}_\ast,\{\mathbf{w}_{\ast,i}\}_{i=1}^M, \alpha)$ is the result of D-GD with step size $\alpha$ and initialization $\mathbf{B}_0$ on the system with ground-truth representation $\mathbf{B}_\ast$ and ground-truth heads $\{\mathbf{w}_{\ast,i}\}_{i=1}^M$.
\end{proposition}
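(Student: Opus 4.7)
The starting point is to observe that D-GD ($\tau=1$, $m=M$) applied to population losses reduces to vanilla gradient descent on the averaged objective
\begin{equation*}
F(\mathbf{B},\mathbf{w})=\tfrac{1}{2M}\sum_{i=1}^{M}\|\mathbf{B}\mathbf{w}-\mathbf{B}_\ast\mathbf{w}_{\ast,i}\|^2=\tfrac{1}{2}\|\mathbf{B}\mathbf{w}-\mathbf{B}_\ast\bar{\mathbf{w}}_\ast\|^2+\mathrm{const},
\end{equation*}
with $\bar{\mathbf{w}}_\ast=\tfrac{1}{M}\sum_{i}\mathbf{w}_{\ast,i}$, so the D-GD updates become
\begin{equation*}
\mathbf{B}_{t+1}=\mathbf{B}_t-\alpha(\mathbf{B}_t\mathbf{w}_t-\mathbf{B}_\ast\bar{\mathbf{w}}_\ast)\mathbf{w}_t^{\top},\qquad \mathbf{w}_{t+1}=\mathbf{w}_t-\alpha\mathbf{B}_t^{\top}(\mathbf{B}_t\mathbf{w}_t-\mathbf{B}_\ast\bar{\mathbf{w}}_\ast).
\end{equation*}
The crucial point is that the ground-truth representation enters only through the single vector $\mathbf{B}_\ast\bar{\mathbf{w}}_\ast$, so the iteration has no mechanism to recover any direction of $\col(\mathbf{B}_\ast)$ orthogonal to it.

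I would exploit this by adversarially constructing $\mathbf{B}_\ast$ so that $\mathbf{B}_\ast\bar{\mathbf{w}}_\ast\in\col(\mathbf{B}_0)$. If $\bar{\mathbf{w}}_\ast=\mathbf{0}$ this is automatic and any $\mathbf{B}_\ast$ at distance $\delta_0$ suffices. Otherwise let $\mathbf{w}:=\bar{\mathbf{w}}_\ast/\|\bar{\mathbf{w}}_\ast\|$ and pick an orthogonal matrix $\mathbf{R}\in\mathbb{R}^{k\times k}$ with $\mathbf{R}\mathbf{e}_1=\mathbf{w}$. Using $k\geq 2$ choose orthonormal $\mathbf{u},\mathbf{v}\in\col(\mathbf{B}_0)$, and using $d>k$ choose a unit $\mathbf{y}\in\col(\mathbf{B}_0)^{\perp}$. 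Let $\tilde{\mathbf{B}}\in\mathcal{O}^{d\times k}$ be the matrix with columns $\mathbf{u}$, $\cos\theta\,\mathbf{v}+\sin\theta\,\mathbf{y}$ (where $\theta:=\arcsin\delta_0$), together with any orthonormal completion lying in $\col(\mathbf{B}_0)\cap\{\mathbf{u},\mathbf{v}\}^{\perp}$, and set $\mathbf{B}_\ast:=\tilde{\mathbf{B}}\mathbf{R}^{\top}$. A direct check shows $\mathbf{B}_\ast\in\mathcal{O}^{d\times k}$, $\col(\mathbf{B}_\ast)=\col(\tilde{\mathbf{B}})$ so that $\dist(\mathbf{B}_0,\mathbf{B}_\ast)=\sin\theta=\delta_0$, and $\mathbf{B}_\ast\bar{\mathbf{w}}_\ast=\|\bar{\mathbf{w}}_\ast\|\tilde{\mathbf{B}}\mathbf{e}_1=\|\bar{\mathbf{w}}_\ast\|\mathbf{u}\in\col(\mathbf{B}_0)$. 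A one-line induction then yields $\col(\mathbf{B}_t)\subseteq\col(\mathbf{B}_0)$ for all $t\geq 0$: the residual $\mathbf{e}_t:=\mathbf{B}_t\mathbf{w}_t-\mathbf{B}_\ast\bar{\mathbf{w}}_\ast$ lies in $\col(\mathbf{B}_0)$ by the inductive hypothesis and the construction, so the rank-one correction $\mathbf{B}_{t+1}-\mathbf{B}_t=-\alpha\mathbf{e}_t\mathbf{w}_t^{\top}$ adds only columns already in $\col(\mathbf{B}_0)$.

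To finish I would invoke the projection-based formulation $\dist(\mathbf{B}_1,\mathbf{B}_2)=\sup_{\mathbf{v}\in\col(\mathbf{B}_2),\|\mathbf{v}\|=1}\|(\mathbf{I}-\mathbf{P}_{\mathbf{B}_1})\mathbf{v}\|$, which coincides with the paper's spectral-norm definition whenever $\mathbf{B}_1$ has rank $k$. Nesting of subspaces gives $\|\mathbf{P}_{\mathbf{B}_T^{\text{D-GD}}}\mathbf{v}\|\leq\|\mathbf{P}_{\mathbf{B}_0}\mathbf{v}\|$ pointwise in $\mathbf{v}$, hence
\begin{equation*}
\dist(\mathbf{B}_T^{\text{D-GD}},\mathbf{B}_\ast)\;\geq\;\dist(\mathbf{B}_0,\mathbf{B}_\ast)\;=\;\delta_0\;\geq\;0.7\,\delta_0.
\end{equation*}
The main obstacle is not the dynamics, which collapse to a single-target least-squares problem, but the geometric realizability of $\mathbf{B}_\ast$: meeting both $\dist(\mathbf{B}_0,\mathbf{B}_\ast)=\delta_0$ and $\mathbf{B}_\ast\bar{\mathbf{w}}_\ast\in\col(\mathbf{B}_0)$ simultaneously is exactly what forces the hypothesis $d>k>1$, since one needs a second in-plane direction inside $\col(\mathbf{B}_0)$ to absorb the principal-angle tilt and an out-of-plane direction in $\col(\mathbf{B}_0)^{\perp}$ to tilt into. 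A minor loose end is that $\mathbf{B}_T^{\text{D-GD}}$ could become rank-deficient for pathological step sizes; this is handled cleanly by the projection formulation above (which stays monotone under inclusion of subspaces regardless of rank), and the factor $0.7$ in the statement provides comfortable slack for any remaining ambiguity.
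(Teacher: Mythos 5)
Your proof is correct and takes a genuinely different, and in fact more direct, route than the paper's. The paper argues by a dichotomy: either D-GD's output already satisfies $\dist(\mathbf{B}_T^{\text{D-GD}},\mathbf{B}_\ast)\geq 0.7\delta_0$ for every admissible $\mathbf{B}_\ast$, or, given some $\mathbf{B}_\ast$ for which the distance is small, it constructs a ``reflected'' $\mathbf{B}_{\ast'}$ with $\mathbf{B}_{\ast'}\mathbf{\bar{w}}_\ast=\mathbf{B}_\ast\mathbf{\bar{w}}_\ast$ (hence an identical D-GD trajectory) and $\dist(\mathbf{B}_\ast,\mathbf{B}_{\ast'})\geq 2\delta_0\sqrt{1-\delta_0^2}$, finishing by the triangle inequality. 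Your argument bypasses the dichotomy entirely by noting that whenever $\mathbf{B}_\ast\mathbf{\bar{w}}_\ast\in\col(\mathbf{B}_0)$, the rank-one D-GD update $\mathbf{B}_{t+1}-\mathbf{B}_t=-\alpha(\mathbf{B}_t\mathbf{w}_t-\mathbf{B}_\ast\mathbf{\bar{w}}_\ast)\mathbf{w}_t^\top$ has all columns in $\col(\mathbf{B}_0)$, so by induction $\col(\mathbf{B}_t)\subseteq\col(\mathbf{B}_0)$ for all $t$, and the principal-angle distance is monotone nondecreasing under shrinkage of the ambient subspace. This yields the sharper bound $\dist(\mathbf{B}_T^{\text{D-GD}},\mathbf{B}_\ast)\geq\delta_0$ (which in particular shows the paper's ``good'' case of the dichotomy is vacuous), it is fully constructive rather than case-based, and your explicit construction of $\mathbf{B}_\ast$ realizing both $\dist(\mathbf{B}_0,\mathbf{B}_\ast)=\delta_0$ and $\mathbf{B}_\ast\mathbf{\bar{w}}_\ast\in\col(\mathbf{B}_0)$ pinpoints exactly where $d>k>1$ is needed. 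Your projection-based reformulation of $\dist$ is also the right device to handle a possibly rank-deficient $\mathbf{B}_T^{\text{D-GD}}$, a corner case the paper implicitly avoids by routing everything through the triangle inequality on $\mathcal{O}^{d\times k}$.
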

Proposition \ref{prop:dgd} shows that for any choice of $\delta_0 \in (0,1/2]$, non-degenerate initialization $\mathbf{B}_0$, and ground-truth heads, there exists a $\mathbf{B}_\ast$ whose column space is $\delta_0$-close to $\col(\mathbf{B}_0)$, yet is at least $0.7\delta_0$-far from the representation learned by D-GD in the setting with $\mathbf{B}_\ast$ as ground-truth.
Therefore, even allowing for a strong initialization, D-GD cannot guarantee to recover the ground-truth representation. This negative result 
combined with our previous results suggest even if we had an infinite communication budget, it would still be advantageous to execute multiple local updates between communication rounds in order to achieve better generalization through representation learning.

\section{Intuitions and Proof Sketch} \label{sec:sketch}
Next we highlight the key ideas behind the importance of local updates and why FedAvg learns $\col(\mathbf{B}_\ast)$, while D-GD fails to achieve this goal.

\textbf{Global update $\mathbf{B}_{t+1}$.} To build intuition for why FedAvg can learn $\col{(\mathbf{B}_\ast)}$, we examine the global update of the representation in the full participation case ($m=M$): 
 \begin{align}
    \mathbf{B}_{t+1} &= \mathbf{{B}}_{t}\bigg(\underbrace{ \tfrac{1}{M} \sum_{i=1}^M \prod_{s=0}^{\tau-1}(\mathbf{I}_k - \alpha \mathbf{w}_{t,i,s}\mathbf{w}_{t,i,s}^\top}_{\text{prior weight}})\bigg)  + \mathbf{{B}}_\ast  \bigg(\underbrace{\tfrac{\alpha}{M}\sum_{i=1}^M \mathbf{w}_{\ast,i}  \sum_{s=0}^{\tau-1}\mathbf{w}_{t,i,s}^\top \prod_{r=s+1}^{\tau-1}(\mathbf{I}_k - \alpha \mathbf{w}_{t,i,r}\mathbf{w}_{t,i,r}^\top}_{\text{signal weight}}) \bigg)\nonumber 
\end{align}
Notice that $\mathbf{B}_{t+1}$ is a mixture of $\mathbf{B}_t$ and $\mathbf{B}_\ast$ with weight matrices in $\mathbb{R}^{k\times k}$. We aim to show that 
\vspace{-1mm}
\begin{enumerate}[(I), leftmargin=25pt, labelindent=1.5pt]
    \item the `prior weight'  on $\mathbf{B}_t$  has spectral norm strictly less than 1, and
    \item the `signal weight' on $\mathbf{B}_\ast$ adds energy from $\col(\mathbf{B}_\ast)$ to $\mathbf{B}_{t+1}$ so that $\sigma_{\min}(\mathbf{B}_{t+1}) \approx \sigma_{\min}(\mathbf{B}_t)$.
\end{enumerate}
\vspace{-1mm}
These properties ensure that the contribution from $\col(\mathbf{B}_t)$ in $\col(\mathbf{B}_{t+1})$ contracts, while  energy from $\col(\mathbf{B}_\ast)$ replaces the lost energy from $\col(\mathbf{B}_t)$. Consequently, $\col(\mathbf{B}_{t+1})$ moves closer to $\col(\mathbf{B}_\ast)$ in all $k$ directions.



\textbf{The role of head diversity and multiple local updates.} To show {(I)} and {(II)}, it is imperative to use the diversity of the locally-updated heads when $\tau\geq 2$. First consider {(I)}. Notice that for each $i$, $\prod_{s=0}^{\tau-1}({\mathbf{I}_k \!-\! \alpha \mathbf{w}_{t,i,s}\mathbf{w}_{t,i,s}^\top})$ has singular values at most 1, and strictly less than 1 corresponding to directions spanned by $\{ \mathbf{w}_{t,i,s}\}_{s\in[\tau-1]}$. Thus, the maximum singular value of the average of these matrices should be strictly less than 1 as long as $\{ \mathbf{w}_{t,i,s}\}_{s\in[\tau-1],i \in [M]}$ spans $\mathbb{R}^k$, i.e. the locally-updated heads are diverse. Similarly,  the signal weight is rank-$k$ if the locally-updated heads span $\mathbb{R}^k$, which leads to (II) as  discussed below. 
In contrast, if $\tau=1$, then  the global update of the representation does {\em not} leverage head diversity, as it is only a function of the global head and the average ground-truth head: $\mathbf{B}_{t+1} = \mathbf{B}_t({\mathbf{I}_k - \alpha  \mathbf{w}_t \mathbf{w}_t^\top}) +  \alpha\mathbf{B}_{\ast}{\mathbf{\bar{w}}_{\ast} \mathbf{w}_t^\top}$ in this case. As a result, $\col(\mathbf{B}_{t+1})$ can only improve in one direction,  so D-GD ultimately fails to learn $\col(\mathbf{B}_\ast)$ (Prop. \ref{prop:dgd}).

\textbf{Achieving head diversity: the necessity of controlling $\mathbf{I}_k\!- \alpha\mathbf{B}_{t}^\top \mathbf{B}_t$.} We have discussed the intuition for why head diversity implies (I) and (II) for FedAvg. Next, we investigate why the heads become diverse. 
Let us examine client $i$'s first local update for the head at round $t$:
\begin{align}
    \mathbf{w}_{t,i,1} = (\mathbf{I}_k-\alpha \mathbf{B}_{t}^\top \mathbf{B}_{t}) \mathbf{w}_{t} + \alpha \mathbf{B}_{t}^\top \mathbf{B}_{\ast}\mathbf{w}_{\ast,i}  \nonumber
\end{align}
{From this equation we see that if $\del_t \coloneqq \mathbf{I}_k - \alpha \mathbf{B}_t^\top \mathbf{B}_t \approx \mathbf{0}$ and $\|\mathbf{w}_t\|$ is bounded, then $\mathbf{w}_{t,i,1} \approx {\alpha} \mathbf{B}_t^\top \mathbf{B}_\ast \mathbf{w}_{\ast,i}$. If this approximation holds, then  $\{\mathbf{w}_{t,i,1}\}_{i \in [M]}$ inherits the diversity of $\{\mathbf{w}_{\ast,i}\}_{i \in [M]}$, which is indeed diverse due to Assumption \ref{assump:td}, meaning that the local heads are diverse after just one local update. 
Moreover, it can be shown that if $\del_t \approx \mathbf{0}$  and the heads become diverse after one local update, then they remain diverse for all local updates due to the observation that each $\mathbf{B}_{t,i,s}$ changes slowly over $s$.
Note that in addition to implying local head diversity, $\del_t \approx \mathbf{0}$ for all $t$ implies $\sigma_{\min}(\mathbf{B}_t) \approx \sigma_{\min}(\mathbf{B}_{t+1}) \approx \tfrac{1}{\sqrt{\alpha}}$, which  directly ensures {(II)}. Thus we aim to show $\del_t \approx \mathbf 0$ for all communication rounds, i.e. $\mathbf{B}_t$ remains close to a scaled orthonormal matrix.  

However, it is surprising why $\|\del_t\|$ remains small:
$\mathbf{B}_{t+1}$ is the average of nonlinearly locally-updated representations, and the local updates could `overfit' by adding more energy to some columns than others, and/or lead to cancellation when summed, so it is not intuitive why $\sqrt{\alpha}\mathbf{B}_t$ remains almost orthonormal. Nor does the expression above for $\mathbf{B}_{t+1}$ provide any clarity on this.} Nevertheless, through a careful induction we show that $\del_t$ indeed stays close to zero since the local heads converge quickly and the projection of the local representation gradient onto $\col(\mathbf{B}_t)$ is exponentially decaying.

\textbf{Inductive argument.} While the above intuitions seem to simplify the behavior of FedAvg, showing that they all hold simultaneously is not at all obvious. To study this, we are inspired by recent work \cite{collins2022maml} that developed an inductive argument for representation learning in the context of gradient based-meta-learning.
To formalize our intuition discussed previously, in our proof we need to show that (i) the learned representation does not overfit to each client's loss despite {\em many local updates} and simultaneously the heads quickly become diverse, and (ii) the update at the global server preserves the learned representation despite {\em averaging} many nonlinearly perturbed representations gathered from clients after local updates. To address these challenges, we construct a pair of intertwined induction hypotheses over time, one for tracking the effect of local updates, and another for tracking the global averaging. Each induction hypothesis (local and global) itself consists of several hypotheses (in effect, a nested induction) that evolve both within and across communication rounds.

\textit{Local induction.} The proof leverages the following local inductive hypotheses for every $t,i$:
\begin{enumerate}
    \item $A_{1,t,i}(s) \coloneqq \{\|\mathbf{w}_{t,i,s'}- \alpha \mathbf{B}_{t,i,s'-1}^\top \mathbf{B}_{\ast}\mathbf{w}_{\ast,i}\|_2 = c_1\alpha^{2.5} \tau  L_{\max}^3  \kappa_{\max}^2 E_0^{-1} \quad \forall s'\in \{1,\dots,s\}\}$
      \item $A_{2,t,i}(s) \coloneqq \{\|\mathbf{w}_{t,i,s'}\|_2 \leq c_2 \sqrt{\alpha} L_{\max} \quad \forall s'\in \{1,\dots,s\}  \}$ 
     \item $A_{3,t,i}(s) \coloneqq \{\|\mathbf{I}_k - \alpha \mathbf{B}_{t,i,s'}^\top \mathbf{B}_{t,i,s'}\|_2 = c_3\alpha^2 L_{\max}^2 \kappa_{\max}^2 E_0^{-1} \quad \forall s'\in \{1,\dots,s\}  \}$ 
        \item $A_{4,t,i}(s)\coloneqq \{ \dist(\mathbf{B}_{t,i,s'}, \mathbf{B}_\ast) \leq c_4 \dist(\mathbf{B}_{t}, \mathbf{B}_\ast)\quad \forall s'\in \{1,\dots,s\}\}$
    \end{enumerate}
    
    
The local induction tracks the effect of updates at each client node: At the end of $\tau$ local updates, $A_{1,t,i}(\tau)$ captures the diversity of the local heads, $A_{2,t,i}(\tau)$ ensures that the heads remain uniformly bounded, 
$A_{3,t,i}(\tau)$ shows that the locally adapted representations stay close to a scaled orthonormal matrix, and $A_{4,t,i}(\tau)$ shows that the locally adapted representations do not diverge too quickly from  the ground-truth.
The second set of inductions below controls the global behavior.

\textit{Global induction.} The global induction utilizes a similar set of inductive hypotheses.

\begin{enumerate}
\item $A_1(t) \coloneqq \{ \|\mathbf{w}_{t'} - \alpha(\mathbf{I}_k + \del_{t'}) \mathbf{B}_{t'}^\top \mathbf{B}_\ast\mathbf{\bar{w}}_{\ast,t'}\|_2 = c_1'\alpha^{2.5}\tau L_{\max}^3  \quad \forall t'\in \{1,\dots, t\}\}$ 
\item $A_2(t) \coloneqq \{ \|\mathbf{w}_{t'}\|_2 \leq  c_2'\sqrt{\alpha} L_{\max}  \quad \forall t'\in \{1,\dots, t\}\}$
    \item $A_3(t) \coloneqq \{ \|\del_{t'}\|_2 = c_3'\alpha^2 \tau  L_{\max}^2  \kappa_{\max}^2 E_0^{-1} \quad \forall t'\in \{1,\dots, t\}\}$
    \item $A_4(t) \coloneqq \{ \|\mathbf{B}_{\ast,\perp}^\top \mathbf{B}_{t'}\|_2 \leq  (1 - c_4' \alpha^2 \tau \mu^2 E_0)\|\mathbf{B}_{\ast,\perp}^\top \mathbf{B}_{t'-1} \|_2 \quad \forall t'\in \{1,\dots, t\} \}$
    \item $A_5(t) \coloneqq \{ \dist(\mathbf{B}_{t},\mathbf{B}_\ast) \leq  (1 - c_5'\alpha^2 \tau \mu^2 E_0)^{t-1} \quad \forall t'\in \{1,\dots, t\}  \}$
\end{enumerate}
Hypotheses $A_1(t)$, $A_2(t)$ and $A_3(t)$ are analogous to $A_{1,t,i}(s)$, $A_{2,t,i}(s)$ and $A_{3,t,i}(s)$, respectively. $A_4(t)$ shows that the energy of $\col(\mathbf{B}_t)$ that is orthogonal to the ground-truth subspace is contracting, and $A_5(t)$ finally shows that the principal angle distance between the learned and ground-truth representations is exponentially decreasing. 
Our main claim follows from $A_5(T)$. 
However, proving this result requires showing that all the above local and global hypotheses hold for all times $t\geq1$, as these hypotheses are heavily coupled.
As mentioned previously, the most difficult challenge is controlling $\|\del_t\|$ ($A_3(t)$) despite many local updates, and doing so requires leveraging both local and global properties.
The details of this local-global induction argument are in Appendix \ref{app:proof}.

\section{Experiments} \label{sec:experiments}


In this section we conduct experiments to (I) verify our theoretical results in the linear setting and (II) determine whether our established insights generalize to deep neural networks. Notably, demonstrating the competitive performance of FedAvg plus fine-tuning for personalized FL is {\em not} a goal of this section, as this is evident from prior experiments \cite{yu2020salvaging,li2020ditto,collins2021exploiting,cheng2021fine}. Rather, to achieve (II) we test whether FedAvg learns effective representations when trained with neural networks in heterogeneous data settings via three popular benchmarks for evaluating the quality of learned representations. Since our main claim is that local updates are key to representation learning, we use distributed SGD as our baseline in all experiments.  

\begin{figure}
     \centering
     \begin{subfigure}[b]{0.43\textwidth}
         \centering
         \includegraphics[width=\textwidth]{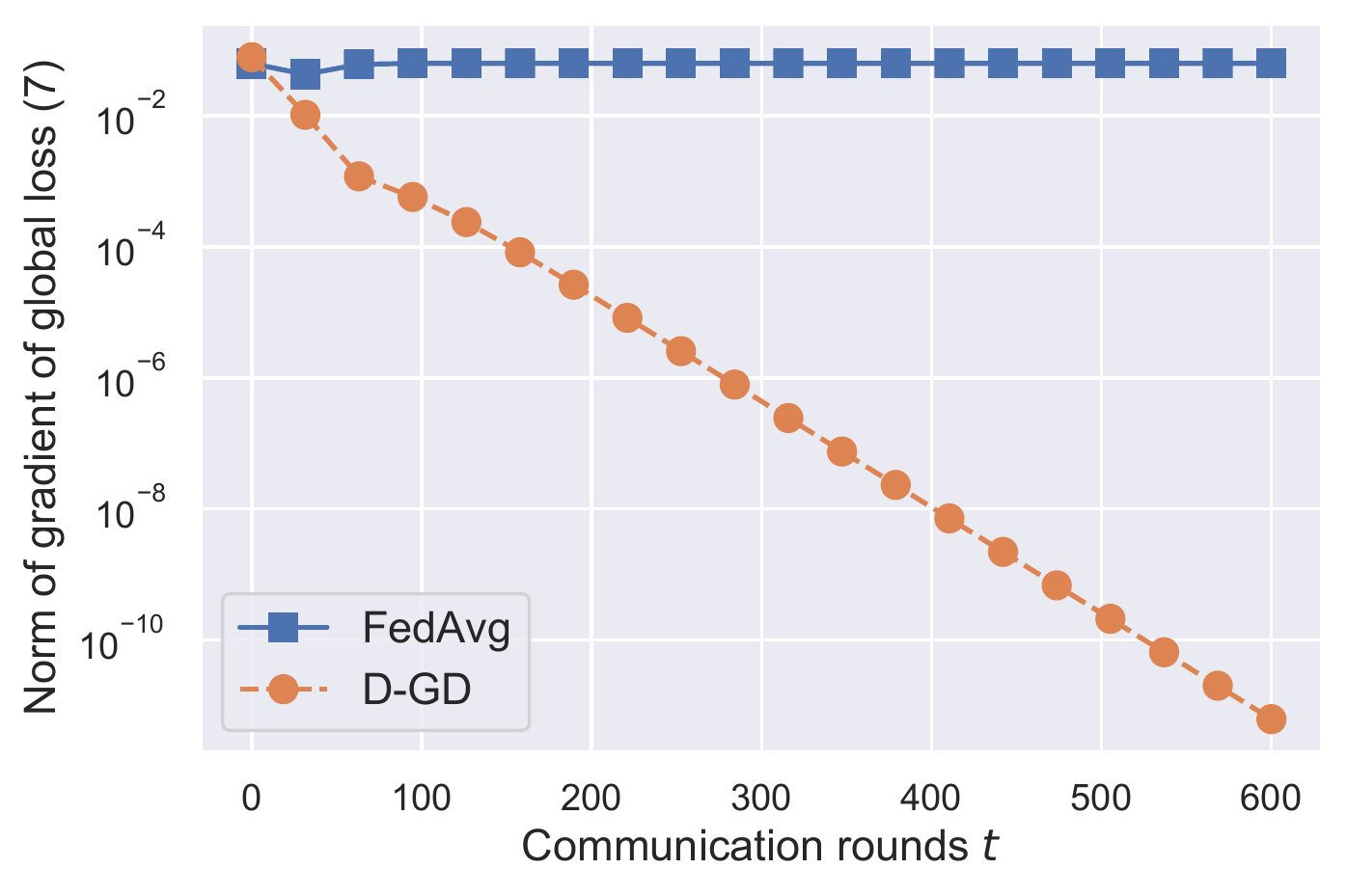}
     \end{subfigure}
     \qquad
     \begin{subfigure}[b]{0.42\textwidth}
         \centering
         \includegraphics[width=\textwidth]{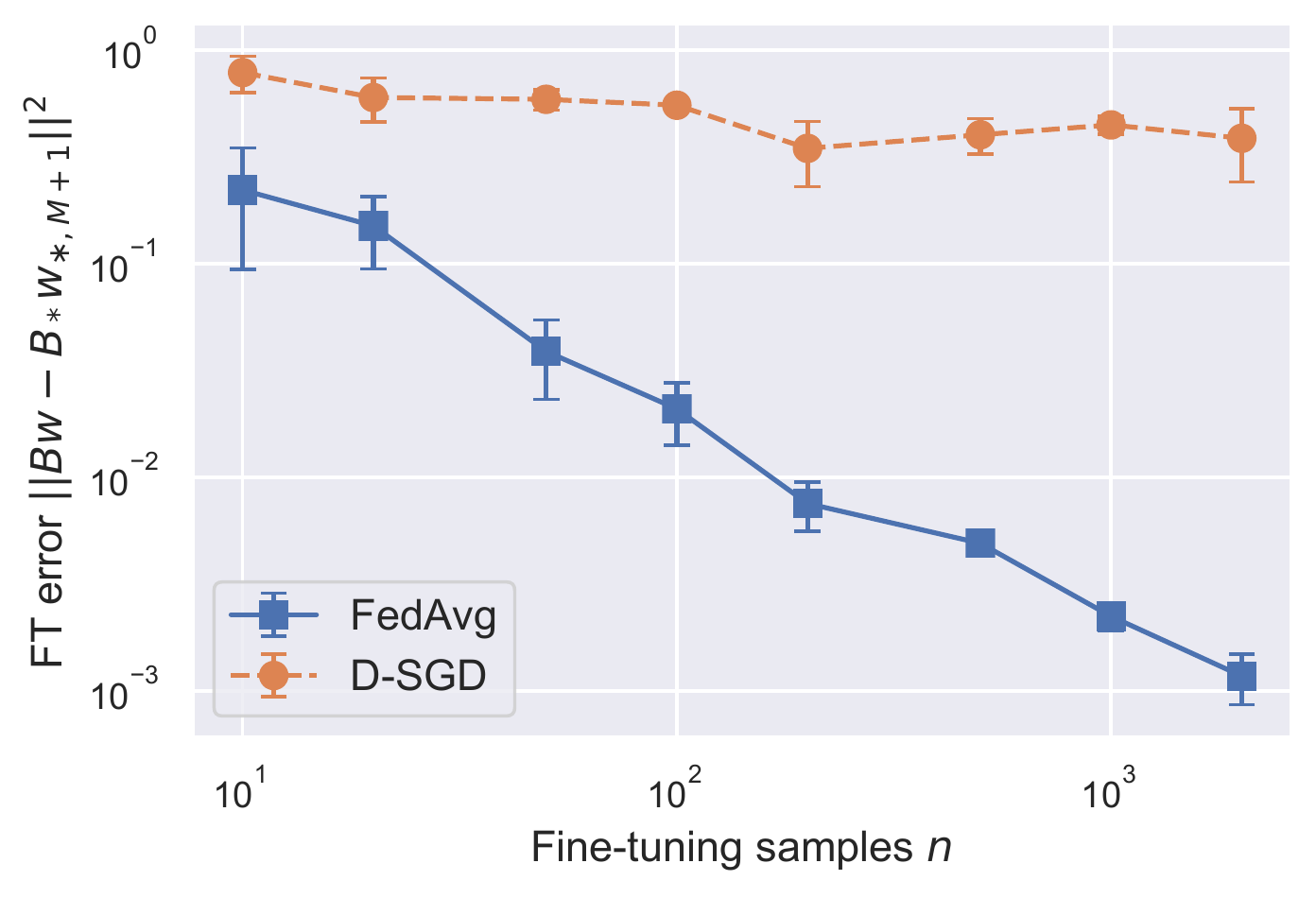}
     \end{subfigure}
        \caption{(Left) D-GD converges to a stationary point of the global objective \eqref{glob_pop}, unlike FedAvg, yet (Right) FedAvg achieves smaller error after fine-tuning with various numbers of samples.  
        } 
        \label{fig:linear}
\end{figure}

\subsection{Multi-task linear regression} We first experiment with the regression setting from our theory. We randomly generate $\mathbf{B}_\ast \in \mathbb{R}^{d \times k}$ and $\{\mathbf{w}_{\ast,i}\}_{i\in [M]}$ by sampling each element i.i.d. from the normal distribution, where $d\!=\!100, k\!=\!5$ and $M\!=\!40$, and then orthogonalizing $\mathbf{B}_\ast$. Then we run FedAvg with $\tau\!=\!2$ local updates and D-GD, both sampling $m\!=\!M$ clients per round. We have seen in Figure \ref{fig:1} that the principal angle distance between the representation learned by FedAvg and the ground-truth representation linearly converges to zero, whereas D-GD does not learn the ground-truth representation. Conversely, Figure \ref{fig:linear} (left) tracks the gradient of the  global loss \eqref{glob_pop} and shows that D-GD linearly converges to stationary point of \eqref{glob_pop}, while FedAvg does not converge to one at all. Although D-GD optimizes the global loss, it does not generalize as well as FedAvg to new clients as demonstrated by Figure \ref{fig:linear} (right). Here, we fine-tune the models learned by FedAvg and D-GD on a new client with $n$ samples generated by $\mathbf{x}_{M+1}\!\sim\! \mathcal{N}(\mathbf{0},\mathbf{I}_d)$, $\zeta_{M+1}\!\sim\! \mathcal{N}(0,0.01)$, and $y_{M+1}\! =\! \langle \mathbf{B}_{\ast}\mathbf{w}_{\ast,M+1} ,\mathbf{x}_{M+1}\rangle + \zeta_{M+1}$. We fine-tune using GD for $\tau'\!=\!200$ iterations with batch size $b\!=\!n$, and plot the final error $\|\mathbf{B}_{T,M+1,\tau'}\mathbf{w}_{T,M+1,\tau'}\! -\! \mathbf{B}_\ast\mathbf{w}_{\ast,M+1}\|^2$.
Both plots are generated by averaging 10 runs.

\subsection{Image classification with neural networks} Next we evaluate  FedAvg's representation learning ability on nonlinear neural networks. For fair comparison, in every experiment all methods make the same total amount of local updates during the course of training (e.g. D-SGD is trained for $50\times$ more rounds than FedAvg with $\tau=50$). 

\textbf{Datasets and models.} We use the image classification datasets CIFAR-10 and CIFAR-100 \cite{krizhevsky2009learning}, which consist of 10 and 100 classes of RGB images, respectively.
We use a convolutional neural network (CNN) with three convolutional blocks followed by a three-layer multi-layer perceptron, with each convolutional block consisting of two convolutional layers and a max pooling layer. 

\begin{figure}[t] 
\begin{center}
\vspace{-2mm}
\centerline{\includegraphics[width=\columnwidth]{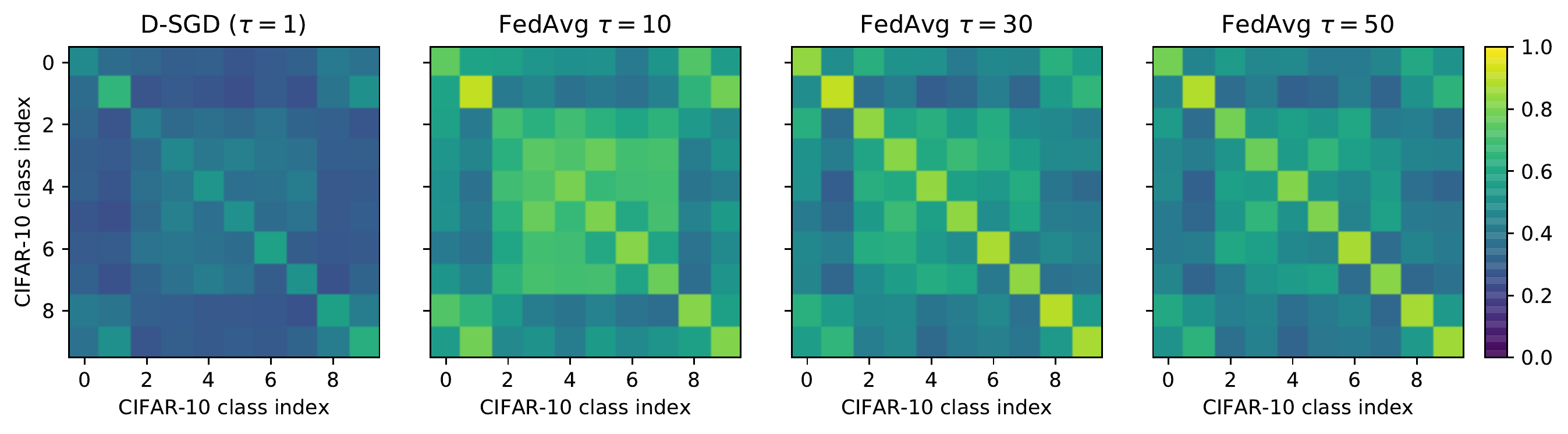}}
\vspace{-0mm}
\caption{Average cosine similarity for features learned by D-SGD and FedAvg with varying numbers of local updates on a heterogeneous partition of CIFAR-10. }
\label{fig:sim}
\end{center}
\vspace{-6mm}
\end{figure}

\textbf{Cosine similarity of features.}  A desirable property of representations for downstream classification tasks is that features of examples from the same class are similar to each other, while features of examples from different classes are dissimilar \cite{hadsell2006dimensionality}. In Figure \ref{fig:sim} we examine whether the representations learned by FedAvg satisfy this property. Here we have trained FedAvg with varying $\tau$ and D-SGD  (FedAvg with $\tau\! =\! 1$) on CIFAR-10. Image classes are heterogeneously allocated to  $M=100$ clients according to the Dirichlet distribution with parameter 0.6 as in \cite{acar2021federated}. Each subplot is a 10x10 matrix whose $(i,j)$-th element gives the average cosine similarity between features of images from the $i$-th and $j$-th classes learned by the corresponding model. Ideally, diagonal elements are close to 1 (high similarity) and off-diagonal elements are close to 0 (low similarity). 
Figure \ref{fig:sim} shows that FedAvg indeed learns features with high intra-class similarity and low inter-class similarity, with representation quality improving with more local updates between communications.
Meanwhile, D-SGD does not learn such features. The leftmost subplot shows that all of the features learned by D-SGD are dissimilar, regardless of whether two images belong to the same class.

\begin{figure}
     \centering
     \begin{subfigure}[b]{0.43\textwidth}
         \centering
         \includegraphics[width=\textwidth]{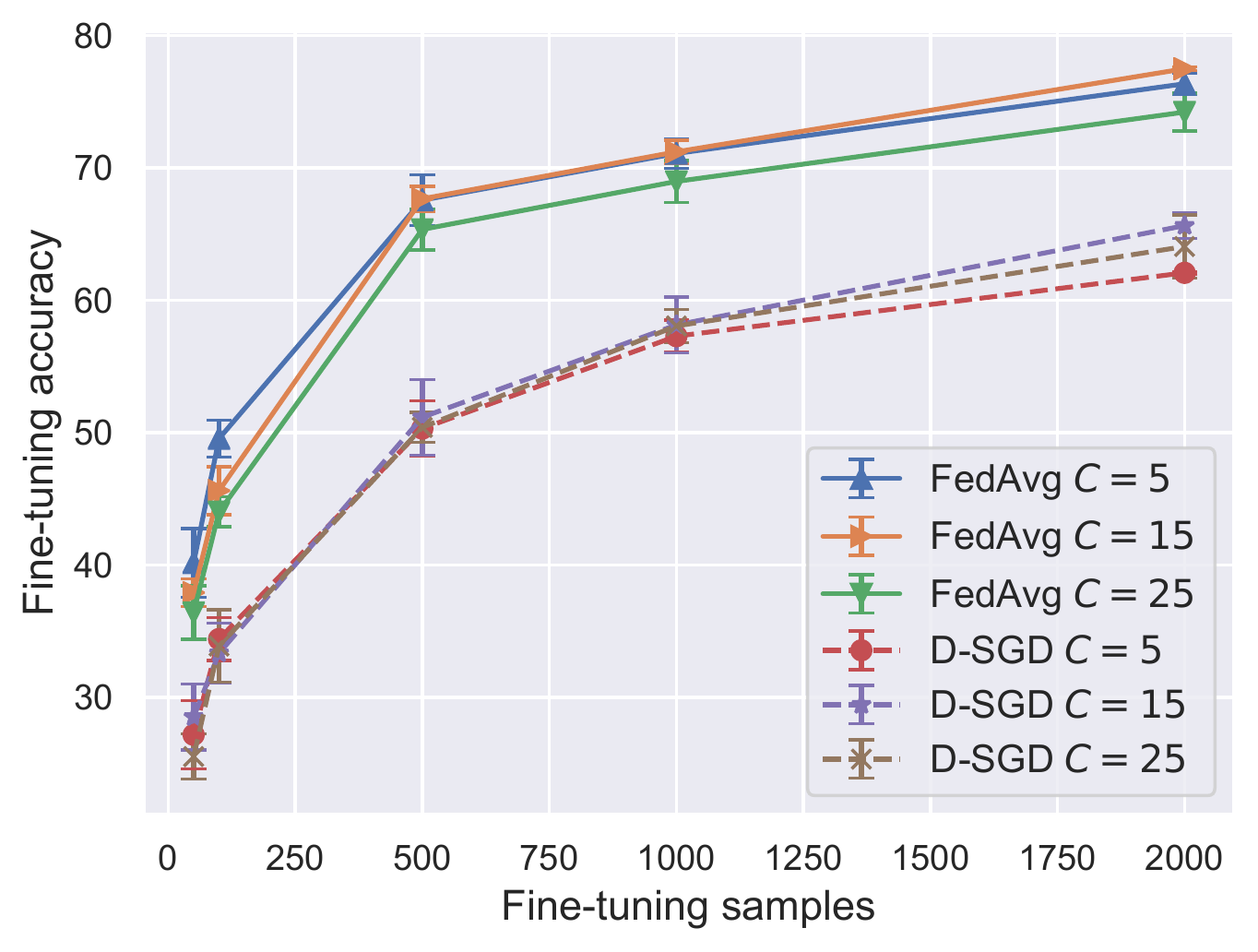}
     \end{subfigure}
     \qquad
     \begin{subfigure}[b]{0.43\textwidth}
         \centering
         \includegraphics[width=\textwidth]{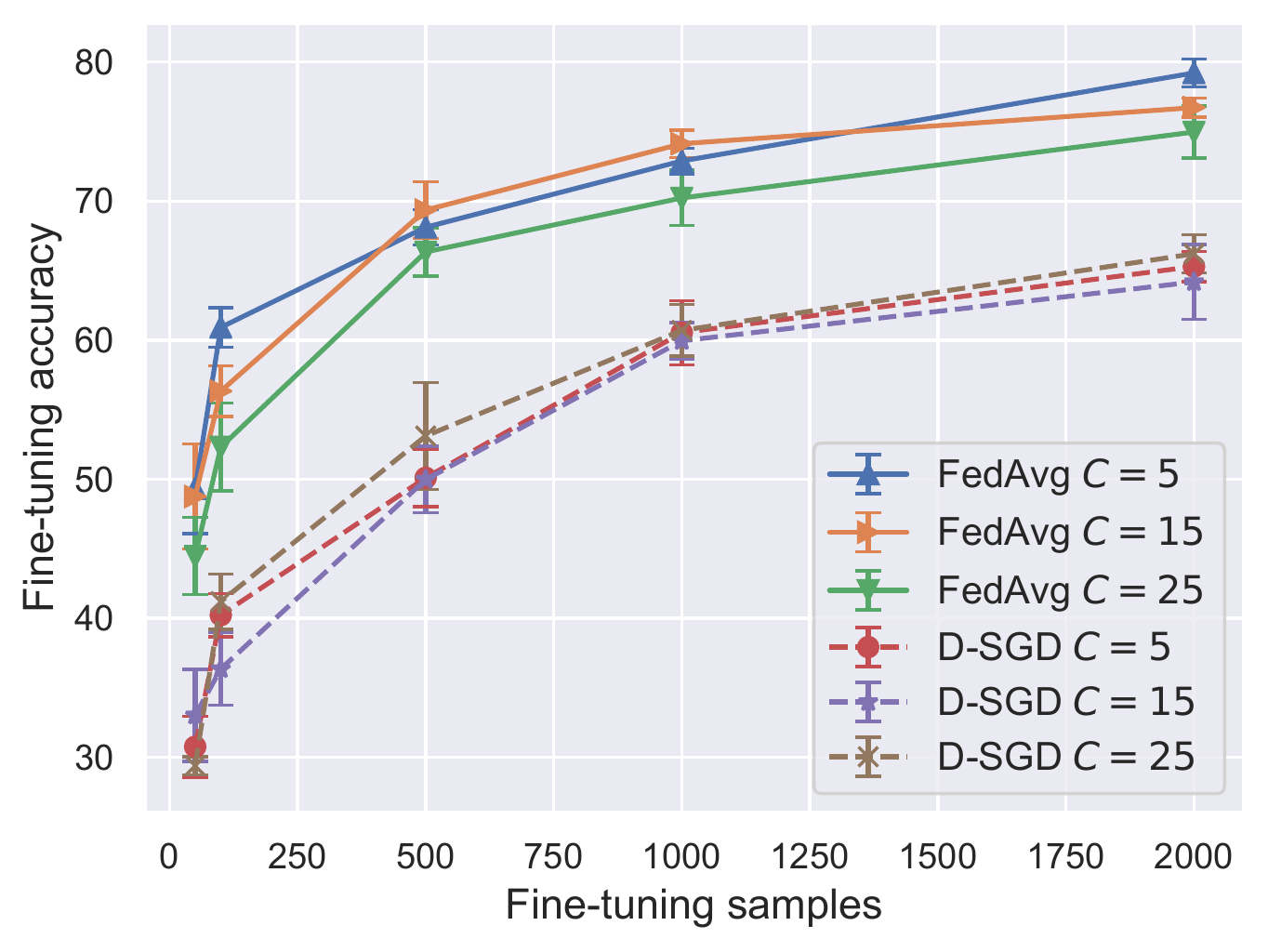}
     \end{subfigure}
        \caption{Average fine-tuning accuracies on new clients for models trained by FedAvg and D-SGD. (Left) Models trained on 80 classes from CIFAR-100 (with $C$ classes/client) and fine-tuned on new clients from 20 new classes from CIFAR-100. (Right) Models trained on CIFAR-100 with $C$ classes/client and fine-tuned on new clients from  CIFAR-10 (10 classes/client). For FedAvg, $\tau=50$ in all cases, and error bars give standard deviations over five trials with five new clients tested/trial.  
        } 
        \label{fig:CIFAR-FT}
\end{figure}

\textbf{Fine-tuning performance.}  We evaluate the generalization ability of the representations learned by FedAvg to new classes and also new datasets. An effective representation identifies universally important features, so it should generalize to new data, with perhaps a small amount of fine-tuning needed to learn a new mapping from feature space to label space. The transfer learning performance of fine-tuned models is a popular metric for evaluating the quality of learned representations \cite{whitney2020evaluating,chen2020simple}.

We first study how models trained by FedAvg and D-SGD generalize to unseen classes from the same dataset. To do so, we train models on heterogeneous partitions of CIFAR-100 using both FedAvg with $\tau = 50$ as well as D-SGD. In the left plot of Figure~\ref{fig:CIFAR-FT}, we illustrate the case that models are trained on 80 clients each with 500 total images from $C$ classes sub-selected from 80 classes of CIFAR-100, and tested on new clients with images from the remaining 20 classes of CIFAR-100. We fine-tune the trained models on the new clients with 10 epochs of SGD, with varying numbers of samples per epoch as listed on the x-axis, before testing. 
 
Next, we investigate how well models trained by FedAvg and D-SGD generalize to an unseen dataset. In the right plot of Figure~\ref{fig:CIFAR-FT}, we train models with $C$ classes/client from CIFAR-100, then test on new clients with samples drawn from CIFAR-10 (a different dataset, but with presumably  similar ``basic'' features). Specifically, for these new clients, we fine-tune for 10 epochs as previously, then test the post-fine-tuned models on the test data for each client. 
In both left and right plots, we observe that FedAvg significantly outperforms D-SGD, indicating that FedAvg has learned a representation that generalizes better to new classes. Additional details are provided in Appendix \ref{app:experiments}.

\section*{Acknowledgements}

This research of Collins, Mokhtari, and Shakkottai is supported in part by NSF Grants 2019844 and 2112471, ARO Grant W911NF2110226, ONR Grant N00014-19-1-2566, the Machine Learning Lab (MLL) at UT Austin, and the Wireless Networking and Communications Group (WNCG) Industrial Affiliates Program. The research of Hassani is supported by NSF Grants 1837253, 1943064, AFOSR Grant FA9550-20-1-0111, DCIST-CRA, and the AI Institute for Learning-Enabled Optimization at Scale (TILOS).

\newpage

\bibliography{refs}

\begin{thebibliography}{10}

\bibitem{acar2021federated}
Durmus Alp~Emre Acar, Yue Zhao, Ramon Matas, Matthew Mattina, Paul Whatmough,
  and Venkatesh Saligrama.
\newblock Federated learning based on dynamic regularization.
\newblock In {\em International Conference on Learning Representations}, 2021.

\bibitem{arivazhagan2019federated}
Manoj~Ghuhan Arivazhagan, Vinay Aggarwal, Aaditya~Kumar Singh, and Sunav
  Choudhary.
\newblock Federated learning with personalization layers.
\newblock {\em arXiv preprint arXiv:1912.00818}, 2019.

\bibitem{charles2020outsized}
Zachary Charles and Jakub Kone{\v{c}}n{\`y}.
\newblock On the outsized importance of learning rates in local update methods.
\newblock {\em arXiv preprint arXiv:2007.00878}, 2020.

\bibitem{charles2021convergence}
Zachary Charles and Jakub Kone{\v{c}}n{\`y}.
\newblock Convergence and accuracy trade-offs in federated learning and
  meta-learning.
\newblock In {\em International Conference on Artificial Intelligence and
  Statistics}, pages 2575--2583. PMLR, 2021.

\bibitem{chen2021theorem}
Shuxiao Chen, Qinqing Zheng, Qi~Long, and Weijie~J Su.
\newblock A theorem of the alternative for personalized federated learning.
\newblock {\em arXiv preprint arXiv:2103.01901}, 2021.

\bibitem{chen2020simple}
Ting Chen, Simon Kornblith, Mohammad Norouzi, and Geoffrey Hinton.
\newblock A simple framework for contrastive learning of visual
  representations.
\newblock In {\em International conference on machine learning}, pages
  1597--1607. PMLR, 2020.

\bibitem{chen2020optimal}
Wenlin Chen, Samuel Horvath, and Peter Richtarik.
\newblock Optimal client sampling for federated learning.
\newblock {\em arXiv preprint arXiv:2010.13723}, 2020.

\bibitem{cheng2021fine}
Gary Cheng, Karan Chadha, and John Duchi.
\newblock Fine-tuning is fine in federated learning.
\newblock {\em arXiv preprint arXiv:2108.07313}, 2021.

\bibitem{cho2020client}
Yae~Jee Cho, Jianyu Wang, and Gauri Joshi.
\newblock Client selection in federated learning: Convergence analysis and
  power-of-choice selection strategies.
\newblock {\em arXiv preprint arXiv:2010.01243}, 2020.

\bibitem{chua2021fine}
Kurtland Chua, Qi~Lei, and Jason~D Lee.
\newblock How fine-tuning allows for effective meta-learning.
\newblock {\em Advances in Neural Information Processing Systems}, 34, 2021.

\bibitem{collins2021exploiting}
Liam Collins, Hamed Hassani, Aryan Mokhtari, and Sanjay Shakkottai.
\newblock Exploiting shared representations for personalized federated
  learning.
\newblock In {\em International Conference on Machine Learning}, pages
  2089--2099. PMLR, 2021.

\bibitem{collins2022maml}
Liam Collins, Aryan Mokhtari, Sewoong Oh, and Sanjay Shakkottai.
\newblock Maml and anil provably learn representations.
\newblock {\em arXiv preprint arXiv:2202.03483}, 2022.

\bibitem{du2020fewshot}
Simon~S. Du, Wei Hu, Sham~M. Kakade, Jason~D. Lee, and Qi~Lei.
\newblock Few-shot learning via learning the representation, provably, 2020.

\bibitem{fallah2020personalized}
Alireza Fallah, Aryan Mokhtari, and Asuman Ozdaglar.
\newblock Personalized federated learning: A meta-learning approach, 2020.

\bibitem{finn2017model}
Chelsea Finn, Pieter Abbeel, and Sergey Levine.
\newblock Model-agnostic meta-learning for fast adaptation of deep networks.
\newblock In {\em Proceedings of the 34th International Conference on Machine
  Learning-Volume 70}, pages 1126--1135. JMLR. org, 2017.

\bibitem{gorbunov2021local}
Eduard Gorbunov, Filip Hanzely, and Peter Richt{\'a}rik.
\newblock Local sgd: Unified theory and new efficient methods.
\newblock In {\em International Conference on Artificial Intelligence and
  Statistics}, pages 3556--3564. PMLR, 2021.

\bibitem{gower2019sgd}
Robert~Mansel Gower, Nicolas Loizou, Xun Qian, Alibek Sailanbayev, Egor
  Shulgin, and Peter Richt{\'a}rik.
\newblock Sgd: General analysis and improved rates.
\newblock In {\em International Conference on Machine Learning}, pages
  5200--5209. PMLR, 2019.

\bibitem{gross2010note}
David Gross and Vincent Nesme.
\newblock Note on sampling without replacing from a finite collection of
  matrices.
\newblock {\em arXiv preprint arXiv:1001.2738}, 2010.

\bibitem{haddadpour2019local}
Farzin Haddadpour, Mohammad~Mahdi Kamani, Mehrdad Mahdavi, and Viveck Cadambe.
\newblock Local sgd with periodic averaging: Tighter analysis and adaptive
  synchronization.
\newblock {\em Advances in Neural Information Processing Systems}, 32, 2019.

\bibitem{haddadpour2020federated}
Farzin Haddadpour, Mohammad~Mahdi Kamani, Aryan Mokhtari, and Mehrdad Mahdavi.
\newblock Federated learning with compression: Unified analysis and sharp
  guarantees.
\newblock {\em arXiv preprint arXiv:2007.01154}, 2020.

\bibitem{hadsell2006dimensionality}
Raia Hadsell, Sumit Chopra, and Yann LeCun.
\newblock Dimensionality reduction by learning an invariant mapping.
\newblock In {\em 2006 IEEE Computer Society Conference on Computer Vision and
  Pattern Recognition (CVPR'06)}, volume~2, pages 1735--1742. IEEE, 2006.

\bibitem{huang2021fl}
Baihe Huang, Xiaoxiao Li, Zhao Song, and Xin Yang.
\newblock Fl-ntk: A neural tangent kernel-based framework for federated
  learning convergence analysis.
\newblock {\em arXiv preprint arXiv:2105.05001}, 2021.

\bibitem{karimireddy2020scaffold}
Sai~Praneeth Karimireddy, Satyen Kale, Mehryar Mohri, Sashank Reddi, Sebastian
  Stich, and Ananda~Theertha Suresh.
\newblock Scaffold: Stochastic controlled averaging for federated learning.
\newblock In {\em International Conference on Machine Learning}, pages
  5132--5143. PMLR, 2020.

\bibitem{khaled2020tighter}
Ahmed Khaled, Konstantin Mishchenko, and Peter Richt{\'a}rik.
\newblock Tighter theory for local sgd on identical and heterogeneous data.
\newblock In {\em International Conference on Artificial Intelligence and
  Statistics}, pages 4519--4529. PMLR, 2020.

\bibitem{koloskova2020unified}
Anastasia Koloskova, Nicolas Loizou, Sadra Boreiri, Martin Jaggi, and Sebastian
  Stich.
\newblock A unified theory of decentralized sgd with changing topology and
  local updates.
\newblock In {\em International Conference on Machine Learning}, pages
  5381--5393. PMLR, 2020.

\bibitem{kong2020robust}
Weihao Kong, Raghav Somani, Sham Kakade, and Sewoong Oh.
\newblock Robust meta-learning for mixed linear regression with small batches.
\newblock {\em Advances in neural information processing systems},
  33:4683--4696, 2020.

\bibitem{kornblith2019similarity}
Simon Kornblith, Mohammad Norouzi, Honglak Lee, and Geoffrey Hinton.
\newblock Similarity of neural network representations revisited.
\newblock In {\em International Conference on Machine Learning}, pages
  3519--3529. PMLR, 2019.

\bibitem{krizhevsky2009learning}
Alex Krizhevsky, Geoffrey Hinton, et~al.
\newblock Learning multiple layers of features from tiny images.
\newblock {\em Citeseer}, 2009.

\bibitem{li2020ditto}
Tian Li, Shengyuan Hu, Ahmad Beirami, and Virginia Smith.
\newblock Ditto: Fair and robust federated learning through personalization.
\newblock {\em arXiv: 2012.04221}, 2020.

\bibitem{li2018federated}
Tian Li, Anit~Kumar Sahu, Manzil Zaheer, Maziar Sanjabi, Ameet Talwalkar, and
  Virginia Smith.
\newblock Federated optimization in heterogeneous networks.
\newblock {\em arXiv preprint arXiv:1812.06127}, 2018.

\bibitem{li2019convergence}
Xiang Li, Kaixuan Huang, Wenhao Yang, Shusen Wang, and Zhihua Zhang.
\newblock On the convergence of fedavg on non-iid data.
\newblock In {\em International Conference on Learning Representations}, 2019.

\bibitem{liang2020think}
Paul~Pu Liang, Terrance Liu, Liu Ziyin, Ruslan Salakhutdinov, and
  Louis-Philippe Morency.
\newblock Think locally, act globally: Federated learning with local and global
  representations.
\newblock {\em arXiv preprint arXiv:2001.01523}, 2020.

\bibitem{liang2019variance}
Xianfeng Liang, Shuheng Shen, Jingchang Liu, Zhen Pan, Enhong Chen, and Yifei
  Cheng.
\newblock Variance reduced local sgd with lower communication complexity.
\newblock {\em arXiv preprint arXiv:1912.12844}, 2019.

\bibitem{malinovskiy2020local}
Grigory Malinovskiy, Dmitry Kovalev, Elnur Gasanov, Laurent Condat, and Peter
  Richtarik.
\newblock From local sgd to local fixed-point methods for federated learning.
\newblock In {\em International Conference on Machine Learning}, pages
  6692--6701. PMLR, 2020.

\bibitem{maurer2016benefit}
Andreas Maurer, Massimiliano Pontil, and Bernardino Romera-Paredes.
\newblock The benefit of multitask representation learning.
\newblock {\em Journal of Machine Learning Research}, 17(81):1--32, 2016.

\bibitem{mcmahan2017communication}
Brendan McMahan, Eider Moore, Daniel Ramage, Seth Hampson, and Blaise~Aguera
  y~Arcas.
\newblock Communication-efficient learning of deep networks from decentralized
  data.
\newblock In {\em Artificial Intelligence and Statistics}, pages 1273--1282.
  PMLR, 2017.

\bibitem{mitra2021linear}
Aritra Mitra, Rayana Jaafar, George~J Pappas, and Hamed Hassani.
\newblock Linear convergence in federated learning: Tackling client
  heterogeneity and sparse gradients.
\newblock {\em Advances in Neural Information Processing Systems},
  34:14606--14619, 2021.

\bibitem{murata2021bias}
Tomoya Murata and Taiji Suzuki.
\newblock Bias-variance reduced local sgd for less heterogeneous federated
  learning.
\newblock In {\em International Conference on Machine Learning}, pages
  7872--7881. PMLR, 2021.

\bibitem{nguyen2018sgd}
Lam Nguyen, Phuong~Ha Nguyen, Marten Dijk, Peter Richt{\'a}rik, Katya
  Scheinberg, and Martin Tak{\'a}c.
\newblock Sgd and hogwild! convergence without the bounded gradients
  assumption.
\newblock In {\em International Conference on Machine Learning}, pages
  3750--3758. PMLR, 2018.

\bibitem{nichol2018reptile}
Alex Nichol and John Schulman.
\newblock Reptile: a scalable metalearning algorithm.
\newblock {\em arXiv preprint arXiv:1803.02999}, 2:2, 2018.

\bibitem{oh2021fedbabu}
Jaehoon Oh, Sangmook Kim, and Se-Young Yun.
\newblock Fedbabu: Towards enhanced representation for federated image
  classification.
\newblock {\em arXiv preprint arXiv:2106.06042}, 2021.

\bibitem{pathak2020fedsplit}
Reese Pathak and Martin~J Wainwright.
\newblock Fedsplit: An algorithmic framework for fast federated optimization.
\newblock {\em Advances in Neural Information Processing Systems},
  33:7057--7066, 2020.

\bibitem{qu2020federated}
Zhaonan Qu, Kaixiang Lin, Jayant Kalagnanam, Zhaojian Li, Jiayu Zhou, and
  Zhengyuan Zhou.
\newblock Federated learning's blessing: Fedavg has linear speedup.
\newblock {\em arXiv preprint arXiv:2007.05690}, 2020.

\bibitem{reddi2020adaptive}
Sashank Reddi, Zachary Charles, Manzil Zaheer, Zachary Garrett, Keith Rush,
  Jakub Kone{\v{c}}n{\`y}, Sanjiv Kumar, and H~Brendan McMahan.
\newblock Adaptive federated optimization.
\newblock {\em arXiv preprint arXiv:2003.00295}, 2020.

\bibitem{ribero2020communication}
Monica Ribero and Haris Vikalo.
\newblock Communication-efficient federated learning via optimal client
  sampling.
\newblock {\em arXiv preprint arXiv:2007.15197}, 2020.

\bibitem{shamir2014distributed}
Ohad Shamir and Nathan Srebro.
\newblock Distributed stochastic optimization and learning.
\newblock In {\em 2014 52nd Annual Allerton Conference on Communication,
  Control, and Computing (Allerton)}, pages 850--857. IEEE, 2014.

\bibitem{spiridonoff2021communication}
Artin Spiridonoff, Alex Olshevsky, and Ioannis Paschalidis.
\newblock Communication-efficient sgd: From local sgd to one-shot averaging.
\newblock {\em Advances in Neural Information Processing Systems}, 34, 2021.

\bibitem{stich2018local}
Sebastian~U Stich.
\newblock Local sgd converges fast and communicates little.
\newblock In {\em International Conference on Learning Representations}, 2018.

\bibitem{stich2019error}
Sebastian~U Stich and Sai~Praneeth Karimireddy.
\newblock The error-feedback framework: Better rates for sgd with delayed
  gradients and compressed communication.
\newblock {\em arXiv preprint arXiv:1909.05350}, 2019.

\bibitem{su2021achieving}
Lili Su, Jiaming Xu, and Pengkun Yang.
\newblock Achieving statistical optimality of federated learning: Beyond
  stationary points.
\newblock {\em arXiv preprint arXiv:2106.15216}, 2021.

\bibitem{subramanian2021torch_cka}
Anand Subramanian.
\newblock torch-cka.
\newblock \url{https://github.com/AntixK/PyTorch-Model-Compare}, 2021.

\bibitem{sun2021towards}
Yue Sun, Adhyyan Narang, Ibrahim Gulluk, Samet Oymak, and Maryam Fazel.
\newblock Towards sample-efficient overparameterized meta-learning.
\newblock {\em Advances in Neural Information Processing Systems}, 34, 2021.

\bibitem{t2020personalized}
Canh T~Dinh, Nguyen Tran, and Josh Nguyen.
\newblock Personalized federated learning with moreau envelopes.
\newblock {\em Advances in Neural Information Processing Systems},
  33:21394--21405, 2020.

\bibitem{thekumparampil2021statistically}
Kiran~K Thekumparampil, Prateek Jain, Praneeth Netrapalli, and Sewoong Oh.
\newblock Statistically and computationally efficient linear
  meta-representation learning.
\newblock {\em Advances in Neural Information Processing Systems}, 34, 2021.

\bibitem{thekumparampil2021sample}
Kiran~Koshy Thekumparampil, Prateek Jain, Praneeth Netrapalli, and Sewoong Oh.
\newblock Sample efficient linear meta-learning by alternating minimization.
\newblock {\em arXiv preprint arXiv:2105.08306}, 2021.

\bibitem{tripuraneni2020provable}
Nilesh Tripuraneni, Chi Jin, and Michael~I. Jordan.
\newblock Provable meta-learning of linear representations, 2020.

\bibitem{tripuraneni2020theory}
Nilesh Tripuraneni, Michael Jordan, and Chi Jin.
\newblock On the theory of transfer learning: The importance of task diversity.
\newblock {\em Advances in Neural Information Processing Systems},
  33:7852--7862, 2020.

\bibitem{wang2019cooperative}
Jianyu Wang and Gauri Joshi.
\newblock Cooperative sgd: A unified framework for the design and analysis of
  communication-efficient sgd algorithms.
\newblock In {\em ICML Workshop on Coding Theory for Machine Learning}, 2019.

\bibitem{wang2020tackling}
Jianyu Wang, Qinghua Liu, Hao Liang, Gauri Joshi, and H~Vincent Poor.
\newblock Tackling the objective inconsistency problem in heterogeneous
  federated optimization.
\newblock {\em arXiv preprint arXiv:2007.07481}, 2020.

\bibitem{wang2021local}
Jianyu Wang, Zheng Xu, Zachary Garrett, Zachary Charles, Luyang Liu, and Gauri
  Joshi.
\newblock Local adaptivity in federated learning: Convergence and consistency.
\newblock {\em arXiv preprint arXiv:2106.02305}, 2021.

\bibitem{wang2019federated}
Kangkang Wang, Rajiv Mathews, Chlo{\'e} Kiddon, Hubert Eichner, Fran{\c{c}}oise
  Beaufays, and Daniel Ramage.
\newblock Federated evaluation of on-device personalization.
\newblock {\em arXiv preprint arXiv:1910.10252}, 2019.

\bibitem{whitney2020evaluating}
William~F Whitney, Min~Jae Song, David Brandfonbrener, Jaan Altosaar, and
  Kyunghyun Cho.
\newblock Evaluating representations by the complexity of learning low-loss
  predictors.
\newblock {\em arXiv preprint arXiv:2009.07368}, 2020.

\bibitem{woodworth2020local}
Blake Woodworth, Kumar~Kshitij Patel, Sebastian Stich, Zhen Dai, Brian Bullins,
  Brendan Mcmahan, Ohad Shamir, and Nathan Srebro.
\newblock Is local sgd better than minibatch sgd?
\newblock In {\em International Conference on Machine Learning}, pages
  10334--10343. PMLR, 2020.

\bibitem{woodworth2020minibatch}
Blake~E Woodworth, Kumar~Kshitij Patel, and Nati Srebro.
\newblock Minibatch vs local sgd for heterogeneous distributed learning.
\newblock {\em Advances in Neural Information Processing Systems},
  33:6281--6292, 2020.

\bibitem{xu2021representation}
Ziping Xu and Ambuj Tewari.
\newblock Representation learning beyond linear prediction functions.
\newblock {\em Advances in Neural Information Processing Systems}, 34, 2021.

\bibitem{yu2019parallel}
Hao Yu, Sen Yang, and Shenghuo Zhu.
\newblock Parallel restarted sgd with faster convergence and less
  communication: Demystifying why model averaging works for deep learning.
\newblock In {\em Proceedings of the AAAI Conference on Artificial
  Intelligence}, volume~33, pages 5693--5700, 2019.

\bibitem{yu2020salvaging}
Tao Yu, Eugene Bagdasaryan, and Vitaly Shmatikov.
\newblock Salvaging federated learning by local adaptation.
\newblock {\em arXiv preprint arXiv:2002.04758}, 2020.

\bibitem{zhang2020federated}
Fengda Zhang, Kun Kuang, Zhaoyang You, Tao Shen, Jun Xiao, Yin Zhang, Chao Wu,
  Yueting Zhuang, and Xiaolin Li.
\newblock Federated unsupervised representation learning.
\newblock {\em arXiv preprint arXiv:2010.08982}, 2020.

\bibitem{zhou2017convergence}
Fan Zhou and Guojing Cong.
\newblock On the convergence properties of a $ k $-step averaging stochastic
  gradient descent algorithm for nonconvex optimization.
\newblock {\em arXiv preprint arXiv:1708.01012}, 2017.

\end{thebibliography}
\bibliographystyle{plain}

\newpage
\appendix

\section{Additional Related Work} \label{app:rw}


In this section we provide further discussion of the related works.

\textbf{Convergence of FedAvg.} The convergence of FedAvg, also known as Local SGD, has been the subject of intense  study in recent years, due to the algorithm's effectiveness combined with the difficulties of analyzing it. In homogeneous data settings, local updates are easier to reconcile with solving the global objective, allowing much progress to be made in understanding convergence rates in this case  \cite{stich2018local,haddadpour2019local,stich2019error,yu2019parallel,wang2019cooperative,haddadpour2019local,spiridonoff2021communication,woodworth2020local,zhou2017convergence}.
In the heterogeneous case multiple works have shown that FedAvg with fixed learning rate may not solve the global objective because the local updates induce a non-vanishing bias by drifting towards local solutions, 
even with full gradient steps and and strongly convex objectives  \cite{li2019convergence,malinovskiy2020local,pathak2020fedsplit,charles2021convergence, wang2020tackling,karimireddy2020scaffold,charles2020outsized,wang2019federated,wang2021local}. As a remedy, several papers have analyzed FedAvg with learning rate that decays over communication rounds, and have shown that this approach indeed reaches a stationary point of the global objective, but at sublinear rates \cite{khaled2020tighter,koloskova2020unified,li2019convergence,qu2020federated,karimireddy2020scaffold} that can be strictly slower than the convergence rates of D-SGD \cite{woodworth2020minibatch,karimireddy2020scaffold}.
Nevertheless, in overparameterized settings with strongly convex losses, multiple works have shown that FedAvg achieves linear convergence to a global optimum \cite{koloskova2020unified,qu2020federated}.
Here overparameterized means that the model class contains a single model that achieves zero loss for all clients. Our setting is not overparameterized in this sense, nor is it convex.

\textbf{Multi-task representation learning.} Multiple works have studied the multi-task linear representation learning setting \cite{maurer2016benefit} in recent years. 
\cite{tripuraneni2020provable} and \cite{kong2020robust} give statistical rates for a method-of-moments estimator for learning the representation and  \cite{sun2021towards} analyze a projection and eigen-weighting based algorithm designed for the case in which ground-truth representation is unknown. 
Other works have studied alternating minimization procedures for learning $\col(\mathbf{B}_\ast)$ in the context of meta-learning  \cite{thekumparampil2021statistically} and federated learning \cite{collins2021exploiting}. However, these methods require a unique head for each client, which greatly simplifies the analysis since head diversity is guaranteed prior to local updates and is not applicable to some cross-device FL settings which cannot tolerate stateful clients.
Outside of the multi-task linear regression setting, \cite{tripuraneni2020theory} and  \cite{xu2021representation} have demonstrated the necessity of task diversity to learning generalizable representations from a learning theoretic perspective, and \cite{du2020fewshot} considered the statistical rates of representation learning by solving an ERM with unique heads per task.
%




\newpage
\section{Proof of Theorem Main Results}\label{app:proof}



\subsection{Proof of Theorem \ref{thm:main_pop}}
In this section we provide the proof of Theorem \ref{thm:main_pop}. We make use of the notations in Table \ref{table:notations}. 
\begin{table}[H]
  \caption{Notations.}
  \label{table:notations}
  \centering
  \begin{tabular}{lll}
    \toprule
    Notation     & Definition   \\
    \midrule
$\mu$                                                                                      & $ \sigma_{\min}^{0.5}\left(\tfrac{1}{M}\sum_{i=1}^M (\mathbf{w}_{\ast,i} - \mathbf{\bar{w}}_{\ast})(\mathbf{w}_{\ast,i} - \mathbf{\bar{w}}_{\ast})^\top\right)$  \\
$L_{\max}$                                                                                      & $\max_{i\in[M]}\|\mathbf{w}_{\ast,i}\|_2 $ \\ 
$\kappa_{\max}$                                                                                      & $\nicefrac{L_{\max}}{\mu}$  \\ 
$\mathbf{\bar{w}}_{\ast} $ & $\tfrac{1}{M}\sum_{i=1}^M \mathbf{w}_{\ast,i}$ \\
$\mathbf{\bar{w}}_{\ast,t}$ & $ \tfrac{1}{m}\sum_{i\in \mathcal{I}_t} \mathbf{w}_{\ast,i}$ \\
$ \mathbf{B}_{t,i,s},\mathbf{w}_{t,i,s}$ & the results of $s$ local updates of the global model at round $t$ by the $i$-th client\\
$ \mathbf{B}_{t,i,0},\mathbf{w}_{t,i,0}$ & $ \mathbf{B}_{t},\mathbf{w}_{t}$, respectively  \\
$\mathbf{e}_{t,i,s}$ & $\mathbf{B}_{t,i,s}\mathbf{w}_{t,i,s} - \mathbf{B}_{\ast}\mathbf{w}_{\ast,i} $, i.e. product error for $s$-th local update for task $i$, round $t$ \\
$ \mathbf{G}_{t,i,s}$ & $(\mathbf{B}_{t,i,s+1} - \mathbf{B}_{t,i,s})/\alpha$, such that $\mathbf{B}_{t,i,s+1} = \mathbf{B}_{t,i,s} -\alpha \mathbf{G}_{t,i,s}$ \\
$ \mathbf{G}_{t}$ & $(\mathbf{B}_{t+1} - \mathbf{B}_{t})/\alpha$, such that $\mathbf{B}_{t+1} = \mathbf{B}_{t} -\alpha \mathbf{G}_{t}$ \\
$\del_t$                                                    & $\mathbf{I}_k - \alpha \mathbf{B}_t^\top \mathbf{B}_t$                        \\ 
$\deld_t$                & $\mathbf{I}_d - \alpha \mathbf{B}_t \mathbf{B}_t^\top$ \\
$\dist_t$ & $\dist(\mathbf{B}_t,\mathbf{B}_\ast)$ \\
$\delta_0$ & $\dist_0$ \\
$E_0$ & $1-\dist_0^2$ \\
$\col(\mathbf{B}), \col(\mathbf{B})^\perp$ & column space of $\mathbf{B}$, orthogonal complement to column space of $\mathbf{B}$, respectively \\
    \bottomrule
  \end{tabular}
\end{table}
Here the local updates are given by
\begin{align}
    \mathbf{B}_{t,i,s+1} &= \mathbf{B}_{t,i,s} - \alpha (\mathbf{B}_{t,i,s}\mathbf{w}_{t,i,s} - \mathbf{B}_\ast \mathbf{w}_{\ast,i})\mathbf{w}_{t,i,s}^\top \nonumber \\
    \mathbf{w}_{t,i,s+1} &= \mathbf{w}_{t,i,s} - \alpha \mathbf{B}_{t,i,s}^\top (\mathbf{B}_{t,i,s} \mathbf{w}_{t,i,s} - \mathbf{B}_{\ast} \mathbf{w}_{\ast,i}) \nonumber \\
    &= \del_{t,i,s}\mathbf{w}_{t,i,s} + \alpha \mathbf{B}_{t,i,s}^\top \mathbf{B}_{\ast} \mathbf{w}_{\ast,i} \nonumber
\end{align}
and the global updates are given by 
\begin{align}
    \mathbf{B}_{t+1} &= \tfrac{1}{m}\sum_{i \in \mathcal{I}_t} \mathbf{B}_{t,i,\tau} \nonumber \\
    \mathbf{w}_{t+1} &= \tfrac{1}{m}\sum_{i \in \mathcal{I}_t} \mathbf{w}_{t,i,\tau}. \nonumber
\end{align}

First we control the ground-truth heads sampled on each round.
\begin{lemma}\label{lem:concen}
Suppose $m\geq \min(M, 20((\nicefrac{\gamma}{L_{\max}})^2 + (\nicefrac{H}{L_{\max}})^4)(\alpha L_{\max})^{-4}\log(kT))$. Then the event
\begin{align}
  A_0\coloneqq \Bigg\{&  \bigg\| \tfrac{1}{m}\sum_{i\in \mathcal{I}_t} \mathbf{w}_{\ast,i} - \mathbf{\bar{w}}_{\ast} \bigg\| \leq 4 \alpha^2 L_{\max}^3,\nonumber \\
  & \bigg\| \tfrac{1}{m}\sum_{i\in \mathcal{I}_t} \mathbf{w}_{\ast,i}\mathbf{w}_{\ast,i}^\top - \tfrac{1}{M}\sum_{i'=1}^M \mathbf{w}_{\ast,i'}\mathbf{w}_{\ast,i'}^\top  \bigg\| \leq 4\alpha^2 L_{\max}^4 \quad \forall t \in [T]\Bigg\} \nonumber
\end{align}
with probability at least $ 1- 4 (kT)^{-99}$.
\end{lemma}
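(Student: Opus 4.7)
Since $\mathcal{I}_t$ is drawn uniformly at random from $[M]$ with $|\mathcal{I}_t|=m$, both quantities inside $A_0$ are sample means over $m$ draws (without replacement) from the finite population $\{\mathbf{w}_{\ast,i}\}_{i=1}^M$. The plan is to apply a vector Bernstein inequality to the first quantity and a matrix Bernstein inequality to the second at a fixed round $t$, then union-bound the two resulting bad events across $t \in [T]$. Crucially, no iterate-dependent quantities appear in $A_0$, so this lemma can be established once and then conditioned on throughout the main induction.

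For the first inequality, introduce the centered vectors $Z_i \coloneqq \mathbf{w}_{\ast,i} - \bar{\mathbf{w}}_\ast$. Assumption \ref{assump:n} gives $\|Z_i\|_2 \le 2L_{\max}$, and Definition \ref{def:var} gives the population variance $\tfrac{1}{M}\sum_{i=1}^M \|Z_i\|_2^2 = \gamma^2$. Applying vector Bernstein (which, via the Hoeffding reduction from sampling without replacement to sampling with replacement, is valid in the without-replacement regime) with deviation $\epsilon_1 \coloneqq 4\alpha^2 L_{\max}^3$ yields
$$
\Pr\!\left(\Big\|\tfrac{1}{m}\!\sum_{i\in\mathcal{I}_t}\!\mathbf{w}_{\ast,i} - \bar{\mathbf{w}}_\ast\Big\|_2 > \epsilon_1\right) \le 2k\exp\!\left(-\frac{m\epsilon_1^2/2}{\gamma^2 + 2L_{\max}\epsilon_1/3}\right).
$$
Because $\alpha L_{\max}$ is assumed small, the $\gamma^2$ term dominates the denominator, so the exponent scales like $m(\alpha L_{\max})^4/(\gamma/L_{\max})^2$, and the hypothesized lower bound on $m$ makes this failure probability at most $2(kT)^{-100}$.

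For the second inequality, apply matrix Bernstein to the centered symmetric matrices $Y_i \coloneqq \mathbf{w}_{\ast,i}\mathbf{w}_{\ast,i}^\top - \tfrac{1}{M}\sum_{i'=1}^M \mathbf{w}_{\ast,i'}\mathbf{w}_{\ast,i'}^\top$, which satisfy $\|Y_i\|_2 \le 2L_{\max}^2$ and $\tfrac{1}{M}\sum_i \|Y_i\|^2 = H^4$ by Definition \ref{def:var}. The analogous argument with deviation $\epsilon_2 \coloneqq 4\alpha^2 L_{\max}^4$ produces failure probability at most $2(kT)^{-100}$. Union-bounding the two events over all $T$ communication rounds gives the claimed overall failure probability of at most $4T \cdot (kT)^{-100} \le 4(kT)^{-99}$.

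The only real subtlety is bookkeeping: checking that the numerical constants $4$ (inside the deviation) and $20$ (inside the lower bound on $m$) are consistent. This amounts to verifying that at the chosen scales of $\epsilon_1$ and $\epsilon_2$, the Bernstein exponent is governed by the variance term $m\epsilon^2/(2\sigma^2)$ rather than the sub-exponential term $3m\epsilon/(2R)$, which holds whenever $\alpha L_{\max}$ is sufficiently small (a condition already implied by the step-size choice in Theorem \ref{thm:main_pop}). No further machinery is required.
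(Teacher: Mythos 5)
Your proposal is correct and follows essentially the same strategy as the paper: concentration for sampling without replacement applied to the two empirical means, then a union bound over the two events and over $T$ rounds. The only real difference is the choice of concentration tool. The paper invokes Theorem~1 of Gross~\cite{gross2010note}, which is a Bernstein-type inequality for Hermitian matrices sampled without replacement from a finite set, and handles the first (vector-valued) bound by lifting each $\mathbf{w}_{\ast,i}$ to a diagonal matrix $\mathrm{diag}(\mathbf{w}_{\ast,i})$. You instead propose vector Bernstein for the first bound and matrix Bernstein for the second, invoking the Hoeffding reduction to pass from with-replacement to without-replacement; this is the same mathematics packaged differently (Gross's theorem is built on exactly that reduction). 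Note that the dimension factor $2k$ you write for the vector Bernstein bound is what you would get from the same diagonal-matrix lift, so your argument and the paper's really are parallel. The one place that deserves more than the hand-waving you offered is verifying the Bernstein exponent is in the variance-dominated regime (equivalently, the range restriction $t \leq 2m\gamma^2/L_{\max}$ for Gross's theorem); the paper checks this by noting $\gamma^2 \geq k\mu^2$ (summing the diagonal variances) and using the smallness of $\alpha$, and you should make that check explicit to ensure the constants $4$ and $20$ are compatible.
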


\begin{proof}
If $m=M$ then $A_0$ holds almost surely. Otherwise, first let $\mathbf{W}_{\ast,i} := \text{diag}(\mathbf{w}_{\ast,i}) \in \mathbf{R}^{k\times k}$, and let $\mathbf{\bar{W}}_{\ast} := \tfrac{1}{M}\sum_{i=1}^M\text{diag}(\mathbf{w}_{\ast,i})$. For any $t\in [T]$, $\{\mathbf{W}_{\ast,i}\}_{i \in\mathcal{I}_t}$ is a set of Hermitian matrices sampled uniformly without replacement from $\{\mathbf{W}_{\ast,i}\}_{i \in[M]}$, $\|\mathbf{W}_{\ast,i}\|\leq L_{\max}$ almost surely, and $\left\|\tfrac{1}{M}\sum_{i=1}^M (\mathbf{W}_{\ast,i}-\mathbf{\bar{W}}_{\ast})^2 \right\| \leq \tfrac{1}{M}\sum_{i=1}^M \|\mathbf{w}_{\ast,i}-\mathbf{\bar{w}}_{\ast}\|^2 =\gamma^2$ by the triangle and Cauchy-Schwarz inequalities and  Definition \ref{def:var}. Thus, we can apply Theorem 1 in \cite{gross2010note} to obtain
\begin{align}
 \mathbb{P}\left( \left\|\sum_{i \in \mathcal{I}_t} \mathbf{W}_{\ast,i} - \mathbf{\bar{W}}_{\ast}\right\|> t \right) &\leq 2 k \exp( \tfrac{-t^2}{4m\gamma^2} )
\end{align}
as long as $t\leq 2 m \gamma^2/L_{\max}$. Choose $t = 4 m \alpha^2 L_{\max}^3$. Note that indeed $t \leq 2 m \gamma^2/L_{\max}$ since $\gamma^2 = \sum_{l=1}^k\tfrac{1}{M}\sum_{i=1}^M \mathbf{u}_l^\top (\mathbf{w}_{\ast,i}- \mathbf{\bar{w}}_{\ast})(\mathbf{w}_{\ast,i}- \mathbf{\bar{w}}_{\ast})^\top \mathbf{u}_l\geq k\mu^2$, where $\mathbf{u}_l$ is the $l$-th standard basis vector, and $\alpha \leq \mu^2/(2L_{\max}^4)$. Thus we obtain
\begin{align}
    \mathbb{P}\left( \left\|\sum_{i \in \mathcal{I}_t} \mathbf{W}_{\ast,i} - \mathbf{\bar{W}}_{\ast}\right\|> 4m \alpha^2 L_{\max}^3 \right) &\leq 2 k \exp\left( \tfrac{- 4m \alpha^4 L_{\max}^6 }{ \gamma^2} \right) \nonumber \\
    \implies  \mathbb{P}\left( \left\|\tfrac{1}{m}\sum_{i \in \mathcal{I}_t} \mathbf{W}_{\ast,i} - \mathbf{\bar{W}}_{\ast}\right\|>  4\alpha^2 L_{\max}^3 \right) &\leq 2 k \exp\left( - 100 \log(kT) \right) \nonumber 
\end{align}
since $m \geq 20 (\tfrac{L_{\max}^2}{\gamma^2})\alpha^4 L_{\max}^4 \log(kT)$.
An analogous argument, without needing to lift the matrices to higher dimensions, yields
\begin{align}
\mathbb{P}\left( \left\|\tfrac{1}{m}\sum_{i \in \mathcal{I}_t}\left( \mathbf{w}_{\ast,i}\mathbf{w}_{\ast,i}^\top - \tfrac{1}{M}\sum_{i'=1}^M\mathbf{{w}}_{\ast,i'}\mathbf{{w}}_{\ast,i'}^\top\right) \right\|> 4 \alpha^2 L_{\max}^4 \right) &\leq 2 k \exp\left( - 100 \log(kT) \right) \nonumber 
\end{align}
Union bounding, we obtain that $ \left\|\tfrac{1}{m}\sum_{i \in \mathcal{I}_t} \mathbf{W}_{\ast,i} - \mathbf{\bar{W}}_{\ast}\right\|\leq  4\alpha^2 L_{\max}^3 $ and \\
$ \left\|\tfrac{1}{m}\sum_{i \in \mathcal{I}_t}\left( \mathbf{w}_{\ast,i}\mathbf{w}_{\ast,i}^\top - \tfrac{1}{M}\sum_{i'=1}^M\mathbf{{w}}_{\ast,i'}\mathbf{{w}}_{\ast,i'}^\top\right) \right\|\leq 4 \alpha^2 L_{\max}^3 $ with probability at least \\
$1 - 4k\exp(-100\log(kT)) = 1 - 4k^{-99}T^{-100}$.
Union bounding over all $t\in [T]$ completes the proof. 
\end{proof}

Next we state and prove the version of Theorem \ref{thm:main_pop} with explicit constants. Note that the constants are not optimized.

\begin{theorem}[FedAvg Representation Learning]
\label{thm:app_pop} Consider the case that each client takes gradient steps with respect to their population loss $f_i(\mathbf{B},\mathbf{w})\coloneqq \tfrac{1}{2}\|\mathbf{Bw}-\mathbf{B}_{\ast}\mathbf{w}_{\ast,i}\|^2$ and all losses are weighted equally in the global objective. 
Suppose Assumptions  \ref{assump:n} and \ref{assump:td}  hold, the number of clients participating each round satisfies $m \geq \min(M, 20((\nicefrac{\gamma}{L_{\max}})^2 + (\nicefrac{H}{L_{\max}})^4)(\alpha L_{\max})^{-4}\log(kT))$, and the initial parameters satisfy (i)  $ \delta_0 \coloneqq \dist(\mathbf{B}_0, \mathbf{B}_\ast)\leq \sqrt{1\! -\! E_0}$ for any $E_0 \in (0,1]$, (ii) $\|\mathbf{I}- \alpha \mathbf{B}_0^\top \mathbf{B}_0\|_2 \leq \alpha^2 \tau L_{\max}^2 \kappa_{\max}^2$ and (iii) $\|\mathbf{w}_0\|_2 \leq \alpha^{2.5} \tau L_{\max}^3$.
Choose  step size $\alpha \leq  \tfrac{1-\delta_0}{4800{  \sqrt{\tau}L_{\max}\kappa_{\max}^{2} }}$. 
    Then for any $\epsilon \in (0, 1)$, the distance of the representation learned by FedAvg with $\tau\geq 2$ local updates satisfies  $\dist(\mathbf{B}_T, \mathbf{B}_\ast) <\epsilon$ after at most
\begin{align}
T \leq \tfrac{25}{\alpha^2 \tau \mu^2 E_0}\log(\nicefrac{1}{\epsilon}) \nonumber
\end{align}
communication rounds with probability at least $1 - 4 (kT)^{-99}$. 
\end{theorem}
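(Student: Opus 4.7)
The plan is to prove Theorem~\ref{thm:app_pop} by a tightly coupled local/global induction over the rounds $t$, using exactly the hypotheses $A_{1,t,i}(s), \dots, A_{4,t,i}(s)$ and $A_1(t), \dots, A_5(t)$ laid out in Section~\ref{sec:sketch}. The final bound $\dist(\mathbf{B}_T, \mathbf{B}_\ast) \leq \epsilon$ follows directly from $A_5(T)$, provided one can close the induction. The starting point is to condition on the sampling event $A_0$ from Lemma~\ref{lem:concen}, which holds with probability at least $1 - 4(kT)^{-99}$ and replaces $\tfrac{1}{m}\sum_{i\in\mathcal{I}_t}\mathbf{w}_{\ast,i}\mathbf{w}_{\ast,i}^\top$ and $\tfrac{1}{m}\sum_{i\in\mathcal{I}_t}\mathbf{w}_{\ast,i}$ by their population counterparts up to $O(\alpha^2 L_{\max}^4)$ error; after this reduction the argument is purely deterministic.

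The inner (local) induction is over $s \in \{0,\dots,\tau\}$ for each fixed $(t,i)$. Expanding one local head update gives $\mathbf{w}_{t,i,s+1} = \del_{t,i,s}\mathbf{w}_{t,i,s} + \alpha \mathbf{B}_{t,i,s}^\top \mathbf{B}_\ast \mathbf{w}_{\ast,i}$, so assuming $A_3(t)$ (i.e.\ $\|\del_t\|\lesssim \alpha^2\tau L_{\max}^2\kappa_{\max}^2 E_0^{-1}$) and $A_2(t)$ (i.e.\ $\|\mathbf{w}_t\|\lesssim \sqrt{\alpha}L_{\max}$), one shows by induction on $s$ that the residual $\mathbf{w}_{t,i,s} - \alpha \mathbf{B}_{t,i,s-1}^\top \mathbf{B}_\ast\mathbf{w}_{\ast,i}$ is $O(\alpha^{2.5}\tau L_{\max}^3 \kappa_{\max}^2 E_0^{-1})$, which is exactly $A_{1,t,i}(\tau)$ and automatically gives $A_{2,t,i}(\tau)$. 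The representation update $\mathbf{B}_{t,i,s+1} = \mathbf{B}_{t,i,s} - \alpha (\mathbf{B}_{t,i,s}\mathbf{w}_{t,i,s} - \mathbf{B}_\ast\mathbf{w}_{\ast,i})\mathbf{w}_{t,i,s}^\top$ lets one compute $\alpha \mathbf{B}_{t,i,s+1}^\top\mathbf{B}_{t,i,s+1}$ directly; the first-order terms in $\alpha$ cancel using $A_{1,t,i}$ and the remaining pieces are $O(\alpha^2 L_{\max}^2\kappa_{\max}^2 E_0^{-1})$, yielding $A_{3,t,i}(\tau)$. Property $A_{4,t,i}(\tau)$ then follows because $\mathbf{B}_{t,i,s}-\mathbf{B}_t$ has operator norm $O(\alpha^{1.5}\tau L_{\max}^2)$, much smaller than the current distance $\dist(\mathbf{B}_t,\mathbf{B}_\ast)$.

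The outer (global) induction carries $A_1(t),\dots,A_5(t)$ from round $t$ to round $t+1$. First $A_1(t+1), A_2(t+1)$ are obtained by averaging the local head updates $\mathbf{w}_{t,i,\tau}$ and invoking $A_{1,t,i}(\tau)$ together with the $A_0$ concentration bound on $\tfrac{1}{m}\sum_i \mathbf{w}_{\ast,i}$. The crucial step is $A_3(t+1)$: one writes $\alpha \mathbf{B}_{t+1}^\top \mathbf{B}_{t+1}$ using the mixture decomposition from Section~\ref{sec:sketch}, substitutes $A_{3,t,i}$ plus $A_{1,t,i}$, and observes that the $O(\alpha)$ cross terms telescope after averaging because the ``signal'' terms $\alpha \mathbf{B}_t^\top\mathbf{B}_\ast\mathbf{w}_{\ast,i}$ are balanced by projections of the representation gradient onto $\col(\mathbf{B}_t)$, which are themselves $O(\alpha)$-small by $A_3(t)$. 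For $A_4(t+1)$ one left-multiplies the $\mathbf{B}_{t+1}$ expansion by $\mathbf{B}_{\ast,\perp}^\top$: the $\mathbf{B}_\ast$ term vanishes and the remaining coefficient on $\mathbf{B}_{\ast,\perp}^\top\mathbf{B}_t$ is $\tfrac{1}{m}\sum_i \prod_{s=0}^{\tau-1}(\mathbf{I}_k - \alpha\mathbf{w}_{t,i,s}\mathbf{w}_{t,i,s}^\top)$. Using $A_{1,t,i}$ to replace each $\mathbf{w}_{t,i,s}$ by $\alpha\mathbf{B}_t^\top\mathbf{B}_\ast\mathbf{w}_{\ast,i}$, expanding the product to leading order in $\alpha$, and invoking Assumption~\ref{assump:td} together with the fact that $\alpha\mathbf{B}_t^\top\mathbf{B}_t \succeq (1-\|\del_t\|)\mathbf{I}_k$ yields the spectral bound $1 - c\alpha^2\tau\mu^2 E_0$ on the coefficient, giving $A_4(t+1)$. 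Combined with a lower bound $\sigma_{\min}(\mathbf{B}_{t+1}) \geq (1-o(1))\sigma_{\min}(\mathbf{B}_t)$ derived from $A_3(t+1)$, this delivers $A_5(t+1)$.

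The main obstacle, as flagged in Section~\ref{sec:sketch}, is closing $A_3(t+1)$: a priori, averaging $m$ nonlinearly perturbed local representations could easily inflate $\|\del_{t+1}\|$ beyond the $O(\alpha^2\tau L_{\max}^2\kappa_{\max}^2 E_0^{-1})$ target, and without controlling $\|\del_t\|$ one loses both the head-diversity mechanism and the lower bound on $\sigma_{\min}(\mathbf{B}_t)$, and the whole argument unravels. Closing this requires using the local bounds $A_{1,t,i}(\tau)$--$A_{3,t,i}(\tau)$ to show that $\mathbf{B}_t^\top \mathbf{G}_{t,i,s}$ is $O(\alpha)$-small for every local step, so that the projection of the aggregated update onto $\col(\mathbf{B}_t)$ is a genuine second-order correction rather than an $O(\alpha)$ one. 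The step-size constraint $\alpha = O((1-\delta_0)/(\sqrt{\tau}L_{\max}\kappa_{\max}^2))$ is precisely what is needed so that the accumulated constants in the nested induction do not explode, and the final iteration count $T = O(\tfrac{1}{\alpha^2\tau\mu^2 E_0}\log(1/\epsilon))$ comes from iterating the contraction in $A_4$.
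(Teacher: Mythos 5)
Your proposal reproduces the paper's local/global inductive scaffolding, and most of the outer logic (condition on $A_0$, bound the heads, use the signal-vs-prior mixture for $A_4$, iterate) is faithful. But there is a genuine gap in how you propose to close $A_3(t+1)$, which the paper itself flags as the crux.

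You argue that it suffices to show each per-round projection $\mathbf{B}_t^\top \mathbf{G}_t$ is $O(\alpha)$-small, i.e.\ that $\|\del_{t+1}\| - \|\del_t\|$ is a second-order quantity. That alone does not close the induction: over $T = \Theta\big(\tfrac{1}{\alpha^2\tau\mu^2 E_0}\log(1/\epsilon)\big)$ rounds, an additive per-round error of size $O(\alpha^4\tau L_{\max}^4)$ accumulates to $O(\alpha^2 L_{\max}^2\kappa_{\max}^2 E_0^{-1}\log(1/\epsilon))$, which grows unboundedly in $\log(1/\epsilon)$ and is not dominated by the target $c_3\alpha^2\tau L_{\max}^2\kappa_{\max}^2 E_0^{-1}$. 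What is actually required, and what the paper proves, is a genuine \emph{contraction} of $\|\del_t\|$: one writes $\del_{t+1} = \tfrac{1}{2}(\mathbf{I}_k - \mathbf{P}_t)\del_t + \tfrac{1}{2}\del_t(\mathbf{I}_k - \mathbf{P}_t) + \mathbf{Z}_t$ with $\mathbf{P}_t \succ 0$. The positive-definite $\mathbf{P}_t = 2\alpha^3 \mathbf{B}_t^\top\mathbf{B}_\ast\tfrac{1}{m}\sum_{i}(\mathbf{w}_{\ast,i}-\bar{\mathbf{w}}_{\ast,t})(\mathbf{w}_{\ast,i}-\bar{\mathbf{w}}_{\ast,t})^\top\mathbf{B}_\ast^\top\mathbf{B}_t$ is extracted by unrolling only the first two local updates and completing the square, and its smallest eigenvalue is $\Omega(\alpha^2 E_0\mu^2)$ by the centered task-diversity assumption (\ref{assump:td}) combined with a lower bound $\sigma_{\min}^2(\mathbf{B}_t^\top\mathbf{B}_\ast)\geq 0.1 E_0/\alpha$. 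The remaining error $\mathbf{Z}_t$ splits into a constant $O(\alpha^4\tau L_{\max}^4)$ piece and a piece proportional to $\dist_t$; once the contraction $(1-0.15\alpha^2 E_0\mu^2)$ is in hand, summing the geometric series bounds the accumulation by a $T$-independent quantity of order $\alpha^2\tau L_{\max}^2\kappa_{\max}^2 E_0^{-1}$. Without identifying this contraction, your argument cannot keep $\|\del_t\|$ bounded for all $t$, which then breaks both the local-head-diversity mechanism and the $\sigma_{\min}(\mathbf{B}_t)$ lower bound, and the entire chain unravels exactly as you worried.

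Two smaller inaccuracies worth flagging. First, for $A_4(t+1)$ you appeal to $\alpha\mathbf{B}_t^\top\mathbf{B}_t \succeq (1-\|\del_t\|)\mathbf{I}_k$, but the quantity one actually needs is $\sigma_{\min}^2(\mathbf{B}_t^\top\mathbf{B}_\ast) \gtrsim E_0/\alpha$, which requires \emph{both} a lower bound on $\sigma_{\min}(\mathbf{B}_t)$ and the uniform bound $\dist_t \leq (2+\delta_0)/3 < 1$; the latter is a separate sub-lemma (Lemma~\ref{lem:distlb}) involving the ratio $\sqrt{1+\|\del_0\|}/\sqrt{1-\|\del_t\|}$ rather than falling directly out of $A_5(t)$. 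Second, your justification of $A_{4,t,i}(\tau)$ (that $\|\mathbf{B}_{t,i,s}-\mathbf{B}_t\|$ is ``much smaller than $\dist_t$'') fails once $\dist_t$ becomes small; the correct argument is multiplicative, showing $\|\mathbf{B}_{\ast,\perp}^\top\mathbf{B}_{t,i,s}\|\leq\|\mathbf{B}_{\ast,\perp}^\top\mathbf{B}_t\|$ (non-expansion in the unnormalized quantity) and then dividing by $\sigma_{\min}(\mathbf{B}_{t,i,s})\approx 1/\sqrt{\alpha}$.
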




\begin{proof}
First we condition on the event $A_0$, which occurs with probability at least $1 - 4(kT)^{-99}$ by Lemma \ref{lem:concen}. Conditioned on this event, we will show that that the following two sets of  inductive hypotheses hold for all $s\in [\tau]$, $i\in \mathcal{I}_t$, and $t\in[T]$. The first set of inductive hypotheses controls  local behavior. We apply the below local induction  in parallel for each client $i \in [M]$ at every communication round $t \geq 0$, starting from the base case $s=1$.
\begin{enumerate}
    \item $A_{1,t,i}(s) \coloneqq \{\|\mathbf{w}_{t,i,s'}- \alpha \mathbf{B}_{t,i,s'-1}^\top \mathbf{B}_{\ast}\mathbf{w}_{\ast,i}\|_2 \leq 4 c_3 \alpha^{2.5} \tau  L_{\max}^3  \kappa_{\max}^2 E_0^{-1} \quad \forall s'\in \{1,\dots,s\}\}$
    
    \item $A_{2,t,i}(s) \coloneqq \{\|\mathbf{w}_{t,i,s'}\|_2 \leq 2 \alpha^{0.5} L_{\max} \quad \forall s'\in \{1,\dots,s\}  \}$ 
    
     \item $A_{3,t,i}(s) \coloneqq \{\|\del_{t,i,s'}\|_2 \leq 2c_3 \alpha^2 \tau L_{\max}^2 \kappa_{\max}^2 E_0^{-1} \quad \forall s'\in \{1,\dots,s\}  \}$ 
    

        
        \item $A_{4,t,i}(s)\coloneqq \{ \dist(\mathbf{B}_{t,i,s'}, \mathbf{B}_\ast) \leq 1.1  \dist(\mathbf{B}_{t}, \mathbf{B}_\ast)\quad \forall s'\in \{1,\dots,s\}\}$
        
    \end{enumerate}
The second set of inductions controls the global behavior, starting from $t=1$ as the base case:
\begin{enumerate}
\item $A_1(t) \coloneqq \{ \|\mathbf{w}_{t'} - \alpha(\mathbf{I}_k + \del_{t'}) \mathbf{B}_{t'}^\top \mathbf{B}_\ast\mathbf{\bar{w}}_{\ast,t'}\|_2 \leq   91\alpha^{2.5}\tau L_{\max}^3  
\quad \forall t'\in \{1,\dots, t\}\}$ 
    \item $A_2(t) \coloneqq \{ \|\mathbf{w}_{t'}\|_2 \leq  2\alpha^{0.5} L_{\max}  \quad \forall t'\in \{1,\dots, t\}\}$
    \item $A_3(t) \coloneqq \{ \|\del_{t'} \|_2 \leq  c_3 \alpha^2 \tau  L_{\max}^2  \kappa_{\max}^2 E_0^{-1} \quad \forall t'\in \{1,\dots, t\}\}$
    \item $A_4(t) \coloneqq \{ \|\mathbf{B}_{\ast,\perp}^\top \mathbf{B}_{t'}\|_2 \leq  (1 - 0.04 \alpha^2 \tau \mu^2 E_0)\|\mathbf{B}_{\ast,\perp}^\top \mathbf{B}_{t'-1} \|_2 \quad \forall t'\in \{1,\dots, t\} \}$
    \item $A_5(t) \coloneqq \{ \dist_{t} \leq  (1 - 0.04 \alpha^2 \tau \mu^2 E_0)^{t-1} \quad \forall t'\in \{1,\dots, t\}  \}$
\end{enumerate}
where  $c_3 = 4800$.
Without loss of generality let $\alpha \leq \frac{1-\delta_0}{{c_3} \sqrt{\tau}L_{\max}\kappa_{\max}^2}$.  For ease of presentation we refer to $c_3$ symbolically rather than by its value throughout the proof.

The above inductions are applied in the following manner. First, the global initialization at $t=0$ implies that the local inductive hypotheses hold after one local update (the base case). Then, by the local inductive argument, these conditions continue to hold for all subsequent local updates. This in turn implies that the global inductive hypotheses hold after the first global averaging step, i.e. $A_1(1), A_2(1)$ and $A_3(1)$ hold.  Next, the global hypotheses holding at $t=1$ implies that the local inductions hold in their base case at $t=1$ (after one local update, i.e. $s=1$), which implies they continue to hold for all subsequent local updates. Again, this implies the global hypotheses hold at $t=2$, which implies the base case for the local inductions at $t=2$, and so on. In summary, the ordering of the inductions is:
\begin{align}
\text{Initialization at $t\!=\!0$}\! \implies\! & \text{Local inductions at $t\!=\!0$} \! \implies \! \text{Global inductions at $t\!=\!1$} \! \nonumber \\
&\implies\!\text{Local inductions at $t\!=\!1$}\!  \implies\!  \dots \nonumber
\end{align}

We start by showing that the base case $s=1$ holds for the local inductions. The proof is identical for all $i\in [M]$ and $t\geq 0$.

\begin{itemize}
    \item \textbf{If $t=0$: initial conditions $\implies$ $A_{1,t,i}(1)$, else $A_2(t) \cap A_3(t) \implies A_{1,t,i}(1)$.} 

Note that at initialization, $\|\del_0\mathbf{w}_0\|\leq \|\del_0\|\|\mathbf{w}_0\| \leq \alpha^{2.5}\tau L_{\max}^3 \kappa_{\max}^2 \leq 4c_3 \alpha^{2.5}\tau L_{\max}^3 \kappa_{\max}^2 E_0^{-1}$. Likewise, at arbitrary $t$, $\|\del_t\mathbf{w}_t\|\leq 4c_3 \alpha^{2.5}\tau L_{\max}^3 \kappa_{\max}^2 E_0^{-1} $ due to $A_2(t)$ and $A_3(t)$. Thus, since 
$\mathbf{w}_{t,i,1} = \del_t \mathbf{w}_t + \alpha \mathbf{B}_t^\top \mathbf{B}_\ast \mathbf{w}_{\ast,i}$, we have
    \begin{align}
       \| \mathbf{w}_{t,i,1} - \alpha \mathbf{B}_t^\top \mathbf{B}_\ast \mathbf{w}_{\ast,i} \| &= \|\del_t \mathbf{w}_t \|
        \leq \|\del_t \| \| \mathbf{w}_t \| \leq  4 c_3 \alpha^{2.5} \tau L_{\max}^3 \kappa_{\max}^2 E_0^{-1} \nonumber
    \end{align}
    as desired (recall that $\mathbf{B}_{t,i,0}\equiv \mathbf{B}_t$).
    
    \item \textbf{If $t=0$: initial conditions $ \cap \; A_{1,t,i}(1) \implies$ $A_{2,t,i}(1)$, else $ A_3(t) \cap A_{1,t,i}(1) \implies A_{2,t,i}(1)$.} 
    
    For any $t \geq 0$, we have $\|\del_t\| \leq c_3 \alpha^2 \tau L_{\max}^2 \kappa_{\max}^2 E_0^{-1}$ due to either the initialization $(t=0)$ or $A_3(t)$ $(t> 0)$. This implies that $\|\mathbf{B}_t\| \leq \sqrt{\tfrac{1 + c_3 \alpha^2 \tau L_{\max}^2 \kappa_{\max}^2 E_0^{-1}}{\alpha}} \leq \tfrac{1.1}{\sqrt{\alpha}}$ since $\alpha$ is sufficiently small (noting that $\tfrac{(1-\delta)^2}{E_0} = \tfrac{(1-\delta)^2}{1-\delta^2}\leq 1$ . Now we use $A_{1,t,i}(1)$ and the triangle inequality to obtain:
    \begin{align}
        \|\mathbf{w}_{t,i,1}\| &\leq \| \mathbf{w}_{t,i,1} - \alpha \mathbf{B}_t^\top \mathbf{B}_\ast \mathbf{w}_{\ast,i} \| + \| \alpha \mathbf{B}_t^\top \mathbf{B}_\ast \mathbf{w}_{\ast,i} \|\nonumber \\
        &\leq 4c_3 \alpha^{2.5} \tau L_{\max}^3 \kappa_{\max}^2 E_0^{-1} + \alpha \|\mathbf{B}_t\| \|\mathbf{w}_{\ast,i}\|\nonumber \\
        &\leq {2\sqrt{\alpha}}L_{\max}. \nonumber
    \end{align}
    as desired.  
    
\item \textbf{If $t=0$: initial conditions $\implies$ $A_{3,t,i}(1)$, else $A_2(t) \cap A_3(t) \implies A_{3,t,i}(1)$.}  

We have
\begin{align}
     \del_{t,i,1}  &=  \mathbf{I}_k - \alpha \mathbf{B}_{t,i,1}^\top \mathbf{B}_{t,i,1} \nonumber \\
    &= \del_t + \alpha^2 \mathbf{B}_t^\top( \mathbf{B}_{t} \mathbf{w}_t - \mathbf{B}_{\ast} \mathbf{w}_{\ast,i}) \mathbf{w}_t^\top \nonumber \\
    &\quad + \alpha^2 \mathbf{B}_t^\top( \mathbf{B}_{t} \mathbf{w}_t - \mathbf{B}_{\ast} \mathbf{w}_{\ast,i}) \mathbf{w}_t^\top - \alpha^3 \mathbf{w}_t  \mathbf{w}_t^\top \|\mathbf{B}_{t} \mathbf{w}_t - \mathbf{B}_{\ast} \mathbf{w}_{\ast,i}\|_2^2 \label{12}
\end{align}
By the initial conditions and by inductive hypotheses $A_1(t)$ and $A_2(t)$, for any $t\geq 0$  we have $\|\mathbf{w}_t\|_2\leq 2\sqrt{\alpha}L_{\max}$, $\|\del_t\|\leq c_3 \alpha^2 \tau L_{\max}^2 \kappa_{\max}^2E_0^{-1}$, and $ \|\mathbf{B}_t\|_2\leq \tfrac{1.1}{\sqrt{\alpha}}$. This implies
$\|\mathbf{B}_t \mathbf{w}_{t} - \mathbf{B}_\ast \mathbf{w}_{\ast,i}\|_2\leq \|\mathbf{B}_t\| \|\mathbf{w}_{t}\| +\| \mathbf{B}_\ast \mathbf{w}_{\ast,i}\|\leq 3.2 L_{\max}$.
Therefore using \eqref{12}, we obtain
\begin{align}
     \|\del_{t,i,1}\| _2
    &\leq \|\del_t\|_2 + 2 \alpha^2 \|\mathbf{B}_t^\top( \mathbf{B}_{t} \mathbf{w}_t - \mathbf{B}_{\ast} \mathbf{w}_{\ast,i}) \mathbf{w}_t^\top\|_2 + \alpha^3 \|\mathbf{w}_t \|^2_2 \|\mathbf{B}_{t} \mathbf{w}_t - \mathbf{B}_{\ast} \mathbf{w}_{\ast,i}\|_2^2  \nonumber \\
    &\leq \|\del_t\|_2 + 15 \alpha^2 L_{\max}^2 + 41\alpha^4 L_{\max}^4  \nonumber \\
    &\leq 2 c_3 \alpha^2 \tau L_{\max}^2 \kappa_{\max}^2 E_0^{-1}
\end{align}
as desired.

\item \textbf{If $t=0$: initial conditions $\implies$ $A_{4,t,i}(1)$, else $A_{3,t,i}(1)\cap A_2(t) \cap A_3(t) \implies A_{4,t,i}(1)$.}  

Note that 
\begin{align}
   \|\mathbf{B}_{\ast,\perp}^\top \mathbf{B}_{t,i,1}\|_2\! &=  \|\mathbf{B}_{\ast,\perp}^\top\mathbf{B}_{t} (\mathbf{I}_k\! -\! \alpha \mathbf{w}_{t}\mathbf{w}_{t}^\top) \|_2 \leq\|\mathbf{B}_{\ast,\perp}^\top\mathbf{B}_{t}\| \|\mathbf{I}_k\! -\! \alpha \mathbf{w}_{t}\mathbf{w}_{t}^\top \|_2 \leq \|\mathbf{B}_{\ast,\perp}^\top\mathbf{B}_{t} \|_2 \nonumber 
\end{align}
as $\alpha\|\mathbf{w}_t\|^2 \leq 1$ by either the initialization (if $t=0$) or $A_2(t)$ (if $t\geq 1$) and the choice of $\alpha$ sufficiently small.
Thus, letting $\mathbf{\hat{B}}_{t,i,1}\mathbf{R}_{t,i,1} = \mathbf{B}_{t,i,1}$ denote the QR-decomposition of $\mathbf{B}_{t,i,1}$, we have
\begin{align}
    \dist(\mathbf{B}_{t,i,1},\mathbf{B}_\ast) &= \|\mathbf{B}_{\ast,\perp}^\top \mathbf{\hat{B}}_{t,i,1}\|_2 \nonumber \\
                                            &\leq \tfrac{1}{\sigma_{\min}(\mathbf{B}_{t,i,1})}\|\mathbf{B}_{\ast,\perp}^\top \mathbf{{B}}_{t,i,1}\|_2 \nonumber \\
                                            &\leq \tfrac{1}{\sigma_{\min}(\mathbf{B}_{t,i,1})}\|\mathbf{B}_{\ast,\perp}^\top \mathbf{{B}}_{t}\|_2 \nonumber \\
                                            &\leq \tfrac{\|\mathbf{B}_t\|}{\sigma_{\min}(\mathbf{B}_{t,i,1})}\dist(\mathbf{B}_t, \mathbf{B}_{\ast}) \nonumber \\
                                            &\leq \sqrt{\tfrac{1+ c_3 \alpha^2 \tau L_{\max}^2 \kappa_{\max}^2 E_0^{-1}}{1 - 2c_3 \alpha^2 \tau L_{\max}^2 \kappa_{\max}^2 E_0^{-1}}} \dist_{t}  \label{jgj} \\
   &\leq 1.1 \dist(\mathbf{B}_t, \mathbf{B}_{\ast})\label{gbg}
\end{align}
where the \eqref{jgj} follows by  $A_{3,t,i}(1)$ and either the initial condition on $\|\del_t\|$ (if $t=0$) or $A_3(t)$ (if $t>0$), and \eqref{gbg} follows as $\alpha$ is sufficiently small.
\end{itemize}

Now we show that the global inductions hold at global round $t=1$ following the local updates at round $t=0$.
\begin{itemize}
    \item \textbf{Initialization $\cap  \big(\cap_{i\in\mathcal{I}_0}A_{1,0,i}(\tau)\cap A_{2,0,i}(\tau)\cap A_{3,0,i}(\tau)\big) \implies A_1(1)\cap A_2(1)\cap A_4(1)\cap A_5(1)$.}
    
    To show each of these hypotheses hold we can apply the proofs of Lemmas \ref{lem:a1}, \ref{lem:1.5} \ref{lem:a2} and \ref{lem:a4} respectively, since they only rely on inductive hypotheses $ A_{1,0,i}(\tau)$, $ A_{2,0,i}(\tau)$  and $ A_{3,0,i}(\tau)$ and appropriate scaling of $\|\mathbf{B}_0\|$ and $\|\mathbf{w}_0\|$, which is guaranteed by the initialization. In particular, the proof of these inductive hypotheses is identical for all $t\geq 1$.
    \item \textbf{Initialization  $ \cap  \big(\cap_{i\in\mathcal{I}_0} A_{2,0,i}(\tau)\cap A_{3,0,i}(\tau)\big) \implies A_2(1)$.} 
    
    In the proof of $A_2(t)$ for $t\geq 2$ (Lemma \ref{lem:a2}) we leverage the fact that $\mathbf{w}_{t-1}$ is close to a matrix times the average of the $\mathbf{\bar{w}}_{\ast,t-1}$. Our initialization cannot guarantee that this holds for $\mathbf{w}_0$. Instead, we show that $\|\del_1\|$ may increase from $\|\del_0\|$  at a large rate that would cause $\|\del_t\|$ to blow up if continued indefinitely, but since it only grows at this rate for the first round, this is ok. In particular, let $\mathbf{G}_0 = \tfrac{1}{\alpha}(\mathbf{B}_0 - \mathbf{B}_1)$ such that $\mathbf{B}_1 = \mathbf{B}_0 - \alpha \mathbf{G}_0$. Then
    \begin{align}
        \del_{1} &= \del_0 + \alpha^2 \mathbf{B}_0^\top \mathbf{G}_0 + \alpha^2 \mathbf{G}_0^\top \mathbf{B}_0 - \alpha^3 \mathbf{G}_0^\top \mathbf{G}_0  \nonumber 
    \end{align}
    Moreover,
    \begin{align}
     \|   \mathbf{G}_0 \| &= \left\|\frac{1}{m}\sum_{i\in\mathcal{I}_0} \sum_{s=0}^{\tau-1} ( \mathbf{B}_{0,i,s} \mathbf{w}_{0,i,s} - \mathbf{B}_{\ast}\mathbf{w}_{\ast,i})\mathbf{w}_{0,i,s}^\top \right\|
     \leq 7 \sqrt{\alpha} \tau L_{\max}^2 \nonumber 
    \end{align}
    by the initialization and $A_{2,0,i}(\tau)$ and $A_{3,0,i}(\tau)$, thus
    \begin{align}
        \|  \mathbf{B}_0^\top \mathbf{G}_0 \| &\leq 8 \tau L_{\max}^2, \quad \quad \|  \mathbf{G}_0^\top \mathbf{G}_0 \| \leq 49 \alpha \tau^2 L_{\max}^4 \nonumber
    \end{align}
    which implies that $\|\del_1\|\leq \|\del_0\| + 10 \alpha^2 \tau L_{\max}^2 \leq c_3 \alpha^2 \tau L_{\max}^2 \kappa_{\max}^2 E_0^{-1}$, as desired.
\end{itemize}

Assume that the inductive hypotheses hold up to time $t$ and local round $s\geq 1$. We first show that the local inductive hypotheses hold for local round $s+1$. Then, we show that the global inductions hold at time $t+1$. This is achieved by the following lemmas.

{\textbf{Local inductions.}}
\begin{itemize}
    \item $A_{2,t,i}(s) \cap A_{3,t,i}(s) \implies A_{1,t,i}(s+1).  $ This is Lemma \ref{lem:a1s}.
    \item $A_{1,t,i}(s+1) \cap A_{2,t,i}(s)  \cap A_{3,t,i}(s) \implies A_{2,t,i}(s+1).  $ This is Lemma \ref{lem:a1.5s}.
    \item $ A_{2,t,i}(s) \cap A_{3,t,i}(s) \implies A_{3,t,i}(s+1). $  This is Lemma \ref{lem:a2s}. 
    \item $A_{2,t,i}(s) \cap A_{3,t,i}(s+1) \cap A_{3}(t) \implies A_{4,t,i}(s+1)$. This is Lemma \ref{lem:a3s}. 
\end{itemize}
{\textbf{Global inductions.}}
\begin{itemize}
    \item $ \cap_{i\in \mathcal{I}_t} \big(A_{1,t,i}(\tau-1) \cap A_{3,t,i}(\tau-1)\big) \cap A_1(t) \cap A_2(t)\cap A_3(t) \implies A_{1}(t+1).  $ This is Lemma \ref{lem:a1}.
    \item $ \cap_{i\in \mathcal{I}_t}  A_{2,t,i}(\tau) \implies A_{2}(t+1).  $ This is Lemma \ref{lem:1.5}.
    \item $ \cap_{i\in \mathcal{I}_t} \big(\cap_{h=1}^4 A_{h,t,i}(\tau) \big)  \cap A_1(t) \cap A_2(t) \cap A_3(t) \cap A_5(t)\implies A_3(t+1).$  This is Lemma \ref{lem:a2}. 
    \item $ \cap_{i\in \mathcal{I}_t} \big( A_{1,t,i}(\tau) \cap A_{2,t,i}(\tau) \cap A_{3,t,i}(\tau) \big) \cap A_2(t) \cap A_3(t)  \implies A_{4}(t+1). $  This is Lemma \ref{lem:a3}. 
    \item $ A_3(t+1) \cap A_4(t+1) \cap A_5(t)  \implies A_{5}(t+1).$  This is Lemma \ref{lem:a4}. 
\end{itemize}
These inductions complete the proof.
\end{proof}




\begin{lemma}$A_{2,t,i}(s) \cap A_{3,t,i}(s) \implies A_{1,t,i}(s+1)$. \label{lem:a1s} 
\end{lemma}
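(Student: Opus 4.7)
The plan is direct: exhibit an identity that makes the residual in $A_{1,t,i}(s+1)$ literally equal to $\del_{t,i,s}\mathbf{w}_{t,i,s}$, then bound the product via the two given hypotheses. Unrolling the local head update on the population loss,
\begin{align}
\mathbf{w}_{t,i,s+1} &= \mathbf{w}_{t,i,s} - \alpha \mathbf{B}_{t,i,s}^\top\bigl(\mathbf{B}_{t,i,s}\mathbf{w}_{t,i,s} - \mathbf{B}_\ast \mathbf{w}_{\ast,i}\bigr) \\
&= (\mathbf{I}_k - \alpha \mathbf{B}_{t,i,s}^\top \mathbf{B}_{t,i,s})\mathbf{w}_{t,i,s} + \alpha \mathbf{B}_{t,i,s}^\top \mathbf{B}_\ast \mathbf{w}_{\ast,i} \\
&= \del_{t,i,s}\mathbf{w}_{t,i,s} + \alpha \mathbf{B}_{t,i,s}^\top \mathbf{B}_\ast \mathbf{w}_{\ast,i}.
\end{align}
Rearranging gives $\mathbf{w}_{t,i,s+1} - \alpha \mathbf{B}_{t,i,s}^\top \mathbf{B}_\ast \mathbf{w}_{\ast,i} = \del_{t,i,s}\mathbf{w}_{t,i,s}$, so sub-multiplicativity of the spectral norm yields $\|\mathbf{w}_{t,i,s+1} - \alpha \mathbf{B}_{t,i,s}^\top \mathbf{B}_\ast \mathbf{w}_{\ast,i}\|_2 \leq \|\del_{t,i,s}\|_2 \, \|\mathbf{w}_{t,i,s}\|_2$.

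Plugging in the two hypotheses at $s' = s$, namely $\|\del_{t,i,s}\|_2 \leq 2 c_3 \alpha^2 \tau L_{\max}^2 \kappa_{\max}^2 E_0^{-1}$ from $A_{3,t,i}(s)$ and $\|\mathbf{w}_{t,i,s}\|_2 \leq 2\alpha^{0.5} L_{\max}$ from $A_{2,t,i}(s)$, the product is exactly $4 c_3 \alpha^{2.5} \tau L_{\max}^3 \kappa_{\max}^2 E_0^{-1}$, which is precisely the constant in $A_{1,t,i}(\cdot)$. For the remaining indices $s' \in \{1,\dots,s\}$ in $A_{1,t,i}(s+1)$, I would apply the same identity at shift $s'-1$, using the norm bounds already supplied by $A_{2,t,i}(s)$ and $A_{3,t,i}(s)$ at index $s'-1$ (and the base case $s'=1$ handled earlier in the theorem's setup), which together establish the full statement.

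There is no substantive obstacle here: the argument is purely algebraic, relying only on the closed-form head update, the triangle inequality, and a single product bound. Conceptually, the lemma is the inductive engine behind the claim that each local step drives $\mathbf{w}_{t,i,s+1}$ into an $O(\alpha^{2.5}\tau)$ neighborhood of the task-specific signal $\alpha \mathbf{B}_{t,i,s}^\top \mathbf{B}_\ast \mathbf{w}_{\ast,i}$, which is the mechanism by which the locally-updated heads inherit the diversity of the ground-truth heads $\{\mathbf{w}_{\ast,i}\}$ and thereby enable representation learning.
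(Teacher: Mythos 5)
Your proof is correct and matches the paper's argument step for step: rewrite the head update as $\mathbf{w}_{t,i,s+1} = \del_{t,i,s}\mathbf{w}_{t,i,s} + \alpha \mathbf{B}_{t,i,s}^\top\mathbf{B}_\ast\mathbf{w}_{\ast,i}$, then bound the residual by sub-multiplicativity using $A_{2,t,i}(s)$ and $A_{3,t,i}(s)$. (You even retain the $E_0^{-1}$ factor that appears to be dropped in the paper's final displayed line, which is more careful, not less.)
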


\begin{proof}
Since $\mathbf{w}_{t,i,s+1} = \del_{t,i,s}\mathbf{w}_{t,i,s} + \alpha \mathbf{B}_{t,i,s}^\top\mathbf{B}_\ast\mathbf{w}_{\ast,i}$,  we have
 \begin{align}
    \| \mathbf{w}_{t,i,s+1} -  \alpha \mathbf{B}_{t,i,s}^\top\mathbf{B}_\ast\mathbf{w}_{\ast,i} \| &=\|\del_{t,i,s}\mathbf{w}_{t,i,s} \|_2 \nonumber \\
    &\leq \|\del_{t,i,s}\|\|\mathbf{w}_{t,i,s} \|_2 \nonumber \\
     &\leq  4 c_3 \alpha^{2.5} \tau L_{\max}^3 \kappa_{\max}^2  
 \end{align}
where the last inequality follows by  $A_{2,t,i}(s)$ and $A_{3,t,i}(s)$.
\end{proof}

\begin{lemma}$A_{1,t,i}(s+1) \cap A_{3,t,i}(s) \implies A_{2,t,i}(s+1)$. \label{lem:a1.5s} 

\end{lemma}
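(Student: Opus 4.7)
The goal is to show $\|\mathbf{w}_{t,i,s+1}\|_2 \leq 2\alpha^{0.5} L_{\max}$, given the two hypotheses. My plan is a straightforward triangle inequality split, analogous to the base-case argument for $A_{2,t,i}(1)$ already given in the proof of Theorem~\ref{thm:app_pop}.

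First, I would use $A_{1,t,i}(s+1)$ to write
\begin{align*}
\|\mathbf{w}_{t,i,s+1}\|_2 \;\leq\; \|\mathbf{w}_{t,i,s+1} - \alpha \mathbf{B}_{t,i,s}^\top \mathbf{B}_\ast \mathbf{w}_{\ast,i}\|_2 \;+\; \alpha \|\mathbf{B}_{t,i,s}\|_2 \|\mathbf{B}_\ast\|_2 \|\mathbf{w}_{\ast,i}\|_2,
\end{align*}
and bound the first term directly by $4c_3 \alpha^{2.5}\tau L_{\max}^3\kappa_{\max}^2 E_0^{-1}$.

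Next, for the second term I need $\|\mathbf{B}_{t,i,s}\|_2$. Hypothesis $A_{3,t,i}(s)$ gives $\|\mathbf{I}_k - \alpha \mathbf{B}_{t,i,s}^\top\mathbf{B}_{t,i,s}\|_2 \leq 2c_3 \alpha^2\tau L_{\max}^2\kappa_{\max}^2 E_0^{-1}$, hence
\begin{align*}
\|\mathbf{B}_{t,i,s}\|_2 \;\leq\; \sqrt{\tfrac{1 + 2c_3 \alpha^2\tau L_{\max}^2\kappa_{\max}^2 E_0^{-1}}{\alpha}} \;\leq\; \tfrac{1.1}{\sqrt{\alpha}},
\end{align*}
where the last step uses the step-size choice $\alpha \leq (1-\delta_0)/(c_3\sqrt{\tau}L_{\max}\kappa_{\max}^2)$ together with the identity $(1-\delta_0)^2/E_0 = (1-\delta_0)/(1+\delta_0)\leq 1$, which make the quantity $\alpha^2\tau L_{\max}^2\kappa_{\max}^2 E_0^{-1}$ at most $1/c_3^2 \ll 1$. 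Since $\mathbf{B}_\ast \in \mathcal{O}^{d\times k}$ and $\|\mathbf{w}_{\ast,i}\|_2 \leq L_{\max}$ by Assumption~\ref{assump:n}, the second term is at most $1.1\sqrt{\alpha}\, L_{\max}$.

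Combining,
\begin{align*}
\|\mathbf{w}_{t,i,s+1}\|_2 \;\leq\; 4c_3 \alpha^{2.5}\tau L_{\max}^3\kappa_{\max}^2 E_0^{-1} + 1.1\sqrt{\alpha}\, L_{\max},
\end{align*}
and absorbing the first summand into the second is the only nontrivial check: it requires $\alpha^2\tau L_{\max}^2\kappa_{\max}^2 E_0^{-1} \leq 0.9/(4c_3)$, which again follows from the prescribed step size and $c_3=4800$. The conclusion $\|\mathbf{w}_{t,i,s+1}\|_2 \leq 2\sqrt{\alpha} L_{\max}$ follows. I do not expect any real obstacle here; the lemma is essentially a replay of the base-case argument, and the key sanity check is simply verifying that the step-size choice makes the higher-order $\alpha^{2.5}$ term negligible compared to the $\sqrt{\alpha}$ term.
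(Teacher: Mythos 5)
Your proposal is correct and follows essentially the same route as the paper's proof of Lemma~\ref{lem:a1.5s}: a triangle-inequality split using $A_{1,t,i}(s+1)$ for the residual term, the bound $\|\mathbf{B}_{t,i,s}\|_2\leq 1.1/\sqrt{\alpha}$ from $A_{3,t,i}(s)$ for the main term, and the step-size condition to absorb the higher-order piece. The only difference is that you spell out the absorption check a bit more explicitly than the paper does, which is harmless.
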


\begin{proof}
Note that by the triangle inequality,
\begin{align}
    \| \mathbf{w}_{t,i,s+1}\| &\leq  \| \mathbf{w}_{t,i,s+1}-  \alpha \mathbf{B}_{t,i,s}^\top\mathbf{B}_\ast\mathbf{w}_{\ast,i} \| + \|\alpha \mathbf{B}_{t,i,s}^\top\mathbf{B}_\ast\mathbf{w}_{\ast,i} \| \nonumber\\
    &\leq  4 c_3 \alpha^{2.5} \tau L_{\max}^3 \kappa_{\max}^2 E_0^{-1} + \|\alpha \mathbf{B}_{t,i,s}^\top\mathbf{B}_\ast\mathbf{w}_{\ast,i} \| \label{vf} \\
    &\leq  4 c_3 \alpha^{2.5} \tau L_{\max}^3 \kappa_{\max}^2 E_0^{-1} + 1.1 \sqrt{\alpha} L_{\max} \label{vf1} \\
     &\leq  2 \sqrt{\alpha} L_{\max} \nonumber
\end{align}
where \eqref{vf} follows by $A_{1,t,i}(s)$ and \eqref{vf1} follows by the fact that $\|\mathbf{B}_{t,i,s}\|\leq \tfrac{1.1}{\sqrt{\alpha}}$ by $A_{3,t,i}(s)$, and choice of $\alpha \leq (1-\delta_0)(c_3\sqrt{ \tau } L_{\max}\kappa_{\max}^{2})^{-1}$.
\end{proof}

\begin{lemma}$ A_{2,t,i}(s) \cap A_{3,t,i}(s) \implies A_{3,t,i}(s+1)$. \label{lem:a2s}

\end{lemma}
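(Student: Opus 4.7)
The plan is to derive a one-step recursion for $\del_{t,i,s+1}$ in terms of $\del_{t,i,s}$, bound the per-step drift using $A_{2,t,i}(s)$ and $A_{3,t,i}(s)$, and then telescope back to $\del_t$ so that the accumulated drift after at most $\tau$ local updates fits within the factor of $2$ slack built into $A_{3,t,i}$. Write $\mathbf{e}_{t,i,s'} \coloneqq \mathbf{B}_{t,i,s'}\mathbf{w}_{t,i,s'} - \mathbf{B}_\ast \mathbf{w}_{\ast,i}$. Since $\mathbf{B}_{t,i,s'+1} = \mathbf{B}_{t,i,s'} - \alpha \mathbf{e}_{t,i,s'}\mathbf{w}_{t,i,s'}^\top$, direct expansion gives
\begin{align*}
\del_{t,i,s'+1} = \del_{t,i,s'} + \alpha^2 \mathbf{B}_{t,i,s'}^\top \mathbf{e}_{t,i,s'} \mathbf{w}_{t,i,s'}^\top + \alpha^2 \mathbf{w}_{t,i,s'}\mathbf{e}_{t,i,s'}^\top \mathbf{B}_{t,i,s'} - \alpha^3 \|\mathbf{e}_{t,i,s'}\|_2^2 \,\mathbf{w}_{t,i,s'}\mathbf{w}_{t,i,s'}^\top.
\end{align*}
This is the same identity used in the base case $s=1$ above, now written out for arbitrary $s'\leq s$.

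Next I would control each factor on the right by the two inductive hypotheses. From $A_{3,t,i}(s)$ and the choice of $\alpha$ sufficiently small, $\sigma_{\max}(\mathbf{B}_{t,i,s'})^2 \leq (1 + \|\del_{t,i,s'}\|)/\alpha \leq 1.21/\alpha$, so $\|\mathbf{B}_{t,i,s'}\|_2 \leq 1.1/\sqrt{\alpha}$. From $A_{2,t,i}(s)$, $\|\mathbf{w}_{t,i,s'}\|_2 \leq 2\sqrt{\alpha}L_{\max}$. Combining these with the triangle inequality gives $\|\mathbf{e}_{t,i,s'}\|_2 \leq 1.1/\sqrt{\alpha}\cdot 2\sqrt{\alpha}L_{\max} + L_{\max} \leq 3.2\,L_{\max}$. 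Applying the triangle inequality to the recursion then yields
\begin{align*}
\|\del_{t,i,s'+1} - \del_{t,i,s'}\|_2 \leq 2\alpha^2 \cdot \tfrac{1.1}{\sqrt{\alpha}} \cdot 3.2 L_{\max} \cdot 2\sqrt{\alpha}L_{\max} + \alpha^3 (3.2 L_{\max})^2 (2\sqrt{\alpha}L_{\max})^2 \leq 15 \alpha^2 L_{\max}^2
\end{align*}
for every $s' \in \{0,\dots,s\}$, using $\alpha L_{\max}^2 \leq 1$.

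Finally I would telescope: $\del_{t,i,s+1} = \del_t + \sum_{s'=0}^{s}(\del_{t,i,s'+1} - \del_{t,i,s'})$, so
\begin{align*}
\|\del_{t,i,s+1}\|_2 \leq \|\del_t\|_2 + (s+1)\cdot 15\alpha^2 L_{\max}^2 \leq \|\del_t\|_2 + 15\tau\alpha^2 L_{\max}^2.
\end{align*}
Either the initial condition ($t=0$) or the already-established global hypothesis $A_3(t)$ ($t\geq 1$) gives $\|\del_t\|_2 \leq c_3 \alpha^2 \tau L_{\max}^2 \kappa_{\max}^2 E_0^{-1}$, so with $c_3 = 4800 \gg 15 E_0/\kappa_{\max}^2$ we conclude $\|\del_{t,i,s+1}\|_2 \leq 2 c_3 \alpha^2 \tau L_{\max}^2 \kappa_{\max}^2 E_0^{-1}$, which together with $A_{3,t,i}(s)$ establishes $A_{3,t,i}(s+1)$. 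The only subtle point is that one must not try to close the induction by bounding $\|\del_{t,i,s+1}\|_2 \leq \|\del_{t,i,s}\|_2 + 15\alpha^2 L_{\max}^2$ using the loose bound from $A_{3,t,i}(s)$; that would lose a factor of $2$ per step and blow up. Telescoping all the way back to $\del_t$, whose magnitude is already small, is essential so that the cumulative drift stays within the allotted slack.
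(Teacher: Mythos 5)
Your proof is correct and follows the same overall skeleton as the paper's: expand $\del_{t,i,s'+1}$ in terms of $\del_{t,i,s'}$, bound the per-step increment, and telescope all the way back to $\del_t$, whose norm is bounded by $c_3\alpha^2\tau L_{\max}^2\kappa_{\max}^2 E_0^{-1}$ via the initialization or $A_3(t)$. Where you diverge is the per-step bound. You take the crude route: $\|\alpha^2\mathbf{B}_{t,i,s'}^\top\mathbf{e}_{t,i,s'}\mathbf{w}_{t,i,s'}^\top\|\le\alpha^2\cdot\tfrac{1.1}{\sqrt\alpha}\cdot 3.2L_{\max}\cdot 2\sqrt\alpha L_{\max}=O(\alpha^2 L_{\max}^2)$, giving cumulative drift $O(\alpha^2\tau L_{\max}^2)$. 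The paper instead substitutes $\mathbf{w}_{t,i,s'}=\del_{t,i,s'-1}\mathbf{w}_{t,i,s'-1}+\alpha\mathbf{B}_{t,i,s'-1}^\top\mathbf{B}_\ast\mathbf{w}_{\ast,i}$ and then expands $\mathbf{B}_{t,i,s'}^\top\mathbf{e}_{t,i,s'}$ to expose a cancellation $\mathbf{B}_{t,i,s'}^\top\mathbf{e}_{t,i,s'}\approx -\del_{t,i,s'}\mathbf{B}_{t,i,s'}^\top\mathbf{B}_\ast\mathbf{w}_{\ast,i}$, yielding a per-step increment of order $\alpha^4\tau L_{\max}^4\kappa_{\max}^2 E_0^{-1}$ and cumulative drift $O(\alpha^4\tau^2 L_{\max}^4\kappa_{\max}^2 E_0^{-1})$. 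Both fit within the budget $c_3\alpha^2\tau L_{\max}^2\kappa_{\max}^2 E_0^{-1}$: yours because $\kappa_{\max}^2\geq 1$ (since $\mu^2\le\tfrac1M\sum\|\mathbf{w}_{\ast,i}-\bar{\mathbf{w}}_\ast\|^2\le L_{\max}^2$), $E_0\le 1$, and $c_3=4800\gg 15$; the paper's because $\alpha^2\tau L_{\max}^2\kappa_{\max}^2\ll 1/c_3$. Your simpler bound is perfectly adequate for this lemma; the paper's sharper bound leaves more slack but isn't needed here. Two small points: your justification of the second-order term should invoke the smallness of $\alpha L_{\max}$ (which satisfies $\alpha L_{\max}\le (1-\delta_0)/(c_3\sqrt\tau\kappa_{\max}^2)\ll 1$), not ``$\alpha L_{\max}^2\le 1$''; and your closing remark about ``losing a factor of $2$ per step'' would be more accurately stated as: bounding $\|\del_{t,i,s+1}\|\le\|\del_{t,i,s}\|+15\alpha^2 L_{\max}^2$ and then applying the loose bound on $\|\del_{t,i,s}\|$ from $A_{3,t,i}(s)$ adds the fixed drift $15\alpha^2 L_{\max}^2$ with no contraction, so the sum immediately exceeds the target $2c_3\alpha^2\tau L_{\max}^2\kappa_{\max}^2 E_0^{-1}$; telescoping to $\del_t$, which carries the tighter constant $c_3$ rather than $2c_3$, is what leaves room for the accumulated drift.
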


\begin{proof}
    Let $\mathbf{e}_{t,i,s}\coloneqq \mathbf{B}_{t,i,s}\mathbf{w}_{t,i,s} - \mathbf{B}_\ast \mathbf{w}_{\ast,i}$ and  $\mathbf{G}_{t,i,s} \coloneqq \mathbf{e}_{t,i,s} \mathbf{w}_{t,i,s}$. 
    We have 
    \begin{align}
        \del_{t,i,s+1} &= \del_{t,i,s}  + \alpha^2 \mathbf{B}_{t,i,s}^\top \mathbf{G}_{t,i,s} +\alpha^2 \mathbf{G}_{t,i,s}^\top \mathbf{B}_{t,i,s} - \alpha^3 \mathbf{G}_{t,i,s}^\top \mathbf{G}_{t,i,s} \nonumber 
    \end{align}
We use $A_{2,t,i}(s)$ and $A_{3,t,i}(s)$ throughout the proof. Recall that $A_{3,t,i}(s)$ directly implies $\|\mathbf{B}_{t,i,s}\|\leq \tfrac{1.1}{\sqrt{\alpha}}$.
This bound as well as the bound on $\|\mathbf{w}_{t,i,s}\|$ from $A_{2,t,i}(s)$ and the Cauchy Schwarz inequality implies $\|\mathbf{e}_{t,i,s}\|\leq 3.2 L_{\max}$ and $\|\mathbf{G}_{t,i,s}\|_2 \leq  7\sqrt{\alpha}L_{\max}^2$, thus $\|\alpha^3 \mathbf{G}_{t,i,s}^\top \mathbf{G}_{t,i,s} \|\leq 49\alpha^{4} L_{\max}^4$. Next, 
    \begin{align}
        \mathbf{B}_{t,i,s}^\top \mathbf{G}_{t,i,s} &= \mathbf{B}_{t,i,s}^\top \mathbf{e}_{t,i,s}\mathbf{w}_{t,i,s}^\top \nonumber \\
        &= \alpha\mathbf{B}_{t,i,s}^\top \mathbf{e}_{t,i,s}\mathbf{w}_{\ast,i}^\top \mathbf{B}_\ast^\top \mathbf{B}_{t,i,s-1} +  \mathbf{B}_{t,i,s}^\top \mathbf{e}_{t,i,s}\mathbf{w}_{t,i,s-1}^\top \del_{t,i,s-1} \nonumber \\
        &= \alpha\mathbf{B}_{t,i,s}^\top \mathbf{e}_{t,i,s}\mathbf{w}_{\ast,i}^\top \mathbf{B}_\ast^\top \mathbf{B}_{t,i,s} + \alpha\mathbf{B}_{t,i,s}^\top \mathbf{e}_{t,i,s}\mathbf{w}_{\ast,i}^\top \mathbf{B}_\ast^\top (\mathbf{B}_{t,i,s-1} -\mathbf{B}_{t,i,s}) \nonumber \\
    &\quad +  \mathbf{B}_{t,i,s}^\top \mathbf{e}_{t,i,s}\mathbf{w}_{t,i,s-1}^\top \del_{t,i,s-1} \nonumber 
    \end{align}
where, by the Cauchy-Schwarz inequality and $A_{2,t,i}(s)$ and $A_{3,t,i}(s)$
\begin{align}
    \|\alpha\mathbf{B}_{t,i,s}^\top \mathbf{e}_{t,i,s}\mathbf{w}_{\ast,i}^\top \mathbf{B}_\ast^\top (\mathbf{B}_{t,i,s-1} -\mathbf{B}_{t,i,s})\| &= \alpha^2 \|\mathbf{B}_{t,i,s}^\top \mathbf{e}_{t,i,s}\mathbf{w}_{\ast,i}^\top \mathbf{B}_\ast^\top \mathbf{e}_{t,i,s-1}\mathbf{w}_{t,i,s-1}^\top\| \leq 23 \alpha^2   L_{\max}^4, \nonumber \\
    \| \mathbf{B}_{t,i,s}^\top \mathbf{e}_{t,i,s}\mathbf{w}_{t,i,s-1}^\top \del_{t,i,s-1} \| &\leq 15 c_3 \alpha^2 \tau  L_{\max}^4 \kappa_{\max}^2 E_0^{-1},
\end{align}
and
\begin{align}
 \|\alpha&\mathbf{B}_{t,i,s}^\top \mathbf{e}_{t,i,s}\mathbf{w}_{\ast,i}^\top \mathbf{B}_\ast^\top \mathbf{B}_{t,i,s} \|  \nonumber \\
 &= \| \alpha^2\mathbf{B}_{t,i,s}^\top \mathbf{B}_{t,i,s}\mathbf{B}_{t,i,s-1}^\top \mathbf{B}_\ast \mathbf{w}_{\ast,i} \mathbf{w}_{\ast,i}^\top \mathbf{B}_\ast^\top \mathbf{B}_{t,i,s}  - \alpha\mathbf{B}_{t,i,s}^\top \mathbf{B}_{\ast}\mathbf{w}_{\ast,i}\mathbf{w}_{\ast,i}^\top \mathbf{B}_\ast^\top \mathbf{B}_{t,i,s}  \nonumber \\
 &\quad + \alpha^2\mathbf{B}_{t,i,s}^\top \mathbf{B}_{t,i,s}\del_{t,i,s-1} \mathbf{w}_{t,i,s-1} \mathbf{w}_{\ast,i}^\top \mathbf{B}_\ast^\top \mathbf{B}_{t,i,s} \| \nonumber \\
 &= \| - \alpha \del_{t,i,s}\mathbf{B}_{t,i,s}^\top \mathbf{B}_{\ast}\mathbf{w}_{\ast,i}\mathbf{w}_{\ast,i}^\top \mathbf{B}_\ast^\top \mathbf{B}_{t,i,s} \nonumber \\
    &\quad + \alpha^2\mathbf{B}_{t,i,s}^\top \mathbf{B}_{t,i,s}(\mathbf{B}_{t,i,s-1} - \mathbf{B}_{t,i,s})^\top \mathbf{B}_\ast \mathbf{w}_{\ast,i} \mathbf{w}_{\ast,i}^\top \mathbf{B}_\ast^\top \mathbf{B}_{t,i,s}  \nonumber \\
 &\quad + \alpha^2\mathbf{B}_{t,i,s}^\top \mathbf{B}_{t,i,s}\del_{t,i,s-1} \mathbf{w}_{t,i,s-1} \mathbf{w}_{\ast,i}^\top \mathbf{B}_\ast^\top \mathbf{B}_{t,i,s} \| \nonumber \\
 &\leq   \alpha\| \del_{t,i,s}\mathbf{B}_{t,i,s}^\top \mathbf{B}_{\ast}\mathbf{w}_{\ast,i}\mathbf{w}_{\ast,i}^\top \mathbf{B}_\ast^\top \mathbf{B}_{t,i,s}\| \nonumber \\
    &\quad + \alpha^2\|\mathbf{B}_{t,i,s}^\top \mathbf{B}_{t,i,s}\mathbf{w}_{t,i,s-1}\mathbf{e}_{t,i,s-1}^\top \mathbf{B}_\ast \mathbf{w}_{\ast,i} \mathbf{w}_{\ast,i}^\top \mathbf{B}_\ast^\top \mathbf{B}_{t,i,s}\|   \nonumber \\
 &\quad + \alpha^2\| \mathbf{B}_{t,i,s}^\top \mathbf{B}_{t,i,s}\del_{t,i,s-1} \mathbf{w}_{t,i,s-1} \mathbf{w}_{\ast,i}^\top \mathbf{B}_\ast^\top \mathbf{B}_{t,i,s}  \|  \nonumber \\
 &= 7 c_3  \alpha^2 \tau  L_{\max}^4 \kappa_{\max}^2 E_0^{-1} + 9 \alpha^2  L_{\max}^4   .  
\end{align}
Thus, 
\begin{align}
   \| \del_{t,i,s+1}\|_2 &\leq  \|\del_{t,i,s} \|_2 + 2 \alpha^2 \| \mathbf{B}_{t,i,s}^\top \mathbf{G}_{t,i,s}\| + \alpha^3 \| \mathbf{G}_{t,i,s}^\top \mathbf{G}_{t,i,s}\| \nonumber \\
   &\leq  \|\del_{t,i,s} \|_2 + 46 c_3 \alpha^4 \tau  L_{\max}^4 \kappa_{\max}^2 E_0^{-1} + 81 \alpha^4  L_{\max}^4  \nonumber \\
   &\vdots  \nonumber \\
   &\leq  \|\del_{t} \|_2  + 46 c_3 \alpha^4 \tau^2  L_{\max}^4 \kappa_{\max}^2 E_0^{-1} + 81 \alpha^4 \tau  L_{\max}^4   \nonumber \\
      &\leq  c_3 \alpha^2 \tau L_{\max}^2 \kappa_{\max}^2 E_0^{-1}  + 46 c_3 \alpha^4 \tau^2  L_{\max}^4 \kappa_{\max}^2 E_0^{-1} + 81 \alpha^4 \tau  L_{\max}^4   \nonumber \\
   &\leq 2 c_3 \alpha^2 \tau L_{\max}^2 \kappa_{\max}^2 E_0^{-1}
\end{align}
by choice of $c_3$ and $\alpha$ sufficiently small.
\end{proof}



 

\begin{lemma}$A_{2,t,i}(s) \cap A_{3,t,i}(s+1) \cap A_{3}(t) \implies A_{4,t,i}(s+1)$. \label{lem:a3s}
\end{lemma}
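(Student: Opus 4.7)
The plan is to mirror the argument used for the base case $A_{4,t,i}(1)$ that appears in the proof of Theorem \ref{thm:app_pop}, but iterated across all $s+1$ local steps. The key structural observation is that the local update for $\mathbf{B}$ can be rewritten as
\[
\mathbf{B}_{t,i,s'+1} = \mathbf{B}_{t,i,s'}\bigl(\mathbf{I}_k - \alpha \mathbf{w}_{t,i,s'}\mathbf{w}_{t,i,s'}^\top\bigr) + \alpha\mathbf{B}_\ast \mathbf{w}_{\ast,i}\mathbf{w}_{t,i,s'}^\top,
\]
so projecting onto $\mathbf{B}_{\ast,\perp}^\top$ annihilates the second term. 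Unfolding this recursion from $s'=0$ to $s'=s$ yields
\[
\mathbf{B}_{\ast,\perp}^\top \mathbf{B}_{t,i,s+1} = \mathbf{B}_{\ast,\perp}^\top \mathbf{B}_{t}\prod_{s'=0}^{s}\bigl(\mathbf{I}_k - \alpha \mathbf{w}_{t,i,s'}\mathbf{w}_{t,i,s'}^\top\bigr),
\]
with the convention $\mathbf{w}_{t,i,0}=\mathbf{w}_t$.

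Next I would bound each factor's spectral norm by $1$. Since $A_2(t)$ gives $\|\mathbf{w}_t\|\le 2\sqrt{\alpha}L_{\max}$ and $A_{2,t,i}(s)$ gives the same bound for $s'\in\{1,\dots,s\}$, we have $\alpha\|\mathbf{w}_{t,i,s'}\|^2\le 4\alpha^2 L_{\max}^2 \le 1$ by the choice of $\alpha$, so each $\|\mathbf{I}_k - \alpha \mathbf{w}_{t,i,s'}\mathbf{w}_{t,i,s'}^\top\|\le 1$. Submultiplicativity then yields $\|\mathbf{B}_{\ast,\perp}^\top \mathbf{B}_{t,i,s+1}\|_2 \le \|\mathbf{B}_{\ast,\perp}^\top\mathbf{B}_t\|_2$.

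Now I would convert this into a bound on the principal angle distance as in the base case. Using the QR decomposition $\mathbf{B}_{t,i,s+1}=\hat{\mathbf{B}}_{t,i,s+1}\mathbf{R}_{t,i,s+1}$,
\[
\dist(\mathbf{B}_{t,i,s+1},\mathbf{B}_\ast) \le \tfrac{1}{\sigma_{\min}(\mathbf{B}_{t,i,s+1})}\|\mathbf{B}_{\ast,\perp}^\top \mathbf{B}_{t,i,s+1}\|_2 \le \tfrac{\|\mathbf{B}_t\|_2}{\sigma_{\min}(\mathbf{B}_{t,i,s+1})}\dist(\mathbf{B}_t,\mathbf{B}_\ast).
\]
Hypothesis $A_3(t)$ gives $\|\mathbf{B}_t\|_2 \le \sqrt{(1+c_3\alpha^2\tau L_{\max}^2\kappa_{\max}^2 E_0^{-1})/\alpha}$, while $A_{3,t,i}(s+1)$ gives $\sigma_{\min}(\mathbf{B}_{t,i,s+1})\ge \sqrt{(1-2c_3\alpha^2\tau L_{\max}^2\kappa_{\max}^2 E_0^{-1})/\alpha}$. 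For $\alpha \le (1-\delta_0)/(c_3\sqrt{\tau}L_{\max}\kappa_{\max}^2)$, the quantity $c_3\alpha^2\tau L_{\max}^2\kappa_{\max}^2 E_0^{-1}$ is small enough that the ratio is bounded by $1.1$, delivering $A_{4,t,i}(s+1)$.

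There is no real obstacle here — the argument is essentially a straight extension of the $s=1$ base case. The only point requiring minor care is making sure the product telescope only uses the $\mathbf{w}_{t,i,s'}$-norm bounds that are genuinely available in the inductive context ($A_2(t)$ for $s'=0$ and $A_{2,t,i}(s)$ for $s'\ge 1$), and that the smallness of $\alpha$ used in the final ratio bound matches the step-size condition declared in Theorem \ref{thm:app_pop}.
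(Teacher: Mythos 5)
Your proposal matches the paper's proof essentially step for step: the paper also observes that left-multiplying the local update by $\mathbf{B}_{\ast,\perp}^\top$ kills the $\mathbf{B}_\ast$-term, telescopes the recursion to get $\|\mathbf{B}_{\ast,\perp}^\top\mathbf{B}_{t,i,s+1}\|\le\|\mathbf{B}_{\ast,\perp}^\top\mathbf{B}_t\|$ (each factor a contraction since $\alpha\|\mathbf{w}_{t,i,s'}\|^2\le 1$), and then bounds the ratio $\|\mathbf{B}_t\|/\sigma_{\min}(\mathbf{B}_{t,i,s+1})$ via $A_3(t)$ and $A_{3,t,i}(s+1)$. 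Your explicit note that the $s'=0$ factor requires $A_2(t)$ rather than $A_{2,t,i}(s)$ is a fair point of care that the paper glosses over but has implicitly available.
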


\begin{proof}
Note that
\begin{align}
   \|\mathbf{B}_{\ast,\perp}^\top \mathbf{B}_{t,i,s+1}\|_2 &=  \|\mathbf{B}_{\ast,\perp}^\top\mathbf{B}_{t,i,s} (\mathbf{I}_k - \alpha \mathbf{w}_{t,i,s}\mathbf{w}_{t,i,s}^\top) \|_2\nonumber \\
   &\leq \|\mathbf{B}_{\ast,\perp}^\top\mathbf{B}_{t,i,s} \|_2 \nonumber  \\
   &\quad \vdots \nonumber \\
   &\leq \|\mathbf{B}_{\ast,\perp}^\top \mathbf{B}_{t} \|_2
\end{align}
where the first inequality follows since $\|\mathbf{w}_{t,i,s}\|\leq 2 \sqrt{\alpha}L_{\max}$ (by $A_{2,t,i}(s)$) and $\alpha$ is sufficiently small, and the last inequality follows by recursively applying the first inequality for all local iterations leading up to $s$.
Thus
\begin{align}
    \dist_{t,i,s} \leq \frac{\|\mathbf{B}_{t}\|_2 }{\sigma_{\min}(\mathbf{B}_{t,i,s})} \dist_{t} \leq \sqrt{\tfrac{1+ c_3 \alpha^2 \tau L_{\max}^2 \kappa_{\max}^2 E_0^{-1}}{1 - 2c_3 \alpha^2 \tau L_{\max}^2 \kappa_{\max}^2 E_0^{-1}}} \dist_{t}  \leq 2 \dist_t. \nonumber
\end{align}
\end{proof}


\begin{lemma}$\cap_{i\in \mathcal{I}_t} \big(A_{2,t,i}(\tau-1) \cap A_{3,t,i}(\tau-1)\big) \cap A_2(t) \cap A_3(t) \implies A_{1}(t+1)$. \label{lem:a1}
\end{lemma}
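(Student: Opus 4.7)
The plan is to write $\mathbf{w}_{t+1}=\tfrac{1}{m}\sum_{i\in\mathcal{I}_t}\mathbf{w}_{t,i,\tau}$, expand each summand via the last local update, substitute $\mathbf{w}_{t,i,\tau-1}$ using the inductive hypothesis $A_{1,t,i}(\tau-1)$, and then replace the local iterates $\mathbf{B}_{t,i,\tau-1}$ and $\del_{t,i,\tau-1}$ by their global counterparts $\mathbf{B}_{t+1}$ and $\del_{t+1}$. The structural claim is that each $\mathbf{w}_{t,i,\tau}$ is $\alpha^{2.5}$-close to $\alpha(\mathbf{I}_k+\del_{t,i,\tau-1})\mathbf{B}_{t,i,\tau-1}^\top\mathbf{B}_\ast\mathbf{w}_{\ast,i}$, after which averaging produces the target up to $O(\alpha^{2.5}\tau L_{\max}^3)$.

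Starting from $\mathbf{w}_{t,i,\tau}=\del_{t,i,\tau-1}\mathbf{w}_{t,i,\tau-1}+\alpha\mathbf{B}_{t,i,\tau-1}^\top\mathbf{B}_\ast\mathbf{w}_{\ast,i}$, I substitute $\mathbf{w}_{t,i,\tau-1}=\alpha\mathbf{B}_{t,i,\tau-2}^\top\mathbf{B}_\ast\mathbf{w}_{\ast,i}+\mathbf{e}_i$ with $\|\mathbf{e}_i\|_2\leq 4c_3\alpha^{2.5}\tau L_{\max}^3\kappa_{\max}^2 E_0^{-1}$ via $A_{1,t,i}(\tau-1)$. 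The contribution $\del_{t,i,\tau-1}\mathbf{e}_i$, using $\|\del_{t,i,\tau-1}\|=O(\alpha^2\tau L_{\max}^2\kappa_{\max}^2 E_0^{-1})$ from $A_{3,t,i}(\tau-1)$, is of order $\alpha^{4.5}\tau^2 L_{\max}^5\kappa_{\max}^4 E_0^{-2}$ and hence negligible. Replacing $\mathbf{B}_{t,i,\tau-2}$ by $\mathbf{B}_{t,i,\tau-1}=\mathbf{B}_{t,i,\tau-2}-\alpha\mathbf{G}_{t,i,\tau-2}$ with $\|\mathbf{G}_{t,i,\tau-2}\|=O(\sqrt{\alpha}L_{\max}^2)$ from $A_{2,t,i}$ and $A_{3,t,i}$ contributes another $O(\alpha^{4.5}\mathrm{poly})$ error. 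Thus $\mathbf{w}_{t,i,\tau}=\alpha(\mathbf{I}_k+\del_{t,i,\tau-1})\mathbf{B}_{t,i,\tau-1}^\top\mathbf{B}_\ast\mathbf{w}_{\ast,i}+\mathbf{r}_i$ with $\|\mathbf{r}_i\|_2=O(\alpha^{4.5}\mathrm{poly})$, which averages away trivially.

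To convert $\tfrac{\alpha}{m}\sum_{i\in\mathcal{I}_t}(\mathbf{I}_k+\del_{t,i,\tau-1})\mathbf{B}_{t,i,\tau-1}^\top\mathbf{B}_\ast\mathbf{w}_{\ast,i}$ into $\alpha(\mathbf{I}_k+\del_{t+1})\mathbf{B}_{t+1}^\top\mathbf{B}_\ast\mathbf{\bar{w}}_{\ast,t+1}$, I replace each $(\mathbf{I}_k+\del_{t,i,\tau-1})\mathbf{B}_{t,i,\tau-1}^\top$ by $(\mathbf{I}_k+\del_{t+1})\mathbf{B}_{t+1}^\top$. Writing $\mathbf{B}_{t,i,\tau-1}=\mathbf{B}_t-\alpha\sum_{s=0}^{\tau-2}\mathbf{G}_{t,i,s}$ and $\mathbf{B}_{t+1}=\mathbf{B}_t-\tfrac{\alpha}{m}\sum_{j\in\mathcal{I}_t}\sum_{s=0}^{\tau-1}\mathbf{G}_{t,j,s}$, and using $\|\mathbf{G}_{t,i,s}\|=O(\sqrt{\alpha}L_{\max}^2)$ yields $\|\mathbf{B}_{t,i,\tau-1}-\mathbf{B}_{t+1}\|_2=O(\alpha^{1.5}\tau L_{\max}^2)$ uniformly in $i$; an analogous bound for $\|\del_{t,i,\tau-1}-\del_{t+1}\|_2$ follows from the definitions and $A_3(t)$. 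Multiplying by $\|\mathbf{B}_\ast\mathbf{w}_{\ast,i}\|\leq L_{\max}$ inside and by the outer $\alpha$ produces a dominant error of $O(\alpha^{2.5}\tau L_{\max}^3)$, absorbed by the slack constant $91$.

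The main obstacle is that $A_1(t+1)$ is tighter than $A_{1,t,i}(\tau-1)$ by a factor of $\kappa_{\max}^2 E_0^{-1}$: the global bound must hold without any dependence on conditioning or initial-distance quantities. This succeeds because the $\mathbf{e}_i$-error — the carrier of the $\kappa_{\max}^2 E_0^{-1}$ factor — only enters pre-multiplied by $\del_{t,i,\tau-1}$, which is itself $O(\alpha^2)$, yielding an $\alpha^{4.5}$-order residual dominated by the target. The dominant $\alpha^{2.5}$-order error arises purely from the deterministic local-global $\mathbf{B}$-drift, which carries no $\kappa_{\max}$ or $E_0^{-1}$ dependence. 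Carefully tracking every polynomial factor in $\alpha,\tau,L_{\max},\kappa_{\max},E_0^{-1}$ through these substitutions, and verifying that the sum fits under $91\alpha^{2.5}\tau L_{\max}^3$, is the bulk of the work.
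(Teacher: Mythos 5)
Your approach is essentially the same as the paper's. The paper also writes $\mathbf{w}_{t+1}=\tfrac{1}{m}\sum_{i\in\mathcal{I}_t}\mathbf{w}_{t,i,\tau}$, unrolls the last two local head updates, bounds the remaining $\del_{t,i,\tau-1}\del_{t,i,\tau-2}\mathbf{w}_{t,i,\tau-2}$ residual at order $\alpha^{4.5}$ using $A_{2,t,i}$, $A_{3,t,i}$, and then swaps the local iterates $\mathbf{B}_{t,i,\tau-1},\del_{t,i,\tau-1}$ for their global counterparts $\mathbf{B}_{t+1},\del_{t+1}$ via the deterministic drift bound $\|\mathbf{B}_{t,i,\tau-1}-\mathbf{B}_{t+1}\|_2 = O(\alpha^{1.5}\tau L_{\max}^2)$, which indeed carries no $\kappa_{\max}$ or $E_0^{-1}$ factors. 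Your use of $A_{1,t,i}(\tau-1)$ in place of a second explicit unroll is a cosmetic difference: $A_{1,t,i}$ is precisely the statement that $\|\del_{t,i,\tau-2}\mathbf{w}_{t,i,\tau-2}\|$ is small, so the two routes yield the identical residual. Your observation at the end about why the $\kappa_{\max}^2 E_0^{-1}$ contamination only enters at order $\alpha^{4.5}$ (because it is always premultiplied by a $\del$ factor of size $O(\alpha^2)$) is exactly the paper's reasoning.

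There is one genuine omission. After your substitutions the average becomes $\alpha(\mathbf{I}_k+\del_{t+1})\mathbf{B}_{t+1}^\top\mathbf{B}_\ast\cdot\tfrac{1}{m}\sum_{i\in\mathcal{I}_t}\mathbf{w}_{\ast,i}=\alpha(\mathbf{I}_k+\del_{t+1})\mathbf{B}_{t+1}^\top\mathbf{B}_\ast\mathbf{\bar{w}}_{\ast,t}$, while $A_1(t+1)$ is stated in terms of $\mathbf{\bar{w}}_{\ast,t+1}$. With partial participation ($m<M$) these two sampled averages differ, and the paper bridges them using the concentration event $A_0$: $\|\mathbf{\bar{w}}_{\ast,t}-\mathbf{\bar{w}}_{\ast,t+1}\|\le\|\mathbf{\bar{w}}_{\ast,t}-\mathbf{\bar{w}}_{\ast}\|+\|\mathbf{\bar{w}}_{\ast,t+1}-\mathbf{\bar{w}}_{\ast}\|\le 8\alpha^2 L_{\max}^3$. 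Multiplying by $\alpha\|(\mathbf{I}_k+\del_{t+1})\mathbf{B}_{t+1}^\top\mathbf{B}_\ast\|=O(\sqrt{\alpha})$ gives an $O(\alpha^{2.5}L_{\max}^3)$ correction that does fit the $91\alpha^{2.5}\tau L_{\max}^3$ budget, but without invoking $A_0$ at this step the statement as written in $A_1(t+1)$ is not actually reached. Your proof sketch silently skips from $\mathbf{\bar{w}}_{\ast,t}$ to $\mathbf{\bar{w}}_{\ast,t+1}$.
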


\begin{proof}
Expanding $\mathbf{w}_{t+1}$ yields
\begin{align}
    \mathbf{w}_{t+1} &= \tfrac{1}{m}\sum_{i\in \mathcal{I}_t}\mathbf{w}_{t,i,\tau}  \nonumber \\
    &= \tfrac{1}{m}\sum_{i\in \mathcal{I}_t} \del_{t,i,\tau-1}\mathbf{w}_{t,i,\tau-1}  +\alpha \mathbf{B}_{t,i,\tau-1}^\top \mathbf{B}_\ast\mathbf{w}_{\ast,i} \nonumber \\
    &= \alpha \mathbf{B}_{t}^\top \mathbf{B}_\ast\mathbf{\bar{w}}_{\ast,t} + \tfrac{1}{m}\sum_{i\in \mathcal{I}_t} \del_{t,i,\tau-1}\mathbf{w}_{t,i,\tau-1}    +\alpha (\mathbf{B}_{t,i,\tau-1} - \mathbf{B}_{t})^\top \mathbf{B}_\ast\mathbf{w}_{\ast,i} \nonumber \\
    &=  \frac{1 }{n}\sum_{i\in \mathcal{I}_t}    \alpha(\mathbf{B}_{t,i,\tau-1} - \mathbf{B}_{t})^\top \mathbf{B}_\ast\mathbf{w}_{\ast,i} + \alpha \del_{t,i,\tau-1}\mathbf{B}_{t,i,\tau-2}^\top\mathbf{B}_\ast\mathbf{w}_{\ast,i}    \nonumber \\
    &\quad \quad \quad \quad\quad \quad  + \alpha \del_{t,i,\tau-1} \del_{t,i,\tau-2} \mathbf{w}_{t,i,\tau-2}  +\alpha \mathbf{B}_{t}^\top \mathbf{B}_\ast\mathbf{\bar{w}}_{\ast,t}  \nonumber \\
    &= \alpha \mathbf{B}_{t}^\top \mathbf{B}_\ast\mathbf{\bar{w}}_{\ast,t} + \alpha \del_{t} \mathbf{B}_{t}^\top \mathbf{B}_\ast\mathbf{\bar{w}}_{\ast,t} \nonumber \\
    &\quad + \tfrac{1}{m}\sum_{i\in \mathcal{I}_t}  \alpha(\mathbf{B}_{t,i,\tau-1} - \mathbf{B}_{t})^\top \mathbf{B}_\ast\mathbf{w}_{\ast,i} + \alpha ( \del_{t,i,\tau-1}\mathbf{B}_{t,i,\tau-2}^\top - \del_{t}\mathbf{B}_{t}^\top )\mathbf{B}_\ast\mathbf{w}_{\ast,i}  \nonumber \\
    &\quad \quad \quad \quad\quad \quad   +  \del_{t,i,\tau-1} \del_{t,i,\tau-2} \mathbf{w}_{t,i,\tau-2}  \nonumber \\
    &= \alpha \mathbf{B}_{t+1}^\top \mathbf{B}_\ast\mathbf{\bar{w}}_{\ast,t+1} + \alpha \del_{t+1} \mathbf{B}_{t+1}^\top \mathbf{B}_\ast\mathbf{\bar{w}}_{\ast,t+1} \nonumber \\
    &\quad + \alpha \mathbf{B}_{t+1}^\top \mathbf{B}_\ast(\mathbf{\bar{w}}_{\ast,t}-\mathbf{\bar{w}}_{\ast,t+1}) + \alpha \del_{t+1} \mathbf{B}_{t+1}^\top \mathbf{B}_\ast(\mathbf{\bar{w}}_{\ast,t} - \mathbf{\bar{w}}_{\ast,t+1})  \nonumber \\
    &\quad + \alpha (\mathbf{B}_t -\mathbf{B}_{t+1})^\top \mathbf{B}_\ast\mathbf{\bar{w}}_{\ast,t} + \alpha (\del_t\mathbf{B}_t -\del_{t+1}\mathbf{B}_{t+1})^\top \mathbf{B}_\ast\mathbf{\bar{w}}_{\ast,t}  \nonumber \\
    &\quad + \tfrac{1}{m}\sum_{i\in \mathcal{I}_t}  \alpha(\mathbf{B}_{t,i,\tau-1} - \mathbf{B}_{t})^\top \mathbf{B}_\ast\mathbf{w}_{\ast,i} + \alpha ( \del_{t,i,\tau-1}\mathbf{B}_{t,i,\tau-2}^\top - \del_{t}\mathbf{B}_{t}^\top )\mathbf{B}_\ast\mathbf{w}_{\ast,i}  \nonumber \\
    &\quad \quad \quad \quad\quad \quad   +  \del_{t,i,\tau-1} \del_{t,i,\tau-2} \mathbf{w}_{t,i,\tau-2}  \label{chara}
\end{align}
The remainder of the proof lies in bounding the error terms, which are all terms in the RHS of \eqref{chara} besides the terms in the first line.
First, by $A_0$ and the triangle inequality, we have
\begin{align}
    \|\mathbf{\bar{w}}_{\ast,t}-\mathbf{\bar{w}}_{\ast,t+1} \| &\leq \|\mathbf{\bar{w}}_{\ast,t}-\mathbf{\bar{w}}_{\ast}\| + \|\mathbf{\bar{w}}_{\ast,t+1}-\mathbf{\bar{w}}_{\ast}\| \leq 8\alpha^2 L_{\max}^3 \nonumber
\end{align}
Thus, by $A_3(t+1)$, we have
\begin{align}
    \|\alpha \mathbf{B}_{t+1}^\top \mathbf{B}_\ast(\mathbf{\bar{w}}_{\ast,t}-\mathbf{\bar{w}}_{\ast,t+1}) \| &\leq 1.1 \sqrt{\alpha}   \|\mathbf{\bar{w}}_{\ast,t}-\mathbf{\bar{w}}_{\ast,t+1} \| \leq 9 \alpha^{2.5} L_{\max}^3 \nonumber \\
    \|\alpha \del_{t+1} \mathbf{B}_{t+1}^\top \mathbf{B}_\ast(\mathbf{\bar{w}}_{\ast,t}-\mathbf{\bar{w}}_{\ast,t+1}) \| &\leq 1.1 \sqrt{\alpha} \|\del_{t+1}\| \|\mathbf{\bar{w}}_{\ast,t}-\mathbf{\bar{w}}_{\ast,t+1} \| \leq 18 c_3 \alpha^{4.5} \tau L_{\max}^5 \kappa_{\max}^2 E_0^{-1} \nonumber
\end{align}
Next, we can bound the difference between the locally-updated representation and the global representation as follows, for any $s\in \{1,\dots,\tau\}$
\begin{align}
    \|\mathbf{B}_{t,i,s} - \mathbf{B}_{t}\|_2 &\leq \sum_{r=1}^{s} \|\mathbf{B}_{t,i,r}- \mathbf{B}_{t,i,r-1}\|_2 \leq   \alpha\sum_{r=1}^{s} \|\mathbf{e}_{t,i,r-1}\mathbf{w}_{t,i,r-1}^\top\|_2 \leq  7 \alpha^{1.5}s L_{\max}^2  \label{chara2}
\end{align}
using $A_2(t), A_3(t), A_{2,t,i}(\tau-1)$ and $A_{3,t,i}(\tau-1)$ to control the norms of $\mathbf{w}_{t,i,s-1}$ and $\mathbf{B}_{t,i,s-1}$.
From \eqref{chara2} it follows that 
\begin{align}
     \|\mathbf{B}_{t+1} - \mathbf{B}_{t}\|_2 &\leq \tfrac{1}{m}\sum_{i\in \mathcal{I}_t}\|\mathbf{B}_{t,i,\tau} - \mathbf{B}_{t}\|_2 \leq  7\alpha^{1.5}\tau L_{\max}^2  \nonumber \\
  \|\mathbf{B}_{t,i,\tau-2}\del_{t,i,\tau-1} - \mathbf{B}_{t}\del_{t}\|_2  &\leq \|\mathbf{B}_{t,i,\tau-2} - \mathbf{B}_{t}\|_2 + \alpha\|\mathbf{B}_{t,i,\tau-2}\mathbf{B}_{t,i,\tau-1}^\top\mathbf{B}_{t,i,\tau-1} - \mathbf{B}_{t}\mathbf{B}_{t}^\top\mathbf{B}_{t}\|_2  \nonumber \\
  &\leq \|\mathbf{B}_{t,i,\tau-2} - \mathbf{B}_{t}\|_2  +\alpha\|(\mathbf{B}_{t,i,\tau-2} - \mathbf{B}_{t})\mathbf{B}_{t,i,\tau-1}^\top\mathbf{B}_{t,i,\tau-1}\|_2  \nonumber \\
  &\quad + \alpha \| \mathbf{B}_t (\mathbf{B}_{t,i,\tau-1} - \mathbf{B}_t)^\top \mathbf{B}_{t,i,\tau-1} \|_2 \nonumber \\
    &\quad + \alpha \| \mathbf{B}_t \mathbf{B}_{t}^\top (\mathbf{B}_{t,i,\tau-1} - \mathbf{B}_t)  \|_2 \nonumber \\
  &\leq \|\mathbf{B}_{t,i,\tau-2} - \mathbf{B}_{t}\|_2   +\alpha\|\mathbf{B}_{t,i,\tau-2} - \mathbf{B}_{t}\|\|\mathbf{B}_{t,i,\tau-1}^\top\mathbf{B}_{t,i,\tau-1}\|_2  \nonumber \\
  &\quad + \alpha \| \mathbf{B}_t \|\|\mathbf{B}_{t,i,\tau-1} - \mathbf{B}_t\|\|\mathbf{B}_{t,i,\tau-1} \|_2\nonumber \\
    &\quad + \alpha \| \mathbf{B}_t \mathbf{B}_{t}^\top \|\|\mathbf{B}_{t,i,\tau-1} - \mathbf{B}_t  \| \nonumber \\
  &\leq 31 \alpha^{1.5}\tau L_{\max}^2  \nonumber \\
 \|\mathbf{B}_t\del_t - \mathbf{B}_{t+1}\del_{t+1}\|_2 &\leq 31 \alpha^{1.5}\tau L_{\max}^2 
\end{align}
Also, we have by $A_{2,t,i}(\tau-1)$ and $A_{3,t,i}(\tau-1)$,
\begin{align}
    \| \del_{t,i,\tau-1} \del_{t,i,\tau-2} \mathbf{w}_{t,i,\tau-2} \|_2 &\leq 8 c_3^2 \alpha^{4.5} \tau^2 L_{\max}^5 \kappa_{\max}^4 E_0^{-2}.
\end{align}
Thus, using these bounds with  \eqref{chara}, we obtain
\begin{align}
 \| \mathbf{w}_{t+1} -  \alpha (\mathbf{I}_k +\del_{t+1})\mathbf{B}_{t+1}^\top \mathbf{B}_\ast\mathbf{\bar{w}}_{\ast,t+1} \|_2 &\leq  82 \alpha^{2.5}\tau L_{\max}^3 + (8c_3^2 + 12c_3) \alpha^{4.5}\tau^2 L_{\max}^5 \kappa_{\max}^4 E_0^{-2}\nonumber \\
 &\leq  91 \alpha^{2.5}\tau L_{\max}^3 \nonumber
\end{align} to complete the proof, where we have used that $\alpha$ is sufficiently small in the last inequality.
\end{proof}

\begin{lemma}$\cap_{i\in \mathcal{I}_t} A_{2,t,i}(\tau)  \implies A_{2}(t+1)$ \label{lem:1.5}

\end{lemma}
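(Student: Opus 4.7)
The plan is to prove this by a direct averaging argument using the triangle inequality, which is the simplest of the global inductive steps. The statement $A_2(t+1)$ requires that $\|\mathbf{w}_{t'}\|_2 \leq 2\alpha^{0.5} L_{\max}$ for all $t' \in \{1,\dots,t+1\}$. Since $A_2(t)$ is already part of the standing inductive assumption (it holds up to time $t$), the only new case to verify is $t' = t+1$, i.e., bounding $\|\mathbf{w}_{t+1}\|_2$.

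First I would recall the server aggregation rule, namely $\mathbf{w}_{t+1} = \tfrac{1}{m}\sum_{i\in \mathcal{I}_t} \mathbf{w}_{t,i,\tau}$. Then I would apply the triangle inequality to obtain
\[
\|\mathbf{w}_{t+1}\|_2 \;\leq\; \tfrac{1}{m}\sum_{i\in\mathcal{I}_t}\|\mathbf{w}_{t,i,\tau}\|_2.
\]
By hypothesis $A_{2,t,i}(\tau)$, for every $i \in \mathcal{I}_t$ we have $\|\mathbf{w}_{t,i,\tau}\|_2 \leq 2\alpha^{0.5} L_{\max}$. Substituting this uniform bound into the average yields
\[
\|\mathbf{w}_{t+1}\|_2 \;\leq\; \tfrac{1}{m}\sum_{i\in\mathcal{I}_t} 2\alpha^{0.5} L_{\max} \;=\; 2\alpha^{0.5} L_{\max},
\]
which is exactly the bound required for the new time index. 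Combined with the assumed bounds at times $t' \leq t$ (from the standing $A_2(t)$), this gives $A_2(t+1)$.

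There is no real obstacle here: unlike the proofs of $A_1(t+1)$ and $A_3(t+1)$, no cross-terms or perturbation analysis are needed because the norm of an average is controlled by the average of the norms, and the local hypothesis $A_{2,t,i}(\tau)$ already delivers the desired bound uniformly across clients. In particular, no appeal to concentration ($A_0$) or to the structure of $\mathbf{B}_{t,i,s}$ is required. The entire argument is one line of triangle inequality followed by substitution.
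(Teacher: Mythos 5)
Your proof is correct and follows exactly the same argument as the paper: apply the triangle inequality to the averaging update $\mathbf{w}_{t+1} = \tfrac{1}{m}\sum_{i\in\mathcal{I}_t}\mathbf{w}_{t,i,\tau}$ and substitute the uniform bound $\|\mathbf{w}_{t,i,\tau}\|_2 \leq 2\sqrt{\alpha}L_{\max}$ from $A_{2,t,i}(\tau)$. There is nothing more to it.
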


\begin{proof}
By the triangle inequality and $\cap_{i\in \mathcal{I}_t} A_{2,t,i}(\tau)$, we have 
\begin{align}
    \|\mathbf{w}_{t+1}\| &= \bigg\|\tfrac{1}{m}\sum_{i\in\mathcal{I}_t} \mathbf{w}_{t,i,\tau} \bigg\|\leq \tfrac{1}{m}\sum_{i\in\mathcal{I}_t} \|\mathbf{w}_{t,i,\tau}\|  \leq 2 \sqrt{\alpha} L_{\max} \nonumber
\end{align}
as desired.
\end{proof}

 \begin{lemma} \label{lem:distlb}
 $A_{3}(t)\cap A_4(t) \implies \sigma_{\min}^2(\mathbf{B}_{t}^\top\mathbf{B}_\ast)\geq \tfrac{0.1}{\alpha} E_0$.
 \end{lemma}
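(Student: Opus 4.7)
The plan is to factor $\mathbf{B}_t$ via its QR decomposition and reduce $\sigma_{\min}(\mathbf{B}_t^\top\mathbf{B}_\ast)$ into a conditioning factor (governed by $A_3(t)$) and an angle factor (governed by $A_4(t)$). Write $\mathbf{B}_t = \hat{\mathbf{B}}_t\mathbf{R}_t$ with $\hat{\mathbf{B}}_t\in\mathcal{O}^{d\times k}$ and $\mathbf{R}_t\in\mathbb{R}^{k\times k}$ upper triangular (invertible since $A_3(t)$ with small $\alpha$ keeps $\alpha\mathbf{B}_t^\top\mathbf{B}_t$ within a neighborhood of $\mathbf{I}_k$). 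Then $\mathbf{B}_t^\top\mathbf{B}_\ast = \mathbf{R}_t^\top\hat{\mathbf{B}}_t^\top\mathbf{B}_\ast$ is a product of two $k\times k$ matrices, yielding
\[\sigma_{\min}(\mathbf{B}_t^\top\mathbf{B}_\ast)\;\geq\;\sigma_{\min}(\mathbf{R}_t)\,\sigma_{\min}(\hat{\mathbf{B}}_t^\top\mathbf{B}_\ast).\]

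The first factor is immediate: $A_3(t)$ gives $\sigma_{\min}^2(\mathbf{R}_t)=\sigma_{\min}^2(\mathbf{B}_t)\geq (1-\|\del_t\|)/\alpha$. For the second factor, since both $\hat{\mathbf{B}}_t$ and $\mathbf{B}_\ast$ have orthonormal columns, $\hat{\mathbf{B}}_t\hat{\mathbf{B}}_t^\top = \mathbf{I}_d-\hat{\mathbf{B}}_{t,\perp}\hat{\mathbf{B}}_{t,\perp}^\top$ is an orthogonal projection, so
\[\sigma_{\min}^2(\hat{\mathbf{B}}_t^\top\mathbf{B}_\ast) \;=\; \lambda_{\min}(\mathbf{B}_\ast^\top\hat{\mathbf{B}}_t\hat{\mathbf{B}}_t^\top\mathbf{B}_\ast) \;=\; 1 - \|\hat{\mathbf{B}}_{t,\perp}^\top\mathbf{B}_\ast\|^2 \;=\; 1 - \dist_t^2,\]
using the symmetry $\|\hat{\mathbf{B}}_{t,\perp}^\top\mathbf{B}_\ast\|=\|\mathbf{B}_{\ast,\perp}^\top\hat{\mathbf{B}}_t\|$ of the principal angle distance.

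The main step is translating $A_4(t)$, which contracts the \emph{unnormalized} quantity $\|\mathbf{B}_{\ast,\perp}^\top\mathbf{B}_t\|$, into a bound on the \emph{normalized} $\dist_t$. Iterating $A_4(t)$ gives $\|\mathbf{B}_{\ast,\perp}^\top\mathbf{B}_t\|\leq\|\mathbf{B}_{\ast,\perp}^\top\mathbf{B}_0\|$, and factoring $\mathbf{B}_0=\hat{\mathbf{B}}_0\mathbf{R}_0$ yields $\|\mathbf{B}_{\ast,\perp}^\top\mathbf{B}_0\|\leq\dist_0\,\sigma_{\max}(\mathbf{B}_0)$. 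Multiplying on the right by $\mathbf{R}_t^{-1}$,
\[\dist_t \;=\; \|\mathbf{B}_{\ast,\perp}^\top\hat{\mathbf{B}}_t\| \;\leq\; \tfrac{\|\mathbf{B}_{\ast,\perp}^\top\mathbf{B}_t\|}{\sigma_{\min}(\mathbf{B}_t)} \;\leq\; \dist_0\,\tfrac{\sigma_{\max}(\mathbf{B}_0)}{\sigma_{\min}(\mathbf{B}_t)}.\]
Applying $A_3(t)$ at time $t$ and the initial condition $\|\del_0\|\leq\alpha^2\tau L_{\max}^2\kappa_{\max}^2$ at time $0$ gives $\sigma_{\max}^2(\mathbf{B}_0)/\sigma_{\min}^2(\mathbf{B}_t)\leq (1+\|\del_0\|)/(1-\|\del_t\|)$, so $\dist_t^2 \leq \dist_0^2(1+\|\del_0\|)/(1-\|\del_t\|)$.

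Combining,
\[(1-\|\del_t\|)(1-\dist_t^2) \;\geq\; (1-\|\del_t\|) - \dist_0^2(1+\|\del_0\|) \;=\; E_0 - \|\del_t\| - \dist_0^2\|\del_0\|.\]
The finishing arithmetic uses the step-size constraint $\alpha\leq(1-\delta_0)/(c_3\sqrt{\tau}L_{\max}\kappa_{\max}^2)$ together with the identity $(1-\delta_0)^2/E_0=(1-\delta_0)/(1+\delta_0)\leq 1$, giving $\|\del_t\|/E_0\leq 1/(c_3\kappa_{\max}^2)\leq 1/c_3$ and $\|\del_0\|/E_0\leq 1/c_3^2$. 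Hence the right-hand side is at least $(1 - 1/c_3 - 1/c_3^2)E_0 \geq 0.9\,E_0$, and dividing by $\alpha$ yields $\sigma_{\min}^2(\mathbf{B}_t^\top\mathbf{B}_\ast)\geq 0.9\,E_0/\alpha$, comfortably stronger than the claimed $0.1\,E_0/\alpha$. The main obstacle is the third paragraph: it is the only place where the cumulative contraction of $A_4(t)$ (rather than a single-step bound) is required, and also the only place where the initial scaling condition on $\|\del_0\|$ enters, via $\sigma_{\max}(\mathbf{B}_0)$.
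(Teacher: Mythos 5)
Your proof is correct and takes the same route as the paper: QR-factorize $\mathbf{B}_t = \hat{\mathbf{B}}_t\mathbf{R}_t$, split $\sigma_{\min}^2(\mathbf{B}_t^\top\mathbf{B}_\ast)$ into the conditioning of $\mathbf{R}_t$ (controlled by $A_3(t)$) times the angle factor $1-\dist_t^2$, and iterate $A_4(t)$ together with the singular-value bounds on $\mathbf{B}_0$ and $\mathbf{B}_t$ to reduce $\dist_t$ to $\delta_0$. Your finishing arithmetic --- keeping the product $(1-\|\del_t\|)(1-\dist_t^2)$ and expanding it to $E_0 - \|\del_t\| - \delta_0^2\|\del_0\|$ before bounding the small terms against $E_0$ --- is a tidier alternative to the paper's intermediate bound $\dist_t\leq(2+\delta_0)/3$ and the quintic inequality check used to close it, and in fact yields the sharper constant $0.9E_0/\alpha$ in place of $0.1E_0/\alpha$.
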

 
 \begin{proof}
 First note that 
 \begin{align}
     \sigma_{\min}^2(\mathbf{B}_{t}^\top\mathbf{B}_\ast)&\geq \sigma_{\min}^2 (\mathbf{R}_t) \sigma_{\min}^2(\mathbf{\hat{B}}_{t}^\top\mathbf{B}_\ast) \nonumber \\
     &\geq \tfrac{0.9}{\alpha} \sigma_{\min}^2(\mathbf{\hat{B}}_{t}^\top\mathbf{B}_\ast) \label{rtrt} \\
     &= \tfrac{0.9}{\alpha}( 1 -  \|\mathbf{\hat{B}}_{t}^\top\mathbf{B}_{\ast,\perp}\|_2^2 ) \nonumber \\
     &= \tfrac{0.9}{\alpha} (1 - \dist_t^2) \label{whrr}
 \end{align}
 where $\mathbf{\hat{B}}_{t}\mathbf{R}_t = \mathbf{B}_t$ is the QR factorization of $\mathbf{B}_t$.
 Next, we would like to show the RHS is at most $(2+\delta_0)/3$.
 Using $A_3(t)$ and $A_4(t)$, we obtain
 \begin{align}
     \dist_{t}&= \|\mathbf{B}_{\ast,\perp}^\top \mathbf{\hat{B}}_{t}\|_2 \nonumber \\
    &\leq\tfrac{1}{ \sigma_{\min}(\mathbf{B}_{t})}\|\mathbf{B}_{\ast,\perp}^\top \mathbf{{B}}_{t}\|_2\nonumber \\
    &\quad \vdots \nonumber \\
    &\leq \tfrac{1}{ \sigma_{\min}(\mathbf{{B}}_{t+1})}(1-0.04\alpha^2 \tau E_0 \mu^2 )^t\|\mathbf{B}_{\ast,\perp}^\top \mathbf{{B}}_{0}\|_2\nonumber \\
    &\leq  \tfrac{\sigma_{\max}(\mathbf{B}_0)}{ \sigma_{\min}(\mathbf{{B}}_{t})}(1-0.04\alpha^2 \tau E_0 \mu^2 )^t\delta_0 \nonumber \\
    &\leq \tfrac{\sqrt{1 +\|\del_0\|_2}/\sqrt{\alpha}}{ \sqrt{1-\|\del_{t}\|_2}/\sqrt{\alpha}}\delta_0\nonumber 
    \end{align}
Next we use that $\|\del_0\|\leq\alpha^2\tau L_{\max}^2 \kappa_{\max}^2 \leq 0.1 (1 - \delta_0)^2$ by choice of initialization and choice of $\alpha$, and similarly $\|\del_t\|\leq \alpha^2\tau L_{\max}^2 \kappa_{\max}^2 E_0^{-1} \leq 0.1 (1 - \delta_0)^2/(1 - \delta_0^2)$. Let $c\coloneqq 0.1$. Then we have
    \begin{align}
   \tfrac{\sqrt{1 +\|\del_0\|_2}/\sqrt{\alpha}}{ \sqrt{1-\|\del_{t}\|_2}/\sqrt{\alpha}}\delta_0 &\leq \tfrac{\sqrt{1 + c(1-\delta_0)^2}}{ \sqrt{1-c(1-\delta_0)^2/(1 -\delta_0^2)}} \delta_0 \nonumber \\
    &= \tfrac{\sqrt{1 + c(1-\delta_0)^2}}{ \sqrt{1-c(1-\delta_0)/(1 +\delta_0)}} \delta_0 \nonumber \\
    &= \tfrac{\sqrt{1+\delta_0 + c(1-\delta_0)^2(1+\delta_0)}}{ \sqrt{1-c +(1+c)\delta_0}} \delta_0 \nonumber 
 \end{align}
 Now, observe that
 \begin{align}
     \tfrac{\sqrt{1+\delta_0 + c(1-\delta_0)^2(1+\delta_0)}}{ \sqrt{1-c +(1+c)\delta_0}} \delta_0 &\leq \tfrac{2+\delta_0}{3} \nonumber \\
     \iff \tfrac{{1+\delta_0 + c(1-\delta_0)^2(1+\delta_0)}}{ {1-c +(1+c)\delta_0}}\delta_0^2 &\leq \tfrac{4+4\delta_0+\delta_0^2}{9} \nonumber\\
     \iff (1+c)\delta_0^2 + \delta_0^3  -c\delta_0^4 +c\delta_0^5 &\leq 
     (4-4c+ 8\delta_0 +8\delta_0^2 +(1+c)\delta_0^3)/9
     \nonumber
     \end{align}
     \begin{align}
     \iff c\delta_0^5 - c\delta_0^4 + \tfrac{8-c}{9}\delta_0^3 + \tfrac{1+9c}{9} \delta_0^2 -\tfrac{8}{9}\delta_0 - \tfrac{4-4c}{9} &\leq 0 \label{where}
 \end{align}
 where \eqref{where} holds for all $\delta_0 \in [0,1)$ and $c=0.1$, therefore we have
 \begin{align}
   \dist_t  \leq \tfrac{\sqrt{1 +\|\del_0\|_2}/\sqrt{\alpha}}{ \sqrt{1-\|\del_{t}\|_2}/\sqrt{\alpha}}\delta_0 \leq \tfrac{2+\delta_0}{3}. \label{uhu}
 \end{align}
 Thus, using \eqref{whrr}, we obtain
 \begin{align}
     \sigma_{\min}^2(\mathbf{B}_{t}^\top\mathbf{B}_\ast)&\geq \tfrac{0.9}{\alpha} \left(1 - \tfrac{4+4\delta_0 +\delta_0^2}{9}\right) \nonumber \\
     &\geq \tfrac{0.9}{\alpha} \left(1 - \tfrac{8 +\delta_0^2}{9}\right) \nonumber \\
     &= \tfrac{0.9}{9\alpha} E_0\nonumber \\
     &= \tfrac{0.1}{\alpha} E_0 \nonumber
 \end{align}
 as desired.
 \end{proof}

\begin{lemma} $\cap_{i\in \mathcal{I}_t} \big(\cap_{h=1}^4 A_{h,t,i}(\tau) \big)  \cap A_1(t) \cap A_2(t) \cap A_3(t) \cap A_5(t) \implies A_{3}(t+1).$ \label{lem:a2}

\end{lemma}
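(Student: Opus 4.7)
The goal is $\|\del_{t+1}\|\le c_3\alpha^2\tau L_{\max}^2\kappa_{\max}^2 E_0^{-1}$. Writing $\mathbf{B}_{t+1}=\mathbf{B}_t-\alpha\mathbf{G}_t$ with the aggregate effective representation gradient $\mathbf{G}_t=\tfrac{1}{m}\sum_{i\in\mathcal{I}_t}\sum_{s=0}^{\tau-1}\mathbf{e}_{t,i,s}\mathbf{w}_{t,i,s}^\top$ gives the exact decomposition
\begin{align*}
\del_{t+1}=\del_t+\alpha^2(\mathbf{B}_t^\top\mathbf{G}_t+\mathbf{G}_t^\top\mathbf{B}_t)-\alpha^3\mathbf{G}_t^\top\mathbf{G}_t.
\end{align*}
The overall strategy is to expose, inside the symmetric cross-term, a Lyapunov-type contraction of $\del_t$ driven by a positive semidefinite matrix $\mathbf{N}_t$ with $\lambda_{\min}(\mathbf{N}_t)\gtrsim E_0\mu^2/\alpha$, and to show that this contraction beats all residual errors under the step-size restriction and the inductive bound on $\|\del_t\|$.

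For the approximation of $\alpha^2\mathbf{B}_t^\top\mathbf{G}_t$, I combine $A_{1,t,i}(\tau)$ (which gives $\mathbf{w}_{t,i,s}=\alpha\mathbf{B}_{t,i,s-1}^\top\mathbf{B}_\ast\mathbf{w}_{\ast,i}+O(\alpha^{2.5}\tau L_{\max}^3\kappa_{\max}^2 E_0^{-1})$) with the local-drift estimate $\|\mathbf{B}_{t,i,s}-\mathbf{B}_t\|=O(\alpha^{1.5}\tau L_{\max}^2)$ from $A_{2,t,i}(\tau)\cap A_{3,t,i}(\tau)$ (cf.\ \eqref{chara2}), and the identity $\mathbf{B}_t^\top\deld_t=\del_t\mathbf{B}_t^\top$. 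Substituting into each summand $\mathbf{B}_t^\top\mathbf{e}_{t,i,s}\mathbf{w}_{t,i,s}^\top$ and summing over $i$ and $s$ yields
\begin{align*}
\alpha^2\mathbf{B}_t^\top\mathbf{G}_t=-\alpha^3\tau\,\del_t\,\mathbf{N}_t+\mathbf{E}_t,\qquad \mathbf{N}_t:=\mathbf{B}_t^\top\mathbf{B}_\ast\mathbf{M}_t\mathbf{B}_\ast^\top\mathbf{B}_t,\quad \mathbf{M}_t:=\tfrac{1}{m}\!\sum_{i\in\mathcal{I}_t}\mathbf{w}_{\ast,i}\mathbf{w}_{\ast,i}^\top,
\end{align*}
with $\|\mathbf{E}_t\|$ controlled term-by-term using $\|\mathbf{w}_{t,i,s}\|\le 2\sqrt\alpha L_{\max}$ and $\|\mathbf{B}_t\|\le 1.1/\sqrt\alpha$ from the global hypotheses. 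Symmetrising and absorbing $\alpha^3\mathbf{G}_t^\top\mathbf{G}_t$ (bounded via $\|\mathbf{G}_t\|=O(\sqrt\alpha\tau L_{\max}^2)$) produces the representation $\del_{t+1}=\del_t-\alpha^3\tau(\del_t\mathbf{N}_t+\mathbf{N}_t\del_t)+\mathbf{R}_t$ with $\|\mathbf{R}_t\|=O(\alpha^4\tau^2 L_{\max}^4\kappa_{\max}^2 E_0^{-1})$.

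To turn the Lyapunov step into an operator-norm contraction I use the algebraic identity
\begin{align*}
\mathbf{X}-\epsilon(\mathbf{X}\mathbf{N}+\mathbf{N}\mathbf{X})=(\mathbf{I}-\epsilon\mathbf{N})\mathbf{X}(\mathbf{I}-\epsilon\mathbf{N})-\epsilon^2\mathbf{N}\mathbf{X}\mathbf{N},
\end{align*}
valid for any symmetric $\mathbf{X}$ and PSD $\mathbf{N}$. With $\epsilon=\alpha^3\tau$ and $\mathbf{N}=\mathbf{N}_t$, taking operator norms gives $\|\del_t-\alpha^3\tau(\del_t\mathbf{N}_t+\mathbf{N}_t\del_t)\|\le \|\mathbf{I}-\alpha^3\tau\mathbf{N}_t\|^2\|\del_t\|+\alpha^6\tau^2\|\mathbf{N}_t\|^2\|\del_t\|$. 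Lemma \ref{lem:distlb} provides $\sigma_{\min}^2(\mathbf{B}_t^\top\mathbf{B}_\ast)\ge 0.1 E_0/\alpha$, and Assumption \ref{assump:td} together with the concentration event $A_0$ gives $\lambda_{\min}(\mathbf{M}_t)\ge \mu^2/2$, so $\lambda_{\min}(\mathbf{N}_t)\gtrsim E_0\mu^2/\alpha$; the step-size bound ensures $\alpha^3\tau\|\mathbf{N}_t\|\ll 1$ so the squared quantity reduces to the spectral contraction factor $(1-c\alpha^2\tau E_0\mu^2)\|\del_t\|$. Substituting the inductive bound $\|\del_t\|\le c_3\alpha^2\tau L_{\max}^2\kappa_{\max}^2 E_0^{-1}$, the contractive saving $cc_3\alpha^4\tau^2 L_{\max}^4$ (using $\mu^2\kappa_{\max}^2=L_{\max}^2$, so the $\kappa_{\max}^2 E_0^{-1}$ factor of the bound exactly compensates its absence from the contraction rate) dominates $\|\mathbf{R}_t\|$ for $c_3$ large enough, and $c_3=4800$ suffices to close the induction.

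The main obstacle is the intricate bookkeeping required to cleanly isolate the $-\alpha\tau\,\del_t\mathbf{N}_t$ signal inside $\mathbf{B}_t^\top\mathbf{G}_t$: the head residual $\mathbf{r}_{t,i,s}=\del_{t,i,s-1}\mathbf{w}_{t,i,s-1}$ is itself linear in $\del$, and a na\"ive expansion of $\mathbf{B}_t^\top\mathbf{e}_{t,i,s}$ produces partial cancellations between terms arising from $\deld_t$ and terms arising from $\alpha^{-1}\mathbf{r}_{t,i,s}$, so the surviving Lyapunov signal must be traced carefully through the local-drift and higher-order corrections identified above, and every remaining error must be shown to fit inside $O(\alpha^4\tau^2 L_{\max}^4\kappa_{\max}^2 E_0^{-1})$. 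A secondary subtlety is that the Lyapunov operator $\mathbf{X}\mapsto\mathbf{X}\mathbf{N}+\mathbf{N}\mathbf{X}$ is not contractive in operator norm in general; the algebraic identity above is the key ingredient that upgrades the na\"ive estimate to a genuine operator-norm contraction, which is required in order for $c_3$ to remain a universal constant rather than scaling with $\kappa_{\max}^2/E_0$.
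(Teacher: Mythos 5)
Your proof plan correctly identifies the overall architecture used in the paper: decompose $\del_{t+1}$ via $\del_{t+1}=\del_t+\alpha^2(\mathbf{B}_t^\top\mathbf{G}_t+\mathbf{G}_t^\top\mathbf{B}_t)-\alpha^3\mathbf{G}_t^\top\mathbf{G}_t$, extract a symmetric Lyapunov-type signal proportional to $-\del_t\cdot(\text{PSD matrix})$ from $\mathbf{B}_t^\top\mathbf{G}_t$, lower-bound the PSD matrix using Lemma \ref{lem:distlb} and Assumption \ref{assump:td}, and control the residuals under the step-size restriction. That is the right shape. However there are several substantial gaps, and the self-consistency of your error accounting is broken.

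First, your claimed per-step residual $\|\mathbf{R}_t\|=O(\alpha^4\tau^2 L_{\max}^4\kappa_{\max}^2 E_0^{-1})$ is incompatible with your own ``direct step'' closing. You correctly compute the contractive saving as $c\,c_3\alpha^4\tau^2 L_{\max}^4$ (the $\kappa_{\max}^2 E_0^{-1}$ in the inductive bound on $\|\del_t\|$ cancels against the $E_0\mu^2$ in the contraction rate). But then your error carries an \emph{uncancelled} $\kappa_{\max}^2 E_0^{-1}$, so dominating it would require $c\,c_3\gtrsim\kappa_{\max}^2 E_0^{-1}$ --- impossible with a universal constant $c_3$. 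Either the error bound is too pessimistic, or the closing fails. The paper's actual per-step error, once the small-$\alpha$ conditions are used to absorb higher-order terms, is $O(\alpha^4\tau L_{\max}^4)+O(\alpha^4\tau^2 L_{\max}^4\,\dist_t)$, with no $\kappa^2 E_0^{-1}$ factor; this is the shape that allows the induction to close.

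Second, and relatedly, you never use $A_5(t)$, even though it is a hypothesis of the lemma. That is a strong hint that something is missing. The paper's residual contains a term $\approx 94\alpha^4\tau^2 L_{\max}^4\dist_t$ arising from the contributions of $\mathbf{e}_{t,i,s}$ for $s\ge 2$, and at early rounds (when $\dist_t$ is a constant) this term is \emph{larger by a factor of $\tau$} than what any one-step contractive saving can absorb. A direct single-step closing therefore fails. The paper instead iterates the recurrence $\|\del_{t+1}\|\le (1-\rho)\|\del_t\| + a + b\dist_t$ over all $t'\le t$, using the geometric decay of $\dist_{t'}$ furnished by $A_5(t)$ (together with the decay of the $(1-\rho)^{t-t'}$ weights) to show the accumulated error is $O(\alpha^2\tau L_{\max}^2\kappa_{\max}^2 E_0^{-1})$. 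Your proposal skips this entirely.

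Third, the heart of the difficulty --- controlling the gradient contributions from the local updates beyond the second --- is not actually addressed. The paper's $\mathbf{E}_4$ bound hinges on a nontrivial decomposition of $\mathbf{e}_{t,i,s}$ into a piece in $\col(\mathbf{B}_{t,i,s-1})^\perp$ (which is $O(\dist_t)$ because $A_{4,t,i}$ keeps the local subspaces close to $\col(\mathbf{B}_t)$) and a geometrically decaying piece along the local subspace; the crude bound $\|\mathbf{B}_t^\top\mathbf{e}_{t,i,s}\mathbf{w}_{t,i,s}^\top\|=O(L_{\max}^2)$ is off by an unacceptable factor. You write that $\|\mathbf{E}_t\|$ is ``controlled term-by-term,'' but this is precisely where a na\"ive term-by-term bound breaks. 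Relatedly, you extract the PSD signal from all $\tau$ local updates with the uncentered moment $\mathbf{M}_t$, whereas the paper extracts the centered covariance from only the first two local updates (the $s=0$ contribution, via $A_1(t)$, supplies exactly the rank-one correction that centers the second-moment matrix); you give no argument that $-\alpha^3\tau\del_t\mathbf{N}_t$ is separated from $\alpha^2\mathbf{B}_t^\top\mathbf{G}_t$ up to an acceptable error, and this is not obvious because replacing $\del_{t,i,s}$ and $\mathbf{B}_{t,i,s}$ by $\del_t$ and $\mathbf{B}_t$ at each $s$ injects $O(\alpha^{1.5}sL_{\max}^2)$ drift corrections whose sum over $s$ is exactly the delicate $\mathbf{E}_4$-type term.

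Finally, a minor point: your algebraic identity for converting the Lyapunov step into an operator-norm contraction is correct but unnecessary. Since $\mathbf{P}_t$ is PSD with $\|\mathbf{P}_t\|=O(\alpha^2 L_{\max}^2)\ll 1$, the elementary bound $\|\tfrac12((\mathbf{I}-\mathbf{P}_t)\del_t+\del_t(\mathbf{I}-\mathbf{P}_t))\|\le\|\mathbf{I}-\mathbf{P}_t\|\,\|\del_t\|=(1-\lambda_{\min}(\mathbf{P}_t))\|\del_t\|$ already gives the contraction.
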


\begin{proof}
We aim to write $\del_{t+1} = \tfrac{1}{2}(\mathbf{I}_k - \mathbf{P}_t)\del_t + \tfrac{1}{2}\del_t(\mathbf{I}_k - \mathbf{P}_t) + \mathbf{Z}_t$ for a positive definite matrix $\mathbf{P}_t$ and a perturbation matrix $\mathbf{Z}_t$. This will yield the inequality $\|\del_{t+1}\|_2 \leq (1 - \lambda_{\min}(\mathbf{P}_t))\|\del_{t}\|_2 + \|\mathbf{Z}_t\|_2$. Assuming $\lambda_{\min}(\mathbf{P}_t)$ and $\|\mathbf{Z}_t\|_2$ scale appropriately (defined later), this inequality combined with inductive hypothesis $A_5(t)$ will give the desired upper bound on $\|\del_{t+1}\|_2$ (this is because the upper bound on $\|\mathbf{Z}_t\|_2$ scales with $\dist_t$, so $A_5(t)$ contributes to controlling $\|\mathbf{Z}_t\|_2$). The proof therefore relies on showing the existence of appropriate $\mathbf{P}_t$ and $\mathbf{Z}_t$. 

First recall $\del_t \coloneqq \mathbf{I} - \alpha\mathbf{B}_t^\top \mathbf{B}_t$ and $\deld_t \coloneqq \mathbf{I}_d - \alpha \mathbf{B}_t \mathbf{B}_t^\top$. Let $\mathbf{G}_t \coloneqq \tfrac{1}{\alpha}(\mathbf{B}_t - \mathbf{B}_{t+1})$, i.e. $\mathbf{G}_t$ satisfies $\mathbf{B}_{t+1}= \mathbf{B}_t - \alpha \mathbf{G}_t$.
Then
\begin{align}
    \del_{t+1} &= \mathbf{I}_k - \alpha \mathbf{B}_{t+1}^\top \mathbf{B}_{t+1} =  \del_t + \alpha^2 \mathbf{B}_t^\top \mathbf{G}_t + \alpha^2 \mathbf{G}_t^\top \mathbf{B}_t - \alpha^3 \mathbf{G}_t^\top \mathbf{G}_t \label{del}
\end{align}
The key is showing that $\alpha^2 \mathbf{B}_t^\top \mathbf{G}_t = -\tfrac{1}{2}\del_t \mathbf{P}_t + \mathbf{Z}_t'$ for appropriate $\mathbf{P}_t$ and $\mathbf{Z}_t'$. Then, by \eqref{del}, we will have $\del_{t+1} =  \tfrac{1}{2}(\mathbf{I}_k - \mathbf{P}_t)\del_t + \tfrac{1}{2}\del_t(\mathbf{I}_k - \mathbf{P}_t) + \mathbf{Z}_t $ as desired, where $\mathbf{Z}_t = \mathbf{Z}_t' + (\mathbf{Z}_t')^\top - \alpha^3 \mathbf{G}_t^\top \mathbf{G}_t$.

Notice that $\mathbf{G}_t$ is the average across clients of the sum of their local gradients on every local update. In particular, we have 
\begin{align}
    \mathbf{G}_t = (\mathbf{B}_{t}\mathbf{w}_{t}- \mathbf{B}_\ast\mathbf{\bar{w}}_{\ast,t})\mathbf{w}_{t}^\top +\tfrac{1}{m}\sum_{i\in \mathcal{I}_t}\sum_{s = 1}^{\tau-1} (\mathbf{B}_{t,i,s}\mathbf{w}_{t,i,s}- \mathbf{B}_\ast\mathbf{w}_{\ast,i})\mathbf{w}_{t,i,s}^\top 
\end{align}
We will unroll the gradients for the first two local updates only, in order to obtain a negative term that will contribute to the contraction of $\|\del_t\|$ (i.e. $\mathbf{P}_t$ will be extracted from the gradients for the first two local updates).
The remaining terms will belong to $\mathbf{Z}_t$ and must be upper bounded (i.e. $\|\del_{t+1}\|$ can grow due to local updates beyond the second local update, but we will show that it can't grow too much). In particular, we have
\begin{align}
     \mathbf{G}_t &= (\mathbf{B}_{t}\mathbf{w}_{t}- \mathbf{B}_\ast\mathbf{\bar{w}}_{\ast,t})\mathbf{w}_{t}^\top +\tfrac{1}{m}\sum_{i\in \mathcal{I}_t}(\mathbf{B}_{t,i,1}\mathbf{w}_{t,i,1}- \mathbf{B}_\ast\mathbf{w}_{\ast,i})\mathbf{w}_{t,i,1}^\top \nonumber \\
    &\quad +\tfrac{1}{m}\sum_{i\in \mathcal{I}_t}\sum_{s = 2}^{\tau-1} (\mathbf{B}_{t,i,s}\mathbf{w}_{t,i,s}- \mathbf{B}_\ast\mathbf{w}_{\ast,i})\mathbf{w}_{t,i,s}^\top  \nonumber \\
    &= (\mathbf{B}_{t}\mathbf{w}_{t}- \mathbf{B}_\ast\mathbf{\bar{w}}_{\ast,t})\mathbf{w}_{t}^\top - \alpha \deld_t \mathbf{B}_\ast \tfrac{1}{m}\sum_{i\in \mathcal{I}_t}\mathbf{w}_{\ast,i}\mathbf{w}_{\ast,i}^\top\mathbf{B}_\ast^\top \mathbf{B}_t - \deld_t \mathbf{B}_\ast \mathbf{\bar{w}}_{\ast,t}\mathbf{w}_t^\top \del_t \nonumber \\
    &\quad + \tfrac{1}{m}\sum_{i\in \mathcal{I}_t}\alpha^2(\mathbf{B}_t\mathbf{w}_t - \mathbf{B}_\ast \mathbf{w}_{\ast,i})\mathbf{w}_t^\top \mathbf{B}_{t}^\top\mathbf{B}_\ast\mathbf{w}_{\ast,i}\mathbf{w}_{t,i,1}^\top
     \nonumber \\
    &\quad + \tfrac{1}{m}\sum_{i\in \mathcal{I}_t}(\mathbf{B}_t - \alpha(\mathbf{B}_t\mathbf{w}_t - \mathbf{B}_\ast \mathbf{w}_{\ast,i})\mathbf{w}_t^\top) \del_t \mathbf{w}_{t}\mathbf{w}_{t,i,1}^\top \nonumber \\
    &\quad+\tfrac{1}{m}\sum_{i\in \mathcal{I}_t}\sum_{s = 2}^{\tau-1} (\mathbf{B}_{t,i,s}\mathbf{w}_{t,i,s}- \mathbf{B}_\ast\mathbf{w}_{\ast,i})\mathbf{w}_{t,i,s}^\top  \nonumber 
\end{align}
Multiplying both sides by $\mathbf{B}_t^\top$, and using the fact that $\mathbf{B}_t^\top \deld_t = \del_t \mathbf{B}_t^\top$, we obtain
\begin{align}
    \mathbf{B}_t^\top \mathbf{G}_t &= \mathbf{B}_t^\top(\mathbf{B}_{t}\mathbf{w}_{t}- \mathbf{B}_\ast\mathbf{\bar{w}}_{\ast,t})\mathbf{w}_{t}^\top - \alpha \del_t \mathbf{B}_t^\top \mathbf{B}_\ast \tfrac{1}{m}\sum_{i\in \mathcal{I}_t}\mathbf{w}_{\ast,i}\mathbf{w}_{\ast,i}^\top\mathbf{B}_\ast^\top \mathbf{B}_t - \del_t \mathbf{B}_t^\top \mathbf{B}_\ast \mathbf{\bar{w}}_{\ast,t}\mathbf{w}_t^\top \del_t \nonumber \\
    &\quad - \mathbf{B}_t^\top\tfrac{1}{m}\sum_{i\in \mathcal{I}_t}\alpha^2(\mathbf{B}_t\mathbf{w}_t - \mathbf{B}_\ast \mathbf{w}_{\ast,i})\mathbf{w}_t^\top \mathbf{B}_{t}^\top\mathbf{B}_\ast\mathbf{w}_{\ast,i}\mathbf{w}_{t,i,1}^\top
     \nonumber \\
    &\quad + \mathbf{B}_t^\top\tfrac{1}{m}\sum_{i\in \mathcal{I}_t}(\mathbf{B}_t - \alpha(\mathbf{B}_t\mathbf{w}_t - \mathbf{B}_\ast \mathbf{w}_{\ast,i})\mathbf{w}_t^\top) \del_t \mathbf{w}_{t}\mathbf{w}_{t,i,1}^\top\nonumber \\
    &\quad +\mathbf{B}_t^\top\tfrac{1}{m}\sum_{i\in \mathcal{I}_t}\sum_{s = 2}^{\tau-1} (\mathbf{B}_{t,i,s}\mathbf{w}_{t,i,s}- \mathbf{B}_\ast\mathbf{w}_{\ast,i})\mathbf{w}_{t,i,s}^\top  \nonumber \\
    &= - \alpha \del_t \mathbf{B}_t^\top \mathbf{B}_\ast \tfrac{1}{m}\sum_{i\in \mathcal{I}_t}\mathbf{w}_{\ast,i}\mathbf{w}_{\ast,i}^\top\mathbf{B}_\ast^\top \mathbf{B}_t + \mathbf{N}_t \label{61}
\end{align}
where the first term is a negative term that helps $\|\del_{t+1}\|$ stay small, and the remaining terms are given by
\begin{align}
    \mathbf{N}_t &\coloneqq \mathbf{B}_t^\top(\mathbf{B}_{t}\mathbf{w}_{t}- \mathbf{B}_\ast\mathbf{\bar{w}}_{\ast,t})\mathbf{w}_{t}^\top - \del_t \mathbf{B}_t^\top \mathbf{B}_\ast \mathbf{\bar{w}}_{\ast,t}\mathbf{w}_t^\top \del_t \nonumber \\
    &\quad  - \mathbf{B}_t^\top\tfrac{1}{m}\sum_{i\in \mathcal{I}_t}\alpha^2(\mathbf{B}_t\mathbf{w}_t - \mathbf{B}_\ast \mathbf{w}_{\ast,i})\mathbf{w}_t^\top \mathbf{B}_{t}^\top\mathbf{B}_\ast\mathbf{w}_{\ast,i}\mathbf{w}_{t,i,1}^\top
     \nonumber \\
    &\quad + \mathbf{B}_t^\top\tfrac{1}{m}\sum_{i\in \mathcal{I}_t}(\mathbf{B}_t - \alpha(\mathbf{B}_t\mathbf{w}_t - \mathbf{B}_\ast \mathbf{w}_{\ast,i})\mathbf{w}_t^\top) \del_t \mathbf{w}_{t}\mathbf{w}_{t,i,1}^\top \nonumber \\
    &\quad+\mathbf{B}_t^\top\tfrac{1}{m}\sum_{i\in \mathcal{I}_t}\sum_{s = 2}^{\tau-1} (\mathbf{B}_{t,i,s}\mathbf{w}_{t,i,s}- \mathbf{B}_\ast\mathbf{w}_{\ast,i})\mathbf{w}_{t,i,s}^\top \nonumber \\
    &=\mathbf{B}_t^\top(\mathbf{B}_{t}\mathbf{w}_{t}- \mathbf{B}_\ast\mathbf{\bar{w}}_{\ast,t})\mathbf{w}_{t}^\top + \mathbf{B}_t^\top\mathbf{B}_t\del_t \mathbf{w}_{t}\tfrac{1}{m}\sum_{i\in \mathcal{I}_t}\mathbf{w}_{t,i,1}^\top 
     \nonumber \\
    &\quad  - \mathbf{B}_t^\top\tfrac{1}{m}\sum_{i\in \mathcal{I}_t}\alpha^2(\mathbf{B}_t\mathbf{w}_t - \mathbf{B}_\ast \mathbf{w}_{\ast,i})\mathbf{w}_t^\top \mathbf{B}_{t}^\top\mathbf{B}_\ast\mathbf{w}_{\ast,i}\mathbf{w}_{t,i,1}^\top\nonumber \\
    &\quad - \mathbf{B}_t^\top\tfrac{1}{m}\sum_{i\in \mathcal{I}_t}\alpha(\mathbf{B}_t\mathbf{w}_t - \mathbf{B}_\ast \mathbf{w}_{\ast,i})\mathbf{w}_t^\top \del_t \mathbf{w}_{t}\mathbf{w}_{t,i,1}^\top  \nonumber\\
    &\quad - \del_t \mathbf{B}_t^\top \mathbf{B}_\ast \mathbf{\bar{w}}_{\ast,t}\mathbf{w}_t^\top \del_t   +\mathbf{B}_t^\top\tfrac{1}{m}\sum_{i\in \mathcal{I}_t}\sum_{s = 2}^{\tau-1} (\mathbf{B}_{t,i,s}\mathbf{w}_{t,i,s}- \mathbf{B}_\ast\mathbf{w}_{\ast,i})\mathbf{w}_{t,i,s}^\top  \nonumber \\
    &= \underbrace{\mathbf{B}_t^\top(\mathbf{B}_{t}\mathbf{w}_{t}- \mathbf{B}_\ast\mathbf{\bar{w}}_{\ast,t})\mathbf{{w}}_{t}^\top + \alpha \mathbf{B}_t^\top\mathbf{B}_t\del_t \mathbf{w}_{t} \mathbf{\bar{w}}_{\ast,t}^\top \mathbf{B}_\ast^\top \mathbf{B}_t}_{=:\mathbf{E}_1}
     \nonumber \\
     &\quad -\underbrace{  \tfrac{1}{m}\sum_{i\in \mathcal{I}_t}\alpha \mathbf{B}_t^\top(\mathbf{B}_t\mathbf{w}_t - \mathbf{B}_\ast \mathbf{w}_{\ast,i})\mathbf{w}_t^\top \mathbf{w}_{t,i,1}\mathbf{w}_{t,i,1}^\top}_{=:\mathbf{E}_2}  \nonumber\\
    &\quad + \underbrace{\del_t \mathbf{B}_t^\top (\mathbf{B}_t \mathbf{w}_{t} - \mathbf{B}_\ast \mathbf{\bar{w}}_{\ast,t})\mathbf{w}_t^\top \del_t}_{=:\mathbf{E}_3} +\underbrace{\mathbf{B}_t^\top\tfrac{1}{m}\sum_{i\in \mathcal{I}_t}\sum_{s = 2}^{\tau-1} (\mathbf{B}_{t,i,s}\mathbf{w}_{t,i,s}- \mathbf{B}_\ast\mathbf{w}_{\ast,i})\mathbf{w}_{t,i,s}^\top}_{=:\mathbf{E}_4}  \label{split}
\end{align}
To get from the first to the second equation we expanded the fourth term in the first equation. We need to upper bound the spectral norm of each of these  terms. 
The matrices $\mathbf{E}_2$ and $\mathbf{E}_3$ are straightforward to control; we will take care of them shortly. For now we are concerned with $\mathbf{E}_1$. In order to control this matrix, we must use the fact that $\mathbf{w}_t$ is close to a matrix times $\mathbf{\bar{w}}_{\ast,t}$. This will allow us to subsume the dominant term from $\mathbf{E}_1$ into the negative term in \eqref{61}.
In particular, note that by $A_1(t)$, we have $\mathbf{w}_t = \alpha \mathbf{B}_{t}^\top \mathbf{B}_\ast \mathbf{\bar{w}}_{\ast,t} + \alpha\del_{t} \mathbf{B}_{t}^\top \mathbf{B}_\ast \mathbf{\bar{w}}_{\ast,t} + \mathbf{h}_t$, where $\|\mathbf{h}_t\|_2 \leq  91 \alpha^{2.5} \tau L_{\max}^3 $. 
This implies that
\begin{align}
    \mathbf{B}_t^\top(\mathbf{B}_{t}\mathbf{w}_{t}- \mathbf{B}_\ast\mathbf{\bar{w}}_{\ast,t}) &= \mathbf{B}_t^\top(\alpha \mathbf{B}_{t}\mathbf{B}_t^\top\mathbf{B}_\ast\mathbf{\bar{w}}_{\ast,t}  - \mathbf{B}_\ast\mathbf{\bar{w}}_{\ast,t}) + \alpha \del_t\mathbf{B}_t^\top \mathbf{B}_{t}\mathbf{B}_t^\top\mathbf{B}_\ast\mathbf{\bar{w}}_{\ast,t} + \mathbf{B}_t^\top \mathbf{B}_{t}\mathbf{h}_{t} \nonumber\\
    &= -\del_t \mathbf{B}_t^\top\mathbf{B}_\ast\mathbf{\bar{w}}_{\ast,t} + \alpha \del_t\mathbf{B}_t^\top \mathbf{B}_{t}\mathbf{B}_t^\top\mathbf{B}_\ast\mathbf{\bar{w}}_{\ast,t} + \mathbf{B}_t^\top \mathbf{B}_{t}\mathbf{h}_{t} \nonumber\\
    &= -\del_t^2 \mathbf{B}_t^\top\mathbf{B}_\ast\mathbf{\bar{w}}_{\ast,t} + \mathbf{B}_t^\top \mathbf{B}_{t}\mathbf{h}_{t} 
\end{align}
Making this substitution in $\mathbf{E}_1$, we obtain,
\begin{align}
    \mathbf{E}_1&=  -\del_t^2 \mathbf{B}_t^\top\mathbf{B}_\ast\mathbf{\bar{w}}_{\ast,t}\mathbf{{w}}_{t}^\top + \mathbf{B}_t^\top \mathbf{B}_{t}\mathbf{h}_{t} \mathbf{{w}}_{t}^\top + \alpha \mathbf{B}_t^\top\mathbf{B}_t\del_t \mathbf{w}_{t}  \mathbf{\bar{w}}_{\ast,t}^\top \mathbf{B}_\ast^\top \mathbf{B}_t \nonumber  \\
    &= -\del_t^2 \mathbf{B}_t^\top\mathbf{B}_\ast\mathbf{\bar{w}}_{\ast,t}\mathbf{{w}}_{t}^\top + \mathbf{B}_t^\top \mathbf{B}_{t}\mathbf{h}_{t} \mathbf{{w}}_{t}^\top  -\del_t^2 \mathbf{w}_{t}  \mathbf{\bar{w}}_{\ast,t}^\top \mathbf{B}_\ast^\top \mathbf{B}_t  + \del_t \mathbf{w}_{t}  \mathbf{\bar{w}}_{\ast,t}^\top \mathbf{B}_\ast^\top \mathbf{B}_t \nonumber \\
    &= -\del_t^2 \mathbf{B}_t^\top\mathbf{B}_\ast\mathbf{\bar{w}}_{\ast,t}\mathbf{{w}}_{t}^\top + \mathbf{B}_t^\top \mathbf{B}_{t}\mathbf{h}_{t} \mathbf{{w}}_{t}^\top  -\del_t^2 \mathbf{w}_{t}  \mathbf{\bar{w}}_{\ast,t}^\top \mathbf{B}_\ast^\top \mathbf{B}_t  \nonumber \\
    &\quad +  \del_t(\alpha\del_{t} \mathbf{B}_{t}^\top \mathbf{B}_\ast \mathbf{\bar{w}}_{\ast,t}+\mathbf{h}_t)  \mathbf{\bar{w}}_{\ast,t}^\top \mathbf{B}_\ast^\top \mathbf{B}_t + \alpha \del_t \mathbf{B}_{t}^\top \mathbf{B}_\ast \mathbf{\bar{w}}_{\ast,t}  \mathbf{\bar{w}}_{\ast,t}^\top \mathbf{B}_\ast^\top \mathbf{B}_t. \label{above}
\end{align} 
The dominant term in \eqref{above} is the last term. Specifically, we have, 
\begin{align}
    \|\mathbf{E}_1 - &\alpha \del_t \mathbf{B}_{t}^\top \mathbf{B}_\ast \mathbf{\bar{w}}_{\ast,t}  \mathbf{\bar{w}}_{\ast,t}^\top \mathbf{B}_\ast^\top \mathbf{B}_t \|\nonumber \\
    &\leq \| \del_t^2 \mathbf{B}_t^\top\mathbf{B}_\ast\mathbf{\bar{w}}_{\ast,t}\mathbf{{w}}_{t}^\top\| + \|\mathbf{B}_t^\top \mathbf{B}_{t}\mathbf{h}_{t} \mathbf{{w}}_{t}^\top\| + \|\del_t^2 \mathbf{w}_{t}  \mathbf{\bar{w}}_{\ast,t}^\top \mathbf{B}_\ast^\top \mathbf{B}_t \| \nonumber \\
    &\quad + \|\del_t(\alpha\del_{t} \mathbf{B}_{t}^\top \mathbf{B}_\ast \mathbf{\bar{w}}_{\ast,t}+\mathbf{h}_t)  \mathbf{\bar{w}}_{\ast,t}^\top \mathbf{B}_\ast^\top \mathbf{B}_t\| \nonumber \\
    &\leq \| \del_t^2 \mathbf{B}_t^\top\mathbf{B}_\ast\mathbf{\bar{w}}_{\ast,t}\mathbf{{w}}_{t}^\top\| + \|\del_t^2 \mathbf{w}_{t}  \mathbf{\bar{w}}_{\ast,t}^\top \mathbf{B}_\ast^\top \mathbf{B}_t \| +\alpha\|\del_t^2 \mathbf{B}_{t}^\top \mathbf{B}_\ast \mathbf{\bar{w}}_{\ast,t} \mathbf{\bar{w}}_{\ast,t}^\top \mathbf{B}_\ast^\top \mathbf{B}_t\| \nonumber \\
    &\quad +\|\mathbf{B}_t^\top \mathbf{B}_{t}\mathbf{h}_{t} \mathbf{{w}}_{t}^\top\| + \|\del_t\mathbf{h}_t \mathbf{\bar{w}}_{\ast,t}^\top \mathbf{B}_\ast^\top \mathbf{B}_t\| \nonumber \\
    &\leq 5.5 c_3^2 \alpha^4 \tau^2 L_{\max}^6 \kappa_{\max}^4 E_0^{-2} + 2.2\times 91 \alpha^2 \tau L_{\max}^4   + 1.1 \times 91 c_3\alpha^4 \tau^2 L_{\max}^6 \kappa_{\max}^2 E_0^{-1} \label{rrmm} \\
    &\leq  (206 + 101/c_3) \alpha^2 \tau L_{\max}^4 
    \label{rrm}
\end{align}
where \eqref{rrmm} follows by applying the Cauchy-Schwarz inequality to each of the terms in the previous inequality, and using $A_2(t), A_3(t)$, and our bound on $\mathbf{h}_t$ (from $A_1(t)$), and \eqref{rrm} follows as $\alpha$ is sufficiently small.
The last first term term can be subsumed by completing a square as follows. 
Combining \eqref{61}, \eqref{split} and \eqref{above} yields
\begin{align}
    \mathbf{B}_t^\top \mathbf{G}_t &= - \alpha \del_t \mathbf{B}_t^\top \mathbf{B}_\ast \tfrac{1}{m}\sum_{i\in\mathcal{I}_t}\mathbf{w}_{\ast,i}\mathbf{w}_{\ast,i}^\top\mathbf{B}_\ast^\top \mathbf{B}_t + \mathbf{E}_1 -  \mathbf{E}_2 +\mathbf{E}_3 + \mathbf{E}_4 \nonumber \\
    &= - \alpha \del_t \mathbf{B}_t^\top \mathbf{B}_\ast \tfrac{1}{m}\sum_{i\in\mathcal{I}_t}\mathbf{w}_{\ast,i}\mathbf{w}_{\ast,i}^\top\mathbf{B}_\ast^\top \mathbf{B}_t  + \alpha \del_t \mathbf{B}_{t}^\top \mathbf{B}_\ast \mathbf{\bar{w}}_{\ast,t}  \mathbf{\bar{w}}_{\ast,t}^\top \mathbf{B}_\ast^\top \mathbf{B}_t  \nonumber \\
    &\quad + (\mathbf{E}_1 - \alpha \del_t \mathbf{B}_{t}^\top \mathbf{B}_\ast \mathbf{\bar{w}}_{\ast,t}  \mathbf{\bar{w}}_{\ast,t}^\top \mathbf{B}_\ast^\top \mathbf{B}_t) -  \mathbf{E}_2 +\mathbf{E}_3 + \mathbf{E}_4 
    \nonumber \\
    &= - \alpha \del_t \mathbf{B}_t^\top \mathbf{B}_\ast \tfrac{1}{m}\sum_{i\in\mathcal{I}_t} (\mathbf{w}_{\ast,i} - \mathbf{\bar{w}}_{\ast,t})(\mathbf{w}_{\ast,i} - \mathbf{\bar{w}}_{\ast,t})^\top\mathbf{B}_\ast^\top \mathbf{B}_t  \nonumber \\
    &\quad + (\mathbf{E}_1 - \alpha \del_t \mathbf{B}_{t}^\top \mathbf{B}_\ast \mathbf{\bar{w}}_{\ast,t}  \mathbf{\bar{w}}_{\ast,t}^\top \mathbf{B}_\ast^\top \mathbf{B}_t) -  \mathbf{E}_2 +\mathbf{E}_3 + \mathbf{E}_4  
    \nonumber \\
    &=  - \tfrac{1}{2\alpha^2} \del_t \mathbf{P}_t + \tfrac{1}{\alpha^2}\mathbf{Z}'_t
\end{align}
where $\mathbf{P}_t = 2\alpha^3\mathbf{B}_t^\top \mathbf{B}_\ast \tfrac{1}{m}\sum_{i\in\mathcal{I}_t} (\mathbf{w}_{\ast,i} - \mathbf{\bar{w}}_{\ast,t})(\mathbf{w}_{\ast,i} - \mathbf{\bar{w}}_{\ast,t})^\top\mathbf{B}_\ast^\top \mathbf{B}_t$ and 
\begin{align}
   \mathbf{Z}'_t &:= \alpha^2(\mathbf{E}_1 - \alpha \del_t \mathbf{B}_{t}^\top \mathbf{B}_\ast \mathbf{\bar{w}}_{\ast,t}  \mathbf{\bar{w}}_{\ast,t}^\top \mathbf{B}_\ast^\top \mathbf{B}_t)  -  \alpha^2\mathbf{E}_2 +\alpha^2\mathbf{E}_3 + \alpha^2\mathbf{E}_4 \nonumber 
\end{align}
we have performed the desired decomposition; it remains to show that $\lambda_{\min}(\mathbf{P}_t)$ and $\|\mathbf{Z}'_t\|_2$ scale appropriately.
 First we lower bound $\lambda_{\min}(\mathbf{P}_t)$.
 \begin{align}
     \lambda_{\min}(\mathbf{P}_t) &= \lambda_{\min}\left(2\alpha^3\mathbf{B}_t^\top \mathbf{B}_\ast \tfrac{1}{m}\sum_{i\in\mathcal{I}_t} (\mathbf{w}_{\ast,i} - \mathbf{\bar{w}}_{\ast,t})(\mathbf{w}_{\ast,i} - \mathbf{\bar{w}}_{\ast,t})^\top\mathbf{B}_\ast^\top \mathbf{B}_t\right) \nonumber \\
     &\geq 2 \alpha^3 \sigma_{\min}(\mathbf{B}_t^\top \mathbf{B}_\ast)^2 \lambda_{\min}\left(\tfrac{1}{m}\sum_{i\in\mathcal{I}_t} (\mathbf{w}_{\ast,i} - \mathbf{\bar{w}}_{\ast,t})(\mathbf{w}_{\ast,i} - \mathbf{\bar{w}}_{\ast,t})^\top\right) \nonumber \\
     &\geq 0.2 \alpha^2 E_0  \lambda_{\min}\left(\tfrac{1}{m}\sum_{i\in\mathcal{I}_t} (\mathbf{w}_{\ast,i} - \mathbf{\bar{w}}_{\ast,t})(\mathbf{w}_{\ast,i} - \mathbf{\bar{w}}_{\ast,t})^\top\right) \label{e0}  \\
     &\geq  0.2 \alpha^2 E_0 \lambda_{\min}\left(\tfrac{1}{M }\sum_{i=1}^M (\mathbf{w}_{\ast,i} - \mathbf{\bar{w}}_{\ast})(\mathbf{w}_{\ast,i} - \mathbf{\bar{w}}_{\ast})^\top\right) \nonumber \\
     &\quad - 0.2 \alpha^2 E_0  \bigg\|\tfrac{1}{m}\sum_{i\in\mathcal{I}_t} \mathbf{w}_{\ast,i} \mathbf{w}_{\ast,i}^\top - \tfrac{1}{M}\sum_{i'=1}^M \mathbf{w}_{\ast,i'} \mathbf{w}_{\ast,i'}^\top \bigg\| \nonumber \\
     &\quad -  0.2 \alpha^2 E_0 \big\| \mathbf{w}_{\ast,t} \mathbf{w}_{\ast,t}^\top -  \mathbf{\bar{w}}_{\ast} \mathbf{\bar{w}}_{\ast}^\top \big\| \nonumber \\
     &\geq  0.15 \alpha^2 E_0 \mu^2 - 6\alpha^4 E_0 L_{\max}^4 \label{e00}  \\
     &\geq 0.15 \alpha^2 E_0 \mu^2 \label{e000}
 \end{align} 
 where \eqref{e0} follows by Lemma \ref{lem:distlb}, \eqref{e00} follows by Assumption \ref{assump:td} and $A_0$, and \eqref{e000} follows as $\alpha^2\leq \frac{1}{120\kappa_\ast^2 L_{\max}^2}$. 
 Now we upper bound $\|\mathbf{Z}'_t\|_2$. We have already upper bounded $\|\mathbf{E}_1 - \alpha \del_t \mathbf{B}_{t}^\top \mathbf{B}_\ast \mathbf{\bar{w}}_{\ast,t}  \mathbf{\bar{w}}_{\ast,t}^\top \mathbf{B}_\ast^\top \mathbf{B}_t\|$ in \eqref{rrm}. We next upper bound $\|\mathbf{E}_2\|_2$ and $\|\mathbf{E}_3\|_2$ by $A_2(t), A_{3}(t),$ and $A_{2,i,t}(1)$. We have
 \begin{align}
     \|\alpha^2\mathbf{E}_2\|_2 &\leq 32
     \alpha^4  L_{\max}^4 \nonumber \\ 
     \|\alpha^2\mathbf{E}_3\|_2 &\leq 28c_3^2 \alpha^6 \tau^2 L_{\max}^6\kappa_{\max}^2 E_0^{-2} \nonumber  
 \end{align}
using the triangle and Cauchy-Schwarz inequalities. Now we turn to $\|\alpha^2\mathbf{E}_4\|_2$. Recall that $\mathbf{E}_4$ is the sum of local gradients across all clients and all local updates beyond the first local update. We show that these gradients are sufficiently small such that $\|\del_{t+1}\|_2$ cannot grow beyond the desired threshold.
Recall that $\mathbf{E}_4 = \sum_{s=2}^{\tau-1} \mathbf{B}_t^\top \mathbf{e}_{t,i,s}\mathbf{w}_{t,i,s}^\top$. To bound this sum it is critical to control the evolution of $\mathbf{e}_{t,i,s}$. The idea is to split $\mathbf{e}_{t,i,s}$ into its projection onto $\col(\mathbf{B}_{t,i,s-1})\approx \col(\mathbf{B}_t)$ and its projection onto $\col(\mathbf{B}_{t,i,s-1})^\perp\approx \col(\mathbf{B}_t)^\perp.$ Then, we can show that the magnitude of the projection onto $\col(\mathbf{B}_{t,i,s-1})$ is going to zero very fast (the head is quickly learned, meaning it fits the product as much as it can with what it has to work with, i.e. $\col(\mathbf{B}_{t,i,s-1})$). On the other hand, the magnitude of the projection onto $\col(\mathbf{B}_{t,i,s-1})^\perp$ is slowly going to zero, since this reducing this error requires changing the representation and the representation changes slower than the head. The saving grace is that this error is proportional to $\dist(\mathbf{B}_{t,i,s-1},\mathbf{B}_\ast)$, which for all $s$ is linearly converging to zero with $t$. 

To show this, pick any $i\in\mathcal{I}_t$ and let 
$\mathbf{\hat{B}}_{t,i,s},\mathbf{R}_{t,i,s}$ denote the QR-factorization of $\mathbf{B}_{t,i,s}$. Define $\tilde{\del}_{t,i,s-1} \coloneqq \mathbf{\hat{B}}_{t,i,s-1}\mathbf{\hat{B}}_{t,i,s-1}^\top - \alpha \mathbf{B}_{t,i,s-1}\mathbf{B}_{t,i,s-1}^\top$ and $\omega_{t,i,s-1} \coloneqq \alpha \mathbf{w}_{t,i,s-1}^\top \del_{t,i,s-1}\mathbf{w}_{t,i,s} + \alpha^2 \mathbf{w}_{t,i,s-1}^\top \mathbf{B}_{t,i,s-1}^\top \mathbf{B}_\ast \mathbf{w}_{\ast,i}$. By expanding $\mathbf{e}_{t,i,s}$, we find
\begin{align}
    &\mathbf{e}_{t,i,s}\nonumber \\
    &= (\mathbf{I}_d - \alpha \mathbf{B}_{t,i,s-1}\mathbf{B}_{t,i,s-1}^\top - \alpha \mathbf{w}_{t,i,s-1}^\top \del_{t,i,s-1}\mathbf{w}_{t,i,s}\mathbf{I}_d - \alpha^2 \mathbf{w}_{t,i,s-1}^\top \mathbf{B}_{t,i,s-1}^\top \mathbf{B}_\ast \mathbf{w}_{\ast,i}\mathbf{I}_d)\mathbf{e}_{t,i,s-1}  \nonumber \\
    &= (\mathbf{I}_d - \mathbf{\hat{B}}_{t,i,s-1}\mathbf{\hat{B}}_{t,i,s-1}^\top)\mathbf{e}_{t,i,s-1}  + \big( \tilde{\del}_{t,i,s-1} - \omega_{t,i,s-1}\mathbf{I}_d\big)\mathbf{e}_{t,i,s-1}  \nonumber \\
    &= (\mathbf{I}_d - \mathbf{\hat{B}}_{t,i,s-1}\mathbf{\hat{B}}_{t,i,s-1}^\top)\mathbf{e}_{t,i,s-1}  + \big(\tilde{\del}_{t,i,s-1}  - \omega_{t,i,s-1}\mathbf{I}_d\big) (\mathbf{I}_d - \mathbf{\hat{B}}_{t,i,s-2}\mathbf{\hat{B}}_{t,i,s-2}^\top)\mathbf{e}_{t,i,s-2}  \nonumber \\
    &\quad + \big(\tilde{\del}_{t,i,s-1} - \omega_{t,i,s-1}\mathbf{I}_d\big) \big(\tilde{\del}_{t,i,s-2}  - \omega_{t,i,s-2}\mathbf{I}_d\big)\mathbf{e}_{t,i,s-2} 
\end{align}
Therefore, 
\begin{align}
    \mathbf{B}_t^\top \mathbf{e}_{t,i,s}
    &= \mathbf{B}_t^\top (\mathbf{I}_d - \mathbf{\hat{B}}_{t,i,s-1}\mathbf{\hat{B}}_{t,i,s-1}^\top)\mathbf{e}_{t,i,s-1} \nonumber \\
   & \quad + \mathbf{B}_t^\top\big(\tilde{\del}_{t,i,s-1}  - \omega_{t,i,s-1}\mathbf{I}_d\big) (\mathbf{I}_d - \mathbf{\hat{B}}_{t,i,s-2}\mathbf{\hat{B}}_{t,i,s-2}^\top)\mathbf{e}_{t,i,s-2}  \nonumber \\
    &\quad + \mathbf{B}_t^\top\big(\tilde{\del}_{t,i,s-1} - \omega_{t,i,s-1}\mathbf{I}_d\big) \big(\tilde{\del}_{t,i,s-2}  - \omega_{t,i,s-2}\mathbf{I}_d\big)\mathbf{e}_{t,i,s-2}  \label{3terms}
\end{align}
For the first term, we have
\begin{align}
    \|\mathbf{B}_t^\top (\mathbf{I}_d -& \mathbf{\hat{B}}_{t,i,s-1}\mathbf{\hat{B}}_{t,i,s-1}^\top)\mathbf{e}_{t,i,s-1}\|_2\nonumber \\
    &\leq \|\mathbf{B}_{t,i,s-1}^\top (\mathbf{I}_d - \mathbf{\hat{B}}_{t,i,s-1}\mathbf{\hat{B}}_{t,i,s-1}^\top)\mathbf{e}_{t,i,s-1}\|_2\nonumber \\
    &\quad + \|(\mathbf{B}_t - \mathbf{B}_{t,i,s-1})^\top (\mathbf{I}_d - \mathbf{\hat{B}}_{t,i,s-1}\mathbf{\hat{B}}_{t,i,s-1}^\top)\mathbf{e}_{t,i,s-1}\|_2 \nonumber \\
    &= \|(\mathbf{B}_t - \mathbf{B}_{t,i,s-1})^\top (\mathbf{I}_d - \mathbf{\hat{B}}_{t,i,s-1}\mathbf{\hat{B}}_{t,i,s-1}^\top)\mathbf{e}_{t,i,s-1}\|_2  \label{eq}\\
    &\leq \| (\mathbf{I}_d - \mathbf{\hat{B}}_{t,i,s-1}\mathbf{\hat{B}}_{t,i,s-1}^\top)\mathbf{e}_{t,i,s-1}\|_2  \sum_{r=1}^{s-1}\|\mathbf{B}_{t,i,r} - \mathbf{B}_{t,i,r-1} \|_2   \label{suum} \\
    &\leq 7 \| (\mathbf{I}_d - \mathbf{\hat{B}}_{t,i,s-1}\mathbf{\hat{B}}_{t,i,s-1}^\top)\mathbf{B}_\ast \mathbf{w}_{\ast,i}\|_2  \alpha^{1.5} \tau L_{\max}^2  \label{suuum} \\
    &\leq 8\alpha^{1.5} \tau L_{\max}^3  \dist_t \label{laast}
\end{align}
where \eqref{eq} follows since $\mathbf{B}_{t,i,s-1}^\top(\mathbf{I}_d - \mathbf{\hat{B}}_{t,i,s-1}\mathbf{\hat{B}}_{t,i,s-1}^\top) = \mathbf{0}$, \eqref{suum} follows using the Cauchy-Schwarz and triangle inequalities, \eqref{suuum} follows using that $(\mathbf{I}_d - \mathbf{\hat{B}}_{t,i,s-1}\mathbf{\hat{B}}_{t,i,s-1}^\top)\mathbf{B}_{t,i,s-1} = \mathbf{0}$ and applying $A_{2,t,i}(\tau)$ and $A_{3,t,i}(\tau)$, \eqref{laast} follows by the fact that $\|(\mathbf{I}_d - \mathbf{\hat{B}}_{t,i,s-1}\mathbf{\hat{B}}_{t,i,s-1}^\top)\mathbf{B}_\ast\|= \dist(\mathbf{B}_{t,i,s}, \mathbf{B}_\ast) \leq 1.1\dist_t$ by $A_{4,t,i}(\tau)$.
For the second term in \eqref{3terms}, note that
\begin{align}
    |\omega_{t,i,s-1}| &\leq \alpha|\mathbf{w}_{t,i,s-1}^\top \del_{t,i,s-1}\mathbf{w}_{t,i,s-1}| + \alpha^2|\mathbf{w}_{t,i,s-1}^\top \mathbf{B}_{t,i,s-1}^\top\mathbf{B}_\ast\mathbf{w}_{\ast,i}|\nonumber \\
    &\leq  8 c_3  \alpha^{4} \tau L_{\max}^4 \kappa_{\max}^2 E_0^{-1} + 2.2 \alpha^{1.5} L_{\max}^2 \nonumber\\
    &\leq 3\alpha^{2} L_{\max}^2
\end{align}
As a result, we have
\begin{align}
   \| \mathbf{B}_t^\top\big(\tilde{\del}_{t,i,s-1}&  - \omega_{t,i,s-1}\mathbf{I}_d\big) (\mathbf{I}_d - \mathbf{\hat{B}}_{t,i,s-2}\mathbf{\hat{B}}_{t,i,s-2}^\top)\mathbf{e}_{t,i,s-2}\|_2 \nonumber \\
   &\leq  \| \mathbf{B}_t^\top \tilde{\del}_{t,i,s-1}  (\mathbf{I}_d - \mathbf{\hat{B}}_{t,i,s-2}\mathbf{\hat{B}}_{t,i,s-2}^\top)\mathbf{e}_{t,i,s-2}\|_2   \nonumber \\
    &\quad + |\omega_{t,i,s-1} |\| \mathbf{B}_t^\top (\mathbf{I}_d - \mathbf{\hat{B}}_{t,i,s-2}\mathbf{\hat{B}}_{t,i,s-2}^\top)\mathbf{e}_{t,i,s-2}\|_2    \nonumber \\
    &\leq  \| \mathbf{B}_t^\top \tilde{\del}_{t,i,s-1}  (\mathbf{I}_d - \mathbf{\hat{B}}_{t,i,s-2}\mathbf{\hat{B}}_{t,i,s-2}^\top)\mathbf{e}_{t,i,s-2}\|_2   \nonumber \\
    &\quad + 3.3 \alpha^{1.5} L_{\max}^2\| (\mathbf{I}_d - \mathbf{\hat{B}}_{t,i,s-2}\mathbf{\hat{B}}_{t,i,s-2}^\top)\mathbf{B}_{\ast}\mathbf{w}_{\ast,i}\|_2    \nonumber \\
    &\leq  \| \mathbf{B}_t^\top \tilde{\del}_{t,i,s-1}  (\mathbf{I}_d - \mathbf{\hat{B}}_{t,i,s-2}\mathbf{\hat{B}}_{t,i,s-2}^\top)\mathbf{e}_{t,i,s-2}\|_2   \nonumber \\
    &\quad + 3.7 \alpha^{1.5} L_{\max}^3 \dist_t  \label{42} 
\end{align}
where \eqref{42} follows by $A_{4,t,i}(\tau)$, and
\begin{align}
    \| &\mathbf{B}_t^\top \tilde{\del}_{t,i,s-1}  (\mathbf{I}_d - \mathbf{\hat{B}}_{t,i,s-2}\mathbf{\hat{B}}_{t,i,s-2}^\top)\mathbf{e}_{t,i,s-2}\|_2 \nonumber\\
    &\leq \| \mathbf{B}_t^\top \mathbf{\hat{B}}_{t,i,s-1}   \mathbf{\hat{B}}_{t,i,s-1}^\top(\mathbf{I}_d - \mathbf{\hat{B}}_{t,i,s-2}\mathbf{\hat{B}}_{t,i,s-2}^\top)\mathbf{e}_{t,i,s-2}\|_2  \nonumber \\
    &\quad + \alpha\| \mathbf{B}_t^\top \mathbf{{B}}_{t,i,s-1}   \mathbf{{B}}_{t,i,s-1}^\top(\mathbf{I}_d - \mathbf{\hat{B}}_{t,i,s-2}\mathbf{\hat{B}}_{t,i,s-2}^\top)\mathbf{e}_{t,i,s-2}\|_2  \nonumber \\
    &= \| \mathbf{B}_t^\top \mathbf{\hat{B}}_{t,i,s-1} (\mathbf{R}^{-1}_{t,i,s-1})^\top  \mathbf{{B}}_{t,i,s-1}^\top(\mathbf{I}_d - \mathbf{\hat{B}}_{t,i,s-2}\mathbf{\hat{B}}_{t,i,s-2}^\top)\mathbf{e}_{t,i,s-2}\|_2  \nonumber \\
    &\quad + \alpha\| \mathbf{B}_t^\top \mathbf{{B}}_{t,i,s-1}   \mathbf{{B}}_{t,i,s-1}^\top(\mathbf{I}_d - \mathbf{\hat{B}}_{t,i,s-2}\mathbf{\hat{B}}_{t,i,s-2}^\top)\mathbf{e}_{t,i,s-2}\|_2  \nonumber \\
    &=\alpha \| \mathbf{B}_t^\top \mathbf{\hat{B}}_{t,i,s-1} (\mathbf{R}^{-1}_{t,i,s-1})^\top  \mathbf{{w}}_{t,i,s-2} \mathbf{e}_{t,i,s-2}^\top(\mathbf{I}_d - \mathbf{\hat{B}}_{t,i,s-2}\mathbf{\hat{B}}_{t,i,s-2}^\top)\mathbf{e}_{t,i,s-2}\|_2  \nonumber\\
    &\quad + \alpha^2\| \mathbf{B}_t^\top \mathbf{{B}}_{t,i,s-1} \mathbf{{w}}_{t,i,s-2}  \mathbf{{e}}_{t,i,s-2}^\top(\mathbf{I}_d - \mathbf{\hat{B}}_{t,i,s-2}\mathbf{\hat{B}}_{t,i,s-2}^\top)\mathbf{e}_{t,i,s-2}\|_2 \label{hgh} \\
    &\leq 44 \alpha^{1.5} L_{\max}^3 \dist_t \label{44}
\end{align}
where \eqref{hgh} follows since  $\mathbf{B}_{t,i,s-2}(\mathbf{I}_d - \mathbf{\hat{B}}_{t,i,s-2}\mathbf{\hat{B}}_{t,i,s-2}^\top) = \mathbf{0}$ and \eqref{44} follows using the Cauchy-Schwarz inequality, $A_3(t)$, $A_{2,t,i}(\tau)$, $A_{3,t,i}(\tau)$, and $A_{4,t,i}(\tau)$. Next, recalling that $\mathbf{\hat{B}}_{t,i,s}\mathbf{R}_{t,i,s}$ is the QR-factorization of $\mathbf{{B}}_{t,i,s}$, we have, for any $s$,
\begin{align}
    \|\tilde{\del}_{t,i,s-1}\| &= \|\mathbf{\hat{B}}_{t,i,s-1} \mathbf{\hat{B}}_{t,i,s-1}^\top - \alpha \mathbf{{B}}_{t,i,s-1} \mathbf{{B}}_{t,i,s-1}^\top\| \nonumber \\
    &\leq \|\mathbf{\hat{B}}_{t,i,s-1}(\mathbf{I}_k - \alpha \mathbf{R}_{t,i,s-1} \mathbf{R}_{t,i,s-1}^\top)\mathbf{\hat{B}}_{t,i,s-1}^\top \| \nonumber \\
    &\leq \|\mathbf{I}_k - \alpha \mathbf{R}_{t,i,s-1} \mathbf{R}_{t,i,s-1}^\top \| \nonumber \\
    &\leq \max(|1 - \alpha \sigma_{\min}^2(\mathbf{B}_{t,i,s-1})|, |1 - \alpha \sigma_{\max}^2(\mathbf{B}_{t,i,s-1})| ) \label{ggg} \\
    &\leq \max(|1 - \alpha \tfrac{1 - \|\del_{t,i,s-1}\|}{\alpha}|, |1 - \alpha  \tfrac{1 + \|\del_{t,i,s-1}\|}{\alpha})| )  \nonumber \\
    &= \|\del_{t,i,s-1}\|  \nonumber \\ 
    &\leq 2c_3\alpha^2 \tau L_{\max}^2 \kappa_{\max}^2 E_0^{-1} \label{gg}
\end{align}
where \eqref{ggg} follows by Weyl's inequality and \eqref{gg} follows 
by $A_{3,t,i}(s-1)$. Furthermore,
$
  \|\tilde{\del}_{t,i,s-1} +\omega_{t,i,s-1}\mathbf{I}_d\| \leq 2c_3\alpha^2 \tau L_{\max}^2 \kappa_{\max}^2 E_0^{-1} + 3 \alpha^2L_{\max}^2 \leq 3 c_3\alpha^2 \tau L_{\max}^2 \kappa_{\max}^2 E_0^{-1}
$
for any $s$. 
Thus, for the third term in \eqref{3terms}, for any $s>2$,
\begin{align}
    \| &\mathbf{B}_t^\top\big(\tilde{\del}_{t,i,s-1} - \omega_{t,i,s-1}\mathbf{I}_d\big) \big(\tilde{\del}_{t,i,s-2}  - \omega_{t,i,s-2}\mathbf{I}_d\big)\mathbf{e}_{t,i,s-2}  \|_2 \nonumber \\
    &=  \| \mathbf{B}_t^\top\big(\tilde{\del}_{t,i,s-1} - \omega_{t,i,s-1}\mathbf{I}_d\big) \big(\tilde{\del}_{t,i,s-2}  - \omega_{t,i,s-2}\mathbf{I}_d\big)( \tilde{\del}_{t,i,s-3}  - \omega_{t,i,s-3}\mathbf{I}_d)\mathbf{e}_{t,i,s-3} \|_2\nonumber \\
    &\quad + \|\mathbf{B}_t^\top\big(\tilde{\del}_{t,i,s-1} - \omega_{t,i,s-1}\mathbf{I}_d\big) \big(\tilde{\del}_{t,i,s-2}  - \omega_{t,i,s-2}\mathbf{I}_d\big)( \mathbf{I}_d - \mathbf{\hat{B}}_{t,i,s-3}\mathbf{\hat{B}}_{t,i,s-3}^\top)\mathbf{e}_{t,i,s-3} \|_2  \nonumber \\
    &\leq  \| \mathbf{B}_t^\top\big(\tilde{\del}_{t,i,s-1} - \omega_{t,i,s-1}\mathbf{I}_d\big) \big(\tilde{\del}_{t,i,s-2}  - \omega_{t,i,s-2}\mathbf{I}_d\big)( \tilde{\del}_{t,i,s-3}  - \omega_{t,i,s-3}\mathbf{I}_d)\mathbf{e}_{t,i,s-3} \|_2 \nonumber \\
    &\quad +  10 c_3^2\alpha^{3.5} \tau^2 L^5_{\max} \kappa_{\max}^4 E_0^{-2} \dist_t  \nonumber \\
    &\vdots \nonumber \\
    &\leq \bigg\|\mathbf{B}_t^\top\prod_{r=1}^s\big(\tilde{\del}_{t,i,s-r} - \omega_{t,i,s-r}\mathbf{I}_d\big) \mathbf{e}_{t,i,s-r}\bigg\|_2 \nonumber \\
    &\quad +  10 c_3^2 \alpha^{3.5} \tau^2 L^5_{\max} \kappa^4_{\max}E_0^{-2} \dist_t \sum_{r=0}^s (3c_3)^r\alpha^{2r}\tau^r L_{\max}^{2r} \kappa_{\max}^{2r} E_0^{-r}\nonumber \\
    &\leq  \bigg\|\mathbf{B}_t^\top\prod_{r=1}^s\big(\tilde{\del}_{t,i,s-r} - \omega_{t,i,s-r}\mathbf{I}_d\big) \mathbf{e}_{t,i,s-r}\bigg\|_2  + \frac{ 10c_3^2 \alpha^{3.5} \tau^2 L^5_{\max} \kappa_{\max}^4 E_{0}^{-2} \dist_t}{1 - 3c_3 \alpha^{2} \tau L^2_{\max} \kappa_{\max}^2 E_0^{-1}}  \nonumber  \\
    &\leq  3.5 \times (3 c_3)^s \alpha^{2s -0.5}\tau^s{L_{\max}^{2s+1} }\kappa_{\max}^{2s}E_0^{-s} + {  15 c_3^2 \alpha^{3.5} \tau^2 L_{\max}^5 \kappa_{\max}^4 E_0^{-2} \dist_t} \label{45}
\end{align}
and for $s=2$ we have
\begin{align}
    \| \mathbf{B}_t^\top\big(\tilde{\del}_{t,i,s-1} - \omega_{t,i,s-1}\mathbf{I}_d\big)& \big(\tilde{\del}_{t,i,s-2}  - \omega_{t,i,s-2}\mathbf{I}_d\big)\mathbf{e}_{t,i,s-2}  \|_2  \nonumber  \\
    &\leq  3.5\times (3c_3)^2 \alpha^{3.5} \tau^{2}  L_{\max}^{5}\kappa_{\max}^{4}E_0^{-2}.  \label{46}
\end{align}
Thus, using $\|\mathbf{w}_{t,i,s}\|_2\leq 2 \sqrt{\alpha}L_{\max}$ with \eqref{3terms}, \eqref{laast}, \eqref{42}, \eqref{44}, \eqref{45}, and \eqref{46}, we have
\begin{align}
  \|  \alpha^2 \mathbf{E}_4 \|_2 &\leq 2 \alpha^{2.5} L_{\max} \sum_{s=2}^{\tau-1} \bigg(3.5 \times (3c_3)^2\alpha^{2s-0.5}\tau^s  L_{\max}^{2s+1}\kappa_{\max}^{2s} E_0^{-2s}\nonumber\\
  &\quad \quad \quad \quad \quad \quad \quad\quad  + 15c_3^2\alpha^{3.5} \tau^2  L_{\max}^5 \kappa_{\max}^4 \dist_t  +  (8 \tau + 48)\alpha^{1.5} L_{\max}^3 \dist_t \bigg)\nonumber \\
  &\leq \frac{ 63 c_3^2 \alpha^{6}\tau^{2} L_{\max}^6 \kappa_{\max}^4 E_0^{-2} }{1 - 3 c_s \alpha^{2}\tau L_{\max}^2 \kappa_{\max}^2 E_0^{-1} } + (30c_3^2\alpha^{6} \tau^3 L_{\max}^6 \kappa_{\max}^4E_0^{-2}   + 66\alpha^{4} \tau^2 L_{\max}^4 )\dist_t  \nonumber \\
  &\leq 
  90 c_3^2 \alpha^{6}\tau^{2} L_{\max}^6 \kappa_{\max}^4 E_0^{-2} +  94\alpha^{4} \tau^2 L_{\max}^4 \dist_t \nonumber
\end{align}
where the last inequality follows by choice of $\alpha$.
Combining all terms, we obtain
\begin{align}
    \|\del_{t+1}\|_2 &\leq (1 - 0.15 \alpha^2 E_0 \mu^2 )\|\del_t\|_2 +  (206 + 101/c_3) \alpha^4 \tau L_{\max}^4 + 32
     \alpha^4  L_{\max}^4  \nonumber
    \\
    &\quad   + 118 c_3^2\alpha^{6}\tau^{2} L_{\max}^6 \kappa_{\max}^4 E_0^{-2}  +  94\alpha^{4} \tau^2 L_{\max}^4 \dist_t \nonumber \\
    &\leq  (1 - 0.15 \alpha^2 E_0 \mu^2 )\|\del_t\|_2 +  (340 + 101/c_3) \alpha^4 \tau L_{\max}^4   +  94\alpha^{4} \tau^2 L_{\max}^4  \dist_t \label{leq} \\
    &\quad  \vdots \nonumber \\
    &\leq (1 - 0.15 \alpha^2 E_0 \mu^2 )^t\|\del_0\|_2 \nonumber \\
    &\quad + \sum_{t' = 1}^t (1 - 0.15 \alpha^2 E_0 \mu^2)^{t-t'} \big( (340 + 101/c_3) \alpha^4 \tau L_{\max}^4 + 94\alpha^{4} \tau^2 L_{\max}^4 \dist_{t'}\big)  \nonumber \\ 
    &\leq \|\del_0\|_2 + (340 + 101/c_3) \alpha^4 \tau L_{\max}^4 \sum_{t' = 1}^t (1 - 0.15\alpha^2 E_0 \mu^2)^{t-t'}    \nonumber \\
    &\quad + 94\alpha^{4} \tau^2 L_{\max}^4\sum_{t' = 1}^t (1 - 0.04 \alpha^2 \tau E_0 \mu^2 )^{t'} \label{ms13} \\
      &\leq \|\del_0\|_2 + ({7(340 + 101/c_3)+25\times 94} ) \alpha^2 \tau   L_{\max}^2 \kappa_{\max}^2 E_0^{-1} \nonumber \\
      &\leq  \alpha^2 \tau   L_{\max}^2 \kappa_{\max}^2 + ({4730+ 101/c_3} ) \alpha^2 \tau   L_{\max}^2 \kappa_{\max}^2 E_0^{-1} \nonumber \\
      &\leq c_3\alpha^2 \tau L_{\max}^2 \kappa_{\max}^2 
\end{align}
where \eqref{leq} follows by choice of $\alpha \leq \tfrac{1 - \delta_0}{c_3\sqrt{\tau} L_{\max}^2 \kappa_{\max}^2}\leq \tfrac{E_0}{c_3\sqrt{\tau} L_{\max}^2 \kappa_{\max}^2 }$, \eqref{ms13} follows from $A_5(t)$, and the last inequality is due to $c_3 = 4800$. 
 
 
 
 
 
 
 
 
 
 
 
 \end{proof}

\begin{lemma} \label{lem:a3}
$\cap_{i\in \mathcal{I}_t}\big(A_{1,t,i}(\tau) \cap A_{2,t,i}(\tau)\cap A_{3,t,i}(\tau)\big) \cap A_2(t)  \implies A_{4}(t+1)$.
\end{lemma}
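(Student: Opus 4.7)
The plan is to exploit the fact that $\mathbf{B}_{\ast,\perp}^\top$ annihilates the ``signal'' term in each local update. Rewrite the representation update as $\mathbf{B}_{t,i,s+1} = \mathbf{B}_{t,i,s}(\mathbf{I}_k - \alpha \mathbf{w}_{t,i,s}\mathbf{w}_{t,i,s}^\top) + \alpha \mathbf{B}_{\ast}\mathbf{w}_{\ast,i}\mathbf{w}_{t,i,s}^\top$, and note that left-multiplication by $\mathbf{B}_{\ast,\perp}^\top$ kills the second summand. Unrolling over the $\tau$ local steps and averaging across the sampled clients then gives the clean identity
\[
\mathbf{B}_{\ast,\perp}^\top \mathbf{B}_{t+1} \;=\; \mathbf{B}_{\ast,\perp}^\top \mathbf{B}_{t}\, \mathbf{M}_t, \qquad \mathbf{M}_t \coloneqq \tfrac{1}{m}\sum_{i\in \mathcal{I}_t}\prod_{s=0}^{\tau-1}(\mathbf{I}_k - \alpha \mathbf{w}_{t,i,s}\mathbf{w}_{t,i,s}^\top),
\]
so it suffices to establish $\|\mathbf{M}_t\|_2 \leq 1 - 0.04\,\alpha^2\tau \mu^2 E_0$.

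To control $\|\mathbf{M}_t\|_2$, I would Taylor-expand each product factor and keep only the first-order part. By $A_{2,t,i}(\tau)$ each $\alpha\|\mathbf{w}_{t,i,s}\|^2 \leq 4\alpha^2 L_{\max}^2 \ll 1$, so $\mathbf{M}_t = \mathbf{I}_k - \tfrac{\alpha}{m}\sum_{i\in \mathcal{I}_t}\sum_{s=0}^{\tau-1}\mathbf{w}_{t,i,s}\mathbf{w}_{t,i,s}^\top + \mathbf{E}_t$ with $\|\mathbf{E}_t\|_2 = O(\alpha^4 \tau^2 L_{\max}^4)$ from the discarded higher-order products. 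The $s=0$ term is $\alpha \mathbf{w}_t\mathbf{w}_t^\top$, of norm $\leq 4\alpha^2 L_{\max}^2$ by $A_2(t)$, and negligible. For $s\geq 1$, I would invoke $A_{1,t,i}(\tau)$ to write $\mathbf{w}_{t,i,s} = \alpha \mathbf{B}_{t,i,s-1}^\top \mathbf{B}_\ast \mathbf{w}_{\ast,i} + \mathbf{r}_{t,i,s}$ with residual $\|\mathbf{r}_{t,i,s}\|_2 = O(\alpha^{2.5}\tau L_{\max}^3\kappa_{\max}^2 E_0^{-1})$, and, using the slow-motion estimate $\|\mathbf{B}_{t,i,s-1}-\mathbf{B}_t\|_2 = O(\alpha^{1.5}\tau L_{\max}^2)$ (obtained as in the derivation \eqref{chara2}), replace $\mathbf{B}_{t,i,s-1}$ by $\mathbf{B}_t$. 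After all substitutions the dominant part of $\frac{\alpha}{m}\sum_{i,s\geq 1}\mathbf{w}_{t,i,s}\mathbf{w}_{t,i,s}^\top$ is
\[
\alpha^3(\tau-1)\,\mathbf{B}_t^\top \mathbf{B}_\ast \Bigl(\tfrac{1}{m}\sum_{i\in \mathcal{I}_t}\mathbf{w}_{\ast,i}\mathbf{w}_{\ast,i}^\top\Bigr)\mathbf{B}_\ast^\top \mathbf{B}_t,
\]
with all deferred errors strictly higher order than $\alpha^2 \tau L_{\max}^2$.

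To lower-bound this PSD matrix, I would use event $A_0$ to replace $\tfrac{1}{m}\sum_{i\in \mathcal{I}_t}\mathbf{w}_{\ast,i}\mathbf{w}_{\ast,i}^\top$ by $\tfrac{1}{M}\sum_{i=1}^M \mathbf{w}_{\ast,i}\mathbf{w}_{\ast,i}^\top$ up to $4\alpha^2 L_{\max}^4$, and then observe that Assumption \ref{assump:td} combined with $\bar{\mathbf{w}}_{\ast}\bar{\mathbf{w}}_{\ast}^\top \succeq \mathbf{0}$ yields $\tfrac{1}{M}\sum_i \mathbf{w}_{\ast,i}\mathbf{w}_{\ast,i}^\top \succeq \mu^2 \mathbf{I}_k$, so $\lambda_{\min}$ of the empirical covariance is at least $\tfrac{1}{2}\mu^2$ for $\alpha$ small. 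The second ingredient is Lemma \ref{lem:distlb}, which (using $A_3(t)$ and $A_4(t)$, available from prior steps of the global induction) gives $\sigma_{\min}^2(\mathbf{B}_t^\top \mathbf{B}_\ast) \geq 0.1 E_0/\alpha$. Multiplying these together yields $\lambda_{\min}(\cdot) \geq 0.05\,\alpha^2(\tau-1)\mu^2 E_0$, and using $\tau \geq 2$ (so $\tau - 1 \geq \tau/2$) produces the target rate with a safety margin.

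The main obstacle is bookkeeping: all four sources of slippage (higher-order terms in the product expansion, the residual $\mathbf{r}_{t,i,s}$, the drift $\mathbf{B}_{t,i,s-1}-\mathbf{B}_t$, and the sampling error in $A_0$) must each come out strictly higher-order than $\alpha^2 \tau \mu^2 E_0$, so that the stepsize constraint $\alpha \lesssim (1-\delta_0)/(\sqrt{\tau}L_{\max}\kappa_{\max}^2)$ lets them be absorbed into the slack between $0.05$ and $0.04$. The argument is clean because $\mathbf{B}_{\ast,\perp}^\top$ eliminates the large $\mathbf{B}_\ast$ term up front, isolating the purely contractive factor $\mathbf{M}_t$ whose shrinkage is precisely the diversity-driven effect highlighted in Section \ref{sec:sketch}.
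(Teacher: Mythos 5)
Your proof takes exactly the same route as the paper: annihilate the $\mathbf{B}_\ast$ component with $\mathbf{B}_{\ast,\perp}^\top$, binomially expand the product $\mathbf{M}_t$, substitute $\mathbf{w}_{t,i,s}\approx\alpha\mathbf{B}_t^\top\mathbf{B}_\ast\mathbf{w}_{\ast,i}$ using $A_{1,t,i}$ together with the slow-drift bound on $\mathbf{B}_{t,i,s-1}-\mathbf{B}_t$, replace sample by population covariance via $A_0$, and lower-bound the resulting $\alpha^3(\tau-1)\mathbf{B}_t^\top\mathbf{B}_\ast(\cdot)\mathbf{B}_\ast^\top\mathbf{B}_t$ by combining Lemma \ref{lem:distlb} with the observation that the uncentered second moment dominates the centered one. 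The one wrinkle is that absorbing the $A_0$ deviation by a crude factor $\tfrac{1}{2}$ leaves $0.025\,\alpha^2\tau\mu^2 E_0$ rather than the stated $0.04$; since under the stepsize constraint $4\alpha^2 L_{\max}^4 \ll \mu^2$, a sharper absorption recovers the constant, so this is a bookkeeping detail and not a gap.
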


\begin{proof}
We have \begin{align}
    \mathbf{B}_{t+1} &=  \mathbf{{B}}_{t} \bigg(\frac{1}{m} \sum_{i\in \mathcal{I}_t}   \prod_{s=0}^{\tau-1}(\mathbf{I}_k - \alpha \mathbf{w}_{t,i,s}\mathbf{w}_{t,i,s}^\top)\bigg)\nonumber \\
   & \quad+ \mathbf{\hat{B}}_\ast  \bigg(\frac{\alpha}{m} \sum_{i\in \mathcal{I}_t}  \mathbf{w}_{\ast,i}  \sum_{s=0}^{\tau-1}\mathbf{w}_{t,i,s}^\top \prod_{r=s+1}^{\tau-1}(\mathbf{I}_k - \alpha \mathbf{w}_{t,i,r}\mathbf{w}_{t,i,r}^\top) \bigg) \nonumber 
    \end{align}
    which implies
    \begin{align}
 \mathbf{B}_{\ast,\perp}^\top \mathbf{B}_{t+1} &= \mathbf{B}_{\ast,\perp}^\top\mathbf{{B}}_{t} (\mathbf{I} - \alpha \mathbf{w}_t \mathbf{w}_t^\top) \left(\frac{1}{m} \sum_{i\in \mathcal{I}_t}   \prod_{s=1}^{\tau-1}(\mathbf{I}_k - \alpha \mathbf{w}_{t,i,s}\mathbf{w}_{t,i,s}^\top)\right) \label{ssslast}
\end{align}
We can expand the right product of $\mathbf{B}_t (\mathbf{I} - \alpha \mathbf{w}_t \mathbf{w}_t^\top)$ using the binomial expansion as follows:
\begin{align}
\frac{1}{m} \sum_{i\in \mathcal{I}_t}   \prod_{s=1}^{\tau-1}(\mathbf{I}_k - \alpha \mathbf{w}_{t,i,s}\mathbf{w}_{t,i,s}^\top) &=  \mathbf{I}_k - \frac{\alpha}{m} \sum_{i\in \mathcal{I}_t}   \sum_{s = 1}^{\tau-1} \mathbf{w}_{t,i,s}\mathbf{w}_{t,i,s}^\top\nonumber \\
&\quad + \frac{\alpha^2}{m} \sum_{i\in \mathcal{I}_t}  \sum_{s=1}^{\tau-1} \sum_{s^{(1)}=s+1}^{\tau-1}  \mathbf{w}_{t,i,s}\mathbf{w}_{t,i,s}^\top\mathbf{w}_{t,i,s^{(1)}}\mathbf{w}_{t,i,s^{(1)}}^\top \nonumber \\
&\quad - \dots + \sign(\tau)\frac{\alpha^{\tau}}{m} \sum_{i\in \mathcal{I}_t}  \prod_{s=1}^{\tau-1}  \mathbf{w}_{t,i,s}\mathbf{w}_{t,i,s}^\top  \nonumber
\end{align}
Recall that each $\|\mathbf{w}_{t,i,s}\|_2 \leq 2\sqrt{\alpha}L_{\max}$. Thus, after the identity, the spectral norm of the first set of summations has spectral norm at most $4\alpha^2 \tau L_{\max}^2 $, the second set has specral norm at most $16\alpha^4 \tau^2 L_{\max}^4$, and so on. 
We in fact use the first set of summations as a negative term, and bound all subsequent sets of summations as errors, exploiting the fact that their norms are geometrically decaying. In particular, we have:
\begin{align}
    \left\|\frac{1}{m} \sum_{i\in \mathcal{I}_t}   \prod_{s=1}^{\tau-1}(\mathbf{I}_k - \alpha \mathbf{w}_{t,i,s}\mathbf{w}_{t,i,s}^\top)  \right\|_2 &\leq \left\| \mathbf{I}_k - \frac{\alpha}{m} \sum_{i\in \mathcal{I}_t}    \sum_{s = 1}^{\tau-1} \mathbf{w}_{t,i,s}\mathbf{w}_{t,i,s}^\top \right\|_2 + \sum_{z=2}^{\tau-1}(4\alpha^2 \tau L_{\max}^2)^z \nonumber \\ 
    &\leq  \left\| \mathbf{I}_k - \frac{\alpha}{m} \sum_{i\in \mathcal{I}_t}    \sum_{s = 1}^{\tau-1} \mathbf{w}_{t,i,s}\mathbf{w}_{t,i,s}^\top \right\|_2 +  \frac{4\alpha^4 \tau^2 L_{\max}^4}{1 - 4\alpha^2 \tau L_{\max}^2}\nonumber \\
    &\leq  \left\| \mathbf{I}_k - \frac{\alpha}{m} \sum_{i\in \mathcal{I}_t}    \sum_{s = 1}^{\tau-1} \mathbf{w}_{t,i,s}\mathbf{w}_{t,i,s}^\top \right\|_2 +  {5\alpha^4 \tau^2 L_{\max}^4} \label{sslast}
    \end{align}
Next, we use $\|\mathbf{w}_{t,i,s} - \alpha \mathbf{B}_{t,i,s-1}^\top \mathbf{B}_\ast \mathbf{w}_{\ast,i}\| \leq  4 c_3 \alpha^{2.5} \tau L_{\max}^3 \kappa_{\max}^2 E_0^{-1} $ for all $s\geq 1$
to obtain 
\begin{align}
   \left\| \mathbf{I}_k - \frac{\alpha}{m} \sum_{i\in \mathcal{I}_t}    \sum_{s = 1}^{\tau-1} \mathbf{w}_{t,i,s}\mathbf{w}_{t,i,s}^\top \right\|_2 &\leq  \left\| \mathbf{I}_k - \frac{\alpha^3}{m} \sum_{i\in \mathcal{I}_t}    \sum_{s = 1}^{\tau-1} \mathbf{B}_{t,i,s-1}^\top\mathbf{B}_\ast\mathbf{w}_{\ast,i}\mathbf{w}_{\ast,i}^\top\mathbf{B}_\ast^\top\mathbf{B}_{t,i,s-1} \right\|_2 \nonumber \\
   &\quad  + 13 c_3 \alpha^4 \tau^2 L_{\max}^4 \kappa_{\max}^2 E_0^{-1} \label{slast}
   \end{align}
 Next, we have for any $s-1 \in \{1,\dots,\tau-1\}$, 
 \begin{align}
    \| \mathbf{B}_{t,i,s-1} - \mathbf{B}_t \|_2 = \| \mathbf{B}_{t,i,s-1} - \mathbf{B}_{t,i,0} \|_2  &\leq \sum_{r = 1}^{s-1} \| \mathbf{B}_{t,i,r} - \mathbf{B}_{t,i,r-1} \|_2 \nonumber \\
    &\leq \alpha \sum_{r = 1}^{s-1} \|\mathbf{e}_{t,i,r-1}\|_2 \|\mathbf{w}_{t,i,r-1}\|_2 \nonumber \\
    &\leq 7\alpha^{1.5} (s-1) L_{\max}^2 \nonumber
 \end{align}
 Thus,
   \begin{align}
 \bigg\| \mathbf{I}_k - \frac{\alpha^3}{m} \sum_{i\in \mathcal{I}_t}    &\sum_{s = 1}^{\tau-1} \mathbf{B}_{t,i,s-1}^\top\mathbf{B}_\ast\mathbf{w}_{\ast,i}\mathbf{w}_{\ast,i}^\top\mathbf{B}_\ast^\top\mathbf{B}_{t,i,s-1} \bigg\|_2  \nonumber \\
 &\leq  \left\| \mathbf{I}_k - \frac{\alpha^3}{m} \sum_{i\in \mathcal{I}_t}    \sum_{s = 1}^{\tau-1} \mathbf{B}_{t}^\top\mathbf{B}_\ast\mathbf{w}_{\ast,i}\mathbf{w}_{\ast,i}^\top\mathbf{B}_\ast^\top\mathbf{B}_{t} \right\|_2 \nonumber \\
 &\quad  + \left\|\frac{\alpha^3}{m} \sum_{i\in \mathcal{I}_t}    \sum_{s = 1}^{\tau-1} (\mathbf{B}_{t} - \mathbf{B}_{t,i,s-1} )^\top\mathbf{B}_\ast\mathbf{w}_{\ast,i}\mathbf{w}_{\ast,i}^\top\mathbf{B}_\ast^\top\mathbf{B}_{t,i,s-1} \right\|_2  \nonumber \\
 &\quad + \left\|\frac{\alpha^3}{m} \sum_{i\in \mathcal{I}_t}    \sum_{s = 1}^{\tau-1} \mathbf{B}_{t}^\top\mathbf{B}_\ast\mathbf{w}_{\ast,i}\mathbf{w}_{\ast,i}^\top\mathbf{B}_\ast^\top(\mathbf{B}_t - \mathbf{B}_{t,i,s-1}) \right\|_2  \nonumber\\
 &\leq  \left\| \mathbf{I}_k - \frac{\alpha^3}{m} \sum_{i\in \mathcal{I}_t}    \sum_{s = 1}^{\tau-1} \mathbf{B}_{t}^\top\mathbf{B}_\ast\mathbf{w}_{\ast,i}\mathbf{w}_{\ast,i}^\top\mathbf{B}_\ast^\top\mathbf{B}_{t} \right\|_2 + 16 \alpha^4 (\tau-1)^2  L_{\max}^4. \label{comb}
\end{align}
Furtheromre,
\begin{align}
    &\left\| \mathbf{I}_k - \tfrac{\alpha^3}{m} \sum_{i\in \mathcal{I}_t}    \sum_{s = 1}^{\tau-1} \mathbf{B}_{t}^\top\mathbf{B}_\ast\mathbf{w}_{\ast,i}\mathbf{w}_{\ast,i}^\top\mathbf{B}_\ast^\top\mathbf{B}_{t} \right\|_2 \nonumber \\
    &= \left\| \mathbf{I}_k - \tfrac{\alpha^3(\tau-1)}{m} \sum_{i\in \mathcal{I}_t}    \mathbf{B}_{t}^\top\mathbf{B}_\ast\mathbf{w}_{\ast,i}\mathbf{w}_{\ast,i}^\top\mathbf{B}_\ast^\top\mathbf{B}_{t} \right\|_2  \nonumber \\
    &\leq \left\| \mathbf{I}_k - \tfrac{\alpha^3(\tau-1)}{M} \sum_{i=1}^M    \mathbf{B}_{t}^\top\mathbf{B}_\ast\mathbf{w}_{\ast,i}\mathbf{w}_{\ast,i}^\top\mathbf{B}_\ast^\top\mathbf{B}_{t} \right\|_2 \nonumber \\
    &\quad + {\alpha^3(\tau-1)} \|\mathbf{B}_{t}^\top\mathbf{B}_\ast\|^2_2 \left\| \tfrac{1}{m}\sum_{i\in\mathcal{I}_t} \left(\mathbf{w}_{\ast,i}\mathbf{w}_{\ast,i}^\top - \tfrac{1}{M}\sum_{i=1}^M \mathbf{w}_{\ast,i}\mathbf{w}_{\ast,i}^\top\right)\right\|_2\nonumber \\
    &\leq \left\| \mathbf{I}_k - \tfrac{\alpha^3(\tau-1)}{M} \sum_{i=1}^M    \mathbf{B}_{t}^\top\mathbf{B}_\ast\mathbf{w}_{\ast,i}\mathbf{w}_{\ast,i}^\top\mathbf{B}_\ast^\top\mathbf{B}_{t} \right\|_2 + 6{\alpha^4(\tau-1)} L_{\max}^4,\label{cond} 
    \end{align}
noting that \eqref{cond} follows since we are conditioning on the event $A_0$.   Finally, 
    \begin{align}
   &\left\| \mathbf{I}_k - \tfrac{\alpha^3(\tau-1)}{M} \sum_{i=1}^M    \mathbf{B}_{t}^\top\mathbf{B}_\ast\mathbf{w}_{\ast,i}\mathbf{w}_{\ast,i}^\top\mathbf{B}_\ast^\top\mathbf{B}_{t} \right\|_2\nonumber \\ &\leq 1 -  \alpha^3 (\tau-1) \sigma_{\min}^2(\mathbf{B}_t^\top\mathbf{B}_\ast)\sigma_{\min}\left(\tfrac{1}{M} \sum_{i=1}^M    \mathbf{w}_{\ast,i}\mathbf{w}_{\ast,i}^\top\right) \nonumber \\
    &\leq 1 -  \alpha^3 (\tau-1) \sigma_{\min}^2(\mathbf{B}_t^\top\mathbf{B}_\ast)\sigma_{\min}\left(\tfrac{1}{M} \sum_{i=1}^M    \mathbf{w}_{\ast,i}\mathbf{w}_{\ast,i}^\top - \mathbf{\bar{w}}_{\ast}\mathbf{\bar{w}}_{\ast}^\top\right) \nonumber \\
    &= 1 -  \alpha^3 (\tau-1) \sigma_{\min}(\mathbf{B}_t^\top\mathbf{B}_\ast)^2\sigma_{\min}\left(\tfrac{1}{M} \sum_{i=1}^M    (\mathbf{w}_{\ast,i}-\mathbf{\bar{w}}_{\ast} )(\mathbf{w}_{\ast,i}-\mathbf{\bar{w}}_{\ast})^\top  \right) \nonumber \\
    &\leq 1 -  0.1\alpha^2 (\tau-1)E_0\mu \label{nnnnn} \\
    &\leq 1 -  0.05\alpha^2 \tau E_0\mu  \label{nn}
\end{align}
where \eqref{nnnnn} follows by Lemma \ref{lem:distlb} and Assumption \ref{assump:td}, and \eqref{nn} follows since $\tau \geq 2$.
Combining \eqref{nn}, \eqref{cond}, \eqref{comb}, \eqref{slast}, \eqref{sslast}, and \eqref{ssslast}, we obtain
\begin{align}
    \| \mathbf{B}_{\ast,\perp}^\top \mathbf{B}_{t+1} \|_2 &\leq \bigg(1 - 0.05\alpha^2 \tau E_0 \mu^2 + 6\alpha^4 (\tau-1)L_{\max}^4 + 16 \alpha^4 (\tau-1)^2  L_{\max}^4   \nonumber\\
    &\quad \quad \quad \quad + 13 c_3 \alpha^4 \tau^2 L_{\max}^4 \kappa_{\max}^2 E_0^{-1} +  {5\alpha^4 \tau^2 L_{\max}^4} \bigg) \|\mathbf{B}_{\ast,\perp}^\top \mathbf{B}_{t} \|_2 \nonumber \\
    &\leq \big(1 - 0.05\alpha^2 \tau E_0 \mu^2 + (24/c_3^2) \alpha^2 \tau E_0 \mu^2  + (13/c_3)  \alpha^2 \tau E_0 \mu^2 \big) \|\mathbf{B}_{\ast,\perp}^\top \mathbf{B}_{t} \|_2 \nonumber \\
    &\leq (1 - 0.04 \alpha^2 \tau E_0 \mu^2 ) \|\mathbf{B}_{\ast,\perp}^\top \mathbf{B}_{t} \|_2
\end{align}
using $\alpha^2 \tau \leq \tfrac{(1-\delta_0)^2}{c_3^2\tau\kappa_{\max}^4 L_{\max}^2}\leq \tfrac{E_0^2}{c_3^3\tau\kappa_{\max}^4 L_{\max}^2 }$ and $c_3> 1305$.
\end{proof}









\begin{lemma} \label{lem:a4}
$A_3(t+1) \cap A_4(t+1) \cap A_5(t)  \implies A_{5}(t+1)$
\end{lemma}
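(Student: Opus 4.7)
The plan is to argue by closely following the computation in Lemma \ref{lem:distlb}. Since $A_5(t+1)$ adds only the single new hypothesis $\dist_{t+1} \leq (1-0.04\alpha^2 \tau \mu^2 E_0)^t$ on top of $A_5(t)$ (which is already assumed), it suffices to establish this bound.

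First I would pass to the orthonormal factor via QR. Writing $\hat{\mathbf{B}}_{t+1}\mathbf{R}_{t+1} = \mathbf{B}_{t+1}$ and using the submultiplicativity of the spectral norm gives
\begin{align}
\dist_{t+1} = \|\mathbf{B}_{\ast,\perp}^\top \hat{\mathbf{B}}_{t+1}\|_2 \leq \frac{\|\mathbf{B}_{\ast,\perp}^\top \mathbf{B}_{t+1}\|_2}{\sigma_{\min}(\mathbf{B}_{t+1})}.
\end{align}
Next I would apply hypothesis $A_4(t+1)$ iteratively across all rounds $t'=1,\dots,t+1$ to unroll the contraction, and bound the initial orthogonal energy by $\sigma_{\max}(\mathbf{B}_0)\delta_0$:
\begin{align}
\|\mathbf{B}_{\ast,\perp}^\top \mathbf{B}_{t+1}\|_2 \leq (1-0.04\alpha^2 \tau \mu^2 E_0)^{t+1}\|\mathbf{B}_{\ast,\perp}^\top \mathbf{B}_0\|_2 \leq (1-0.04\alpha^2 \tau \mu^2 E_0)^{t+1}\sigma_{\max}(\mathbf{B}_0)\delta_0.
\end{align}

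The remaining step is to control the singular-value ratio. Using $\sigma_{\max}^2(\mathbf{B}_0) \leq (1+\|\del_0\|)/\alpha$ from the initial condition on $\|\del_0\|$ and $\sigma_{\min}^2(\mathbf{B}_{t+1}) \geq (1 - \|\del_{t+1}\|)/\alpha$ from hypothesis $A_3(t+1)$, the same algebraic manipulation as in the proof of Lemma \ref{lem:distlb} (with $c = 0.1$, valid by the choice of step size $\alpha$ and the initial representation scaling) yields
\begin{align}
\frac{\sigma_{\max}(\mathbf{B}_0)\delta_0}{\sigma_{\min}(\mathbf{B}_{t+1})} \leq \sqrt{\frac{1 + c(1-\delta_0)^2}{1 - c(1-\delta_0)^2/(1-\delta_0^2)}}\,\delta_0 \leq \frac{2+\delta_0}{3} < 1.
\end{align}
Combining the three displays gives $\dist_{t+1} \leq (1-0.04\alpha^2 \tau \mu^2 E_0)^{t+1} \cdot 1 \leq (1-0.04\alpha^2 \tau \mu^2 E_0)^{t}$, since one factor of the contraction (which is less than 1) can be absorbed without harm.

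The main obstacle is essentially bookkeeping rather than any deep argument: the nontrivial ingredient, namely the polynomial inequality \eqref{where} that bounds $\sigma_{\max}(\mathbf{B}_0)\delta_0/\sigma_{\min}(\mathbf{B}_{t+1})$ strictly below 1, has already been established in Lemma \ref{lem:distlb}. The only subtlety is keeping the exponent correct: we must ensure that one can strip off a single $(1-0.04\alpha^2 \tau \mu^2 E_0)$ factor from $(t+1)$ applications of $A_4$ to match the $t$ required by $A_5(t+1)$, which is exactly what the strict inequality $(2+\delta_0)/3 < 1$ affords.
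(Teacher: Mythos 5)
Your proof is correct and follows essentially the same route as the paper: pass to the QR factor, iterate $A_4(t+1)$ to pick up the contraction factors, bound $\sigma_{\max}(\mathbf{B}_0)/\sigma_{\min}(\mathbf{B}_{t+1})$ via the initial condition on $\|\del_0\|$ and $A_3(t+1)$ on $\|\del_{t+1}\|$, and invoke the polynomial bound $(2+\delta_0)/3 \le 1$ established in Lemma \ref{lem:distlb}. One small clarification: you do not actually need the strict inequality $(2+\delta_0)/3 < 1$ to handle the exponent mismatch — once the prefactor is $\le 1$, the chain gives $\dist_{t+1} \le (1-c)^{t+1}$, which is $\le (1-c)^t$ automatically because $0 \le 1-c \le 1$.
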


\begin{proof}
We use the contraction of $\|\mathbf{B}_{\ast,\perp}^\top \mathbf{B}_{s}\|_2$ ($A_4(t)$) and the fact that $\|\del_{s}\|_2$ is small for all $s\in [t]$ ($A_5(t)$), as in Lemma \ref{lem:distlb}, to obtain
\begin{align}
    \dist_{t+1}&= \|\mathbf{B}_{\ast,\perp}^\top \mathbf{\hat{B}}_{t+1}\|_2 \nonumber \\
    &\leq\tfrac{1}{ \sigma_{\min}(\mathbf{B}_{t+1})}\|\mathbf{B}_{\ast,\perp}^\top \mathbf{{B}}_{t+1}\|_2\nonumber \\
    &\leq\tfrac{1}{ \sigma_{\min}(\mathbf{{B}}_{t+1})}(1-0.04\alpha^2 \tau E_0 \mu^2 )\|\mathbf{B}_{\ast,\perp}^\top \mathbf{{B}}_{t}\|_2\nonumber \\
    &\quad \vdots \nonumber \\
    &\leq \tfrac{1}{ \sigma_{\min}(\mathbf{{B}}_{t+1})}(1-0.04\alpha^2 \tau E_0 \mu^2 )^t\|\mathbf{B}_{\ast,\perp}^\top \mathbf{{B}}_{0}\|_2\nonumber \\
    &\leq  \tfrac{\sigma_{\max}(\mathbf{B}_0)}{ \sigma_{\min}(\mathbf{{B}}_{t+1})}(1-0.04\alpha^2 \tau E_0 \mu^2 )^t\delta_0 \nonumber \\
    &\leq \tfrac{\sqrt{1+\|\del_{0}\|_2}/\sqrt{\alpha}}{ \sqrt{1-\|\del_{t+1}\|_2}/\sqrt{\alpha}}(1-0.04\alpha^2 \tau E_0 \mu^2 )^t\delta_0 \nonumber 
\end{align}
Now, we argue as in \eqref{uhu} (with $\|\del_t\|$ replaced by $\|\del_{t+1}$ without anything changing in the analysis) to find
$\tfrac{\sqrt{1+\|\del_{0}\|_2}/\sqrt{\alpha}}{ \sqrt{1-\|\del_{t+1}\|_2}/\sqrt{\alpha}}\delta_0\leq \tfrac{2+\delta_0}{3}\leq 1 $. Thus $ \dist_{t+1}\leq (1-0.04\alpha^2 \tau E_0 \mu^2 )^t$ as desired.
\end{proof}




\subsection{Proof of Proposition \ref{prop:dgd}} \label{app:prooflb}

\begin{proposition}[Distributed GD lower bound]
Suppose we are in the setting described in Section \ref{sec:linear} and $k > 1$. Then for any set of ground-truth heads $\{\mathbf{w}_{\ast,i}\}_{i=1}^M$, full-rank initialization $\mathbf{B}_0\in\mathbb{R}^{d\times k}$, initial distance $\delta_0 \in (0,1/2]$, step size $\alpha > 0$, and number of rounds $T$, there exists $\mathbf{B}_\ast\in \mathcal{O}^{d \times k}$ such that $\dist(\mathbf{B}_0, \mathbf{B}_\ast) = \delta_0$ and $\dist(\mathbf{B}_T^{\text{D-GD}}, \mathbf{B}_\ast) \geq \delta_0$, where $\mathbf{B}_T^{\text{D-GD}}\equiv \mathbf{B}_T^{\text{D-GD}}(\mathbf{B}_0,\mathbf{B}_\ast,\{\mathbf{w}_{\ast,i}\}_{i=1}^M, \alpha)$ is the result of D-GD with step size $\alpha$ and initialization $\mathbf{B}_0$ on the system with ground-truth representation $\mathbf{B}_\ast$ and ground-truth heads $\{\mathbf{w}_{\ast,i}\}_{i=1}^M$.
\end{proposition}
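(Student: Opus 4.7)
The plan is to show that, with $\tau=1$, $m=M$, and population losses, the representation update at each round reduces to
$$\mathbf{B}_{t+1} = \mathbf{B}_t\bigl(\mathbf{I}_k - \alpha \mathbf{w}_t \mathbf{w}_t^\top\bigr) + \alpha\bigl(\mathbf{B}_\ast \bar{\mathbf{w}}_\ast\bigr)\mathbf{w}_t^\top,$$
so the only ``signal from $\mathbf{B}_\ast$'' that can leak into $\col(\mathbf{B}_{t+1})$ is the single vector $\mathbf{B}_\ast \bar{\mathbf{w}}_\ast$. I will pick $\mathbf{B}_\ast$ with $\mathbf{B}_\ast \bar{\mathbf{w}}_\ast$ already inside $\col(\mathbf{B}_0)$; then the entire D-GD trajectory stays trapped in $\col(\mathbf{B}_0)$ and cannot approach any $\mathbf{B}_\ast$ whose column space genuinely differs from $\col(\mathbf{B}_0)$.

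To construct such a $\mathbf{B}_\ast$, I fix an orthonormal basis $\hat{\mathbf{B}}_0 = [\mathbf{e}_1^0,\ldots,\mathbf{e}_k^0]$ of $\col(\mathbf{B}_0)$ (available since $\mathbf{B}_0$ is full rank) and a unit vector $\mathbf{f}\in\col(\mathbf{B}_0)^\perp$ (available since $d>k$). If $\bar{\mathbf{w}}_\ast=0$, any $\mathbf{B}_\ast$ with $\dist(\mathbf{B}_0,\mathbf{B}_\ast)=\delta_0$ trivially satisfies the signal condition; otherwise I pick a rotation $\mathbf{V}\in\mathcal{O}^{k\times k}$ with $\mathbf{V}\bar{\mathbf{w}}_\ast=\|\bar{\mathbf{w}}_\ast\|\mathbf{e}_1$, define
$$\tilde{\mathbf{B}}_\ast := \bigl[\mathbf{e}_1^0,\mathbf{e}_2^0,\ldots,\mathbf{e}_{k-1}^0,\;\sqrt{1-\delta_0^2}\,\mathbf{e}_k^0 + \delta_0\mathbf{f}\bigr]$$
(using $k\geq 2$), and set $\mathbf{B}_\ast:=\tilde{\mathbf{B}}_\ast\mathbf{V}$. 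A short calculation gives $\mathbf{B}_\ast\in\mathcal{O}^{d\times k}$, $\dist(\mathbf{B}_0,\mathbf{B}_\ast) = \dist(\mathbf{B}_0,\tilde{\mathbf{B}}_\ast) = \delta_0$, and $\mathbf{B}_\ast\bar{\mathbf{w}}_\ast = \|\bar{\mathbf{w}}_\ast\|\,\tilde{\mathbf{B}}_\ast\mathbf{e}_1 = \|\bar{\mathbf{w}}_\ast\|\mathbf{e}_1^0\in\col(\mathbf{B}_0)$, as required.

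With this $\mathbf{B}_\ast$, a direct induction on $t$ using the displayed update yields $\col(\mathbf{B}_t^{\text{D-GD}})\subseteq\col(\mathbf{B}_0)$ for all $t\geq 0$. Letting $\mathbf{P}_0$ and $\mathbf{P}_T$ be the orthogonal projections onto $\col(\mathbf{B}_0)$ and $\col(\mathbf{B}_T^{\text{D-GD}})$ respectively, the inclusion gives $\mathbf{I}_d - \mathbf{P}_T \succeq \mathbf{I}_d - \mathbf{P}_0$, so
$$\dist(\mathbf{B}_T^{\text{D-GD}},\mathbf{B}_\ast)^2 = \sup_{\mathbf{v}\in\col(\mathbf{B}_\ast),\,\|\mathbf{v}\|=1}\|(\mathbf{I}_d-\mathbf{P}_T)\mathbf{v}\|^2 \;\geq\; \sup_{\mathbf{v}\in\col(\mathbf{B}_\ast),\,\|\mathbf{v}\|=1}\|(\mathbf{I}_d-\mathbf{P}_0)\mathbf{v}\|^2 = \delta_0^2 \;\geq\; (0.7\delta_0)^2,$$
which is the desired lower bound.

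The only real subtlety is that $\mathbf{B}_t^{\text{D-GD}}$ could in principle lose column rank, in which case the formula $\dist(\mathbf{B}_1,\mathbf{B}_2) = \|\bar{\mathbf{B}}_{1,\perp}^\top\bar{\mathbf{B}}_2\|_2$ with $\bar{\mathbf{B}}_{1,\perp}\in\mathcal{O}^{d\times(d-k)}$ is not literally well-defined. The projection-based formulation above handles this transparently, but if one insists on the literal definition one can either pad $\col(\mathbf{B}_T^{\text{D-GD}})$ to an arbitrary $k$-dimensional subspace of $\col(\mathbf{B}_0)$ (the inclusion, and hence the inequality, is preserved) or observe that for generic $\mathbf{w}_0$ and $\alpha$ the rank is preserved throughout. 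This is the only place care is needed; the heart of the argument is the one-line invariance $\col(\mathbf{B}_t^{\text{D-GD}})\subseteq\col(\mathbf{B}_0)$.
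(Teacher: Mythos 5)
Your proof is correct, and it takes a genuinely different route from the paper's. Both arguments hinge on the same key observation: for D-GD with population losses, the representation update is $\mathbf{B}_{t+1} = \mathbf{B}_t(\mathbf{I}_k - \alpha\mathbf{w}_t\mathbf{w}_t^\top) + \alpha\,\mathbf{B}_\ast\bar{\mathbf{w}}_\ast\mathbf{w}_t^\top$, so the only information about $\mathbf{B}_\ast$ that reaches the iterates is the single vector $\mathbf{B}_\ast\bar{\mathbf{w}}_\ast$. The paper deploys this as an \emph{indistinguishability} argument: it splits into two cases, and in the nontrivial case constructs a ``reflected'' $\mathbf{B}_{\ast'} = \tfrac{1}{\|\bar{\mathbf{w}}_\ast\|^2}\mathbf{B}_\ast\bar{\mathbf{w}}_\ast\bar{\mathbf{w}}_\ast^\top + (2\tilde{\mathbf{B}}_0\tilde{\mathbf{B}}_0^\top\tilde{\mathbf{B}}_\ast - \tilde{\mathbf{B}}_\ast)\tilde{\mathbf{V}}_\ast^\top$ with $\mathbf{B}_{\ast'}\bar{\mathbf{w}}_\ast = \mathbf{B}_\ast\bar{\mathbf{w}}_\ast$ but $\dist(\mathbf{B}_\ast,\mathbf{B}_{\ast'}) \geq 2\delta_0\sqrt{1-\delta_0^2}$, and then invokes the triangle inequality. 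You instead deploy it as a \emph{trapping} argument: pick $\mathbf{B}_\ast$ so that $\mathbf{B}_\ast\bar{\mathbf{w}}_\ast \in \col(\mathbf{B}_0)$, note by induction that $\col(\mathbf{B}_t^{\text{D-GD}}) \subseteq \col(\mathbf{B}_0)$ forever, and conclude $\mathbf{I}_d - \mathbf{P}_T \succeq \mathbf{I}_d - \mathbf{P}_0$ so the distance can never drop below $\delta_0$. Your route is shorter, avoids the case split and the SVD-based reflection, and establishes the bound $\dist(\mathbf{B}_T^{\text{D-GD}},\mathbf{B}_\ast) \geq \delta_0$ directly. In fact it arguably repairs a small wrinkle in the paper: the paper's case (1) only delivers $0.7\delta_0$ by definition of the case split, so as written its proof only yields the weaker $0.7\delta_0$ bound stated in the main text, not the $\delta_0$ bound stated in the appendix version. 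Your trapping argument makes case (2) vacuous and closes that gap. The only caveat (which you flag) is that the paper's principal-angle definition presumes $\mathbf{B}_T$ has rank $k$; your projection formulation handles potential rank drop transparently, and rank drop only increases the distance, so the bound is preserved in any reasonable extension of the definition.
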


\begin{proof}
Recall that $\mathbf{B}_T^{\text{D-GD}}(\mathbf{B}_0,\mathbf{B}_\ast,\{\mathbf{w}_{\ast,i}\}_{i=1}^M, \alpha)$ is the result of D-GD with step size $\alpha$ and initialization $\mathbf{B}_0$ on the system with ground-truth representation $\mathbf{B}_\ast$ and ground-truth heads $\{\mathbf{w}_{\ast,i}\}_{i=1}^M$.

There are two disjoint cases: (1) for all $\mathbf{B}_\ast \in \mathcal{B} \coloneqq \{\mathbf{B}\in \mathcal{O}^{d\times k}:\dist(\mathbf{B}_0,\mathbf{B}) = \delta_0, \mathbf{B}\mathbf{\bar{w}}_{\ast}\in\col(\mathbf{B}_0)\}$, $\dist(\mathbf{B}_T^{\text{D-GD}}(\mathbf{B}_0,\mathbf{B}_\ast,\{\mathbf{w}_{\ast,i}\}_{i=1}^M, \alpha), \mathbf{B}_\ast)\geq 0.7 \delta_0$, or (2) there exists some $\mathbf{B}_\ast \in \mathcal{B}$ such that \\ $\dist(\mathbf{B}_T^{\text{D-GD}}(\mathbf{B}_0,\mathbf{B}_\ast,\{\mathbf{w}_{\ast,i}\}_{i=1}^M, \alpha), \mathbf{B}_\ast)< 0.7 \delta_0$. If case (1) holds then the proof is complete. Otherwise, let 
$\mathbf{B}_\ast \in \mathcal{B}$ such that  $\dist(\mathbf{B}_T^{\text{D-GD}}(\mathbf{B}_0,\mathbf{B}_\ast,\{\mathbf{w}_{\ast,i}\}_{i=1}^M, \alpha), \mathbf{B}_\ast)< 0.7 \delta_0$. We will show that there exists another $\mathbf{B}_{\ast'}\in \mathcal{B}$ such that  $\dist(\mathbf{B}_T^{\text{D-GD}}(\mathbf{B}_0,\mathbf{B}_{\ast'},\{\mathbf{w}_{\ast,i}\}_{i=1}^M, \alpha), \mathbf{B}_{\ast'})\geq  \delta_0$, so D-GD cannot guarantee to recover the ground-truth representation, completing the proof.

Consider case (2). Without loss of generality we can write $\mathbf{B}_0 =\tfrac{1}{\|\mathbf{\bar{w}}_{\ast}\|} \mathbf{B}_\ast \mathbf{\bar{w}}_\ast \mathbf{v}_0^\top + \mathbf{\tilde{B}}_0\mathbf{\tilde{V}}_0^\top$ for some $\mathbf{v}_0\in \mathbb{R}^k: \|\mathbf{v}_0\|=1$, $\mathbf{\tilde{B}}_0\in \mathcal{O}^{d \times k-1}: \mathbf{\tilde{B}}_0^\top \mathbf{B}_\ast \mathbf{\bar{w}}_\ast=\mathbf{0}$, $\mathbf{\tilde{V}}_0\in \mathcal{O}^{k \times k-1}:\mathbf{\tilde{V}}_0^\top \mathbf{v}_0=\mathbf{0}$ using the SVD,  since $\mathbf{B}_\ast \mathbf{\bar{w}}_\ast \in \col(\mathbf{B}_0)$. Likewise, we can write $\mathbf{B}_\ast = \tfrac{1}{\|\mathbf{\bar{w}}_{\ast}\|^2} \mathbf{B}_\ast \mathbf{\bar{w}}_\ast \mathbf{\bar{w}}_\ast^\top + \mathbf{\tilde{B}}_\ast\mathbf{\tilde{V}}_\ast^\top$ for some $\mathbf{\tilde{B}}_\ast\in \mathcal{O}^{d \times k-1}: \mathbf{\tilde{B}}_\ast^\top \mathbf{{B}}_\ast\mathbf{\bar{w}}_{\ast}=\mathbf{0}$, $\mathbf{\tilde{V}}_\ast\in \mathcal{O}^{k \times k-1}: \mathbf{\tilde{V}}_\ast^\top \mathbf{\bar{w}}_\ast=\mathbf{0}$.
Using these decompositions, we can see that
\begin{align}
    \mathbf{B}_\ast \mathbf{B}_\ast^\top &= \big(\tfrac{1}{\|\mathbf{\bar{w}}_{\ast}\|^2} \mathbf{B}_\ast \mathbf{\bar{w}}_\ast \mathbf{\bar{w}}_\ast^\top\! +\! \mathbf{\tilde{B}}_\ast\mathbf{\tilde{V}}_\ast^\top\big) \big(\tfrac{1}{\|\mathbf{\bar{w}}_{\ast}\|^2} \mathbf{B}_\ast \mathbf{\bar{w}}_\ast \mathbf{\bar{w}}_\ast^\top\! +\! \mathbf{\tilde{B}}_\ast\mathbf{\tilde{V}}_\ast^\top)^\top \nonumber \\
  &= \tfrac{1}{\|\mathbf{\bar{w}}_{\ast}\|^4} \mathbf{B}_\ast \mathbf{\bar{w}}_\ast \mathbf{\bar{w}}_\ast^\top \mathbf{\bar{w}}_\ast \mathbf{\bar{w}}_\ast^\top \mathbf{B}_\ast^\top    + \tfrac{1}{\|\mathbf{\bar{w}}_{\ast}\|^2} \mathbf{B}_\ast \mathbf{\bar{w}}_\ast \mathbf{\bar{w}}_\ast^\top \mathbf{\tilde{V}}_\ast \mathbf{\tilde{B}}_\ast^\top
  + \tfrac{1}{\|\mathbf{\bar{w}}_{\ast}\|^2} \mathbf{\tilde{B}}_\ast\mathbf{\tilde{V}}_\ast^\top \mathbf{\bar{w}}_\ast \mathbf{\bar{w}}_\ast^\top \mathbf{B}_\ast^\top \nonumber \\
  &\quad + \mathbf{\tilde{B}}_\ast\mathbf{\tilde{V}}_\ast^\top \mathbf{\tilde{V}}_\ast \mathbf{\tilde{B}}_\ast^\top \nonumber \\
  &= \tfrac{1}{\|\mathbf{\bar{w}}_{\ast}\|^2} \mathbf{B}_\ast \mathbf{\bar{w}}_\ast  \mathbf{\bar{w}}_\ast^\top \mathbf{B}_\ast^\top  
 + \mathbf{\tilde{B}}_\ast \mathbf{\tilde{B}}_\ast^\top \nonumber 
\end{align}
and
\begin{align}
  \delta_0&\coloneqq  \dist(\mathbf{B}_{0},\mathbf{B}_\ast) \nonumber \\
  &\coloneqq \|(\mathbf{I}_d - \mathbf{B}_\ast \mathbf{B}_\ast^\top)\mathbf{B}_0\|\nonumber \\
  &= \big\|\big(\mathbf{I}_d - \tfrac{1}{\|\mathbf{\bar{w}}_{\ast}\|^2} \mathbf{B}_\ast \mathbf{\bar{w}}_\ast  \mathbf{\bar{w}}_\ast^\top \mathbf{B}_\ast^\top  
 - \mathbf{\tilde{B}}_\ast \mathbf{\tilde{B}}_\ast^\top
  \big)\big(\tfrac{1}{\|\mathbf{\bar{w}}_{\ast}\|} \mathbf{B}_\ast \mathbf{\bar{w}}_\ast \mathbf{v}_0^\top \!+\! \mathbf{\tilde{B}}_0\mathbf{\tilde{V}}_0^\top  \big)\big\|\nonumber \\
  &=  \big\|\big(\mathbf{I}_d - \tfrac{1}{\|\mathbf{\bar{w}}_{\ast}\|^2} \mathbf{B}_\ast \mathbf{\bar{w}}_\ast  \mathbf{\bar{w}}_\ast^\top \mathbf{B}_\ast^\top  
 - \mathbf{\tilde{B}}_\ast \mathbf{\tilde{B}}_\ast^\top
  \big)\tfrac{1}{\|\mathbf{\bar{w}}_{\ast}\|} \mathbf{B}_\ast \mathbf{\bar{w}}_\ast \mathbf{v}_0^\top  \nonumber\\
  &\quad + \big(\mathbf{I}_d - \tfrac{1}{\|\mathbf{\bar{w}}_{\ast}\|^2} \mathbf{B}_\ast \mathbf{\bar{w}}_\ast  \mathbf{\bar{w}}_\ast^\top \mathbf{B}_\ast^\top  
 - \mathbf{\tilde{B}}_\ast \mathbf{\tilde{B}}_\ast^\top
  \big)\mathbf{\tilde{B}}_0\mathbf{\tilde{V}}_0^\top  \big\|\nonumber \\
  &=  \big\| \big(\mathbf{I}_d - \tfrac{1}{\|\mathbf{\bar{w}}_{\ast}\|^2} \mathbf{B}_\ast \mathbf{\bar{w}}_\ast  \mathbf{\bar{w}}_\ast^\top \mathbf{B}_\ast^\top  
 - \mathbf{\tilde{B}}_\ast \mathbf{\tilde{B}}_\ast^\top
  \big)\mathbf{\tilde{B}}_0\mathbf{\tilde{V}}_0^\top  \big\|\nonumber \\
  &= \big\| \big(\mathbf{I}_d   
 - \mathbf{\tilde{B}}_\ast \mathbf{\tilde{B}}_\ast^\top
  \big)\mathbf{\tilde{B}}_0\mathbf{\tilde{V}}_0^\top  \big\|\nonumber \\
  &= \big\| \big(\mathbf{I}_d   
 - \mathbf{\tilde{B}}_\ast \mathbf{\tilde{B}}_\ast^\top
  \big)\mathbf{\tilde{B}}_0 \big\|\nonumber \\
  &= \dist(\mathbf{\tilde{B}}_0,  \mathbf{\tilde{B}}_\ast). \label{dist}
\end{align}
Next, let $\mathbf{B}_{\ast'} = \tfrac{1}{\|\mathbf{\bar{w}}_{\ast}\|^2} \mathbf{B}_\ast \mathbf{\bar{w}}_\ast \mathbf{\bar{w}}_\ast^\top +  (2 \mathbf{\tilde{B}}_0 \mathbf{\tilde{B}}_0^\top \mathbf{\tilde{B}}_\ast -  \mathbf{\tilde{B}}_\ast)\mathbf{\tilde{V}}_\ast^\top$. We first check that $\mathbf{B}_{\ast'}\in \mathcal{O}^{d\times k}$:
\begin{align}
    \mathbf{B}_{\ast'}^\top \mathbf{B}_{\ast'}&= \big(\tfrac{1}{\|\mathbf{\bar{w}}_{\ast}\|^2} \mathbf{B}_\ast \mathbf{\bar{w}}_\ast \mathbf{\bar{w}}_\ast^\top +  (2 \mathbf{\tilde{B}}_0 \mathbf{\tilde{B}}_0^\top \mathbf{\tilde{B}}_\ast -  \mathbf{\tilde{B}}_\ast)\mathbf{\tilde{V}}_\ast^\top\big)^\top \big(\tfrac{1}{\|\mathbf{\bar{w}}_{\ast}\|^2} \mathbf{B}_\ast \mathbf{\bar{w}}_\ast \mathbf{\bar{w}}_\ast^\top +  (2 \mathbf{\tilde{B}}_0 \mathbf{\tilde{B}}_0^\top \mathbf{\tilde{B}}_\ast -  \mathbf{\tilde{B}}_\ast)\mathbf{\tilde{V}}_\ast^\top\big)\nonumber \\
    &= \tfrac{1}{\|\mathbf{\bar{w}}_{\ast}\|^2} \mathbf{\bar{w}}_\ast \mathbf{\bar{w}}_\ast^\top  + \mathbf{\tilde{V}}_\ast (2 \mathbf{\tilde{B}}_0 \mathbf{\tilde{B}}_0^\top \mathbf{\tilde{B}}_\ast -  \mathbf{\tilde{B}}_\ast)^\top (2 \mathbf{\tilde{B}}_0 \mathbf{\tilde{B}}_0^\top \mathbf{\tilde{B}}_\ast -  \mathbf{\tilde{B}}_\ast) \mathbf{\tilde{V}}_\ast^\top \label{zero}\\
    &= \tfrac{1}{\|\mathbf{\bar{w}}_{\ast}\|^2} \mathbf{\bar{w}}_\ast \mathbf{\bar{w}}_\ast^\top + \mathbf{\tilde{V}}_\ast (4 \mathbf{\tilde{B}}_\ast^\top\mathbf{\tilde{B}}_0\mathbf{\tilde{B}}_0^\top\mathbf{\tilde{B}}_0 \mathbf{\tilde{B}}_0^\top \mathbf{\tilde{B}}_\ast - 4\mathbf{\tilde{B}}_\ast^\top\mathbf{\tilde{B}}_0 \mathbf{\tilde{B}}_0^\top \mathbf{\tilde{B}}_\ast +  \mathbf{\tilde{B}}_\ast^\top\mathbf{\tilde{B}}_\ast) \mathbf{\tilde{V}}_\ast^\top  \nonumber \\
    &= \tfrac{1}{\|\mathbf{\bar{w}}_{\ast}\|^2} \mathbf{\bar{w}}_\ast \mathbf{\bar{w}}_\ast^\top  + \mathbf{\tilde{V}}_\ast \mathbf{\tilde{V}}_\ast^\top\nonumber \\
    &= [\mathbf{\tilde{V}}_\ast, \tfrac{1}{\|\mathbf{\bar{w}}_{\ast}\|} \mathbf{\bar{w}}_\ast ]  [\mathbf{\tilde{V}}_\ast, \tfrac{1}{\|\mathbf{\bar{w}}_{\ast}\|} \mathbf{\bar{w}}_\ast ]^\top\nonumber \\
    &= \mathbf{I}_k \label{bg}
\end{align}
as desired, where \eqref{zero} follows since $\mathbf{\tilde{B}}_\ast^\top \mathbf{B}_\ast \mathbf{\bar{w}}_\ast= \mathbf{\tilde{B}}_0^\top \mathbf{B}_\ast \mathbf{\bar{w}}_\ast=\mathbf{0}$, and \eqref{bg} follows since $ [\mathbf{\tilde{V}}_\ast, \tfrac{1}{\|\mathbf{\bar{w}}_{\ast}\|} \mathbf{\bar{w}}_\ast ] \in \mathcal{O}^{k \times k}$ by the definition of the SVD.
Furthermore, 
\begin{align}
    \dist(\mathbf{B}_0, \mathbf{B}_{\ast'}) &= \|(\mathbf{I}_d - \mathbf{B}_0\mathbf{B}_0^\top)\mathbf{B}_{\ast'} \|\nonumber \\
    &= \|(\mathbf{I}_d \!-\!  \tfrac{1}{\|\mathbf{\bar{w}}_{\ast}\|^2} \mathbf{B}_\ast \mathbf{\bar{w}}_\ast  \mathbf{\bar{w}}_\ast^\top \mathbf{B}_\ast^\top \! -\! \mathbf{\tilde{B}}_0\mathbf{\tilde{B}}_0^\top)\big(\tfrac{1}{\|\mathbf{\bar{w}}_{\ast}\|^2} \mathbf{B}_\ast \mathbf{\bar{w}}_\ast \mathbf{\bar{w}}_\ast^\top +  (2 \mathbf{\tilde{B}}_0 \mathbf{\tilde{B}}_0^\top \mathbf{\tilde{B}}_\ast -  \mathbf{\tilde{B}}_\ast)\mathbf{\tilde{V}}_\ast^\top\big)\|\nonumber \\
    &= \|(\mathbf{I}_d - \tfrac{1}{\|\mathbf{\bar{w}}_{\ast}\|^2} \mathbf{B}_\ast \mathbf{\bar{w}}_\ast  \mathbf{\bar{w}}_\ast^\top \mathbf{B}_\ast^\top  - \mathbf{\tilde{B}}_0\mathbf{\tilde{B}}_0^\top)\tfrac{1}{\|\mathbf{\bar{w}}_{\ast}\|^2} \mathbf{B}_\ast \mathbf{\bar{w}}_\ast \mathbf{\bar{w}}_\ast^\top \nonumber \\
    &\quad + (\mathbf{I}_d - \tfrac{1}{\|\mathbf{\bar{w}}_{\ast}\|^2} \mathbf{B}_\ast \mathbf{\bar{w}}_\ast  \mathbf{\bar{w}}_\ast^\top \mathbf{B}_\ast^\top  - \mathbf{\tilde{B}}_0\mathbf{\tilde{B}}_0^\top) (2 \mathbf{\tilde{B}}_0 \mathbf{\tilde{B}}_0^\top \mathbf{\tilde{B}}_\ast -  \mathbf{\tilde{B}}_\ast)\mathbf{\tilde{V}}_\ast^\top\|\nonumber \\
    &= \| (\mathbf{I}_d - \tfrac{1}{\|\mathbf{\bar{w}}_{\ast}\|^2} \mathbf{B}_\ast \mathbf{\bar{w}}_\ast  \mathbf{\bar{w}}_\ast^\top \mathbf{B}_\ast^\top  - \mathbf{\tilde{B}}_0\mathbf{\tilde{B}}_0^\top) (2 \mathbf{\tilde{B}}_0 \mathbf{\tilde{B}}_0^\top \mathbf{\tilde{B}}_\ast -  \mathbf{\tilde{B}}_\ast)\mathbf{\tilde{V}}_\ast^\top\|\nonumber \\
    &= \| (\mathbf{I}_d - \mathbf{\tilde{B}}_0\mathbf{\tilde{B}}_0^\top) (2 \mathbf{\tilde{B}}_0 \mathbf{\tilde{B}}_0^\top \mathbf{\tilde{B}}_\ast -  \mathbf{\tilde{B}}_\ast)\mathbf{\tilde{V}}_\ast^\top\|\nonumber \\
    &= \| (\mathbf{I}_d - \mathbf{\tilde{B}}_0\mathbf{\tilde{B}}_0^\top)  \mathbf{\tilde{B}}_\ast \mathbf{\tilde{V}}_\ast^\top\|\nonumber \\
    &= \| (\mathbf{I}_d - \mathbf{\tilde{B}}_0\mathbf{\tilde{B}}_0^\top)  \mathbf{\tilde{B}}_\ast\|\nonumber \\
    &= \dist(\mathbf{\tilde{B}}_\ast, \mathbf{\tilde{B}}_0) \nonumber \\
    &=\delta_0 \label{fhf}
\end{align}
where \eqref{fhf} follows from \eqref{dist}. Moreover, $\mathbf{B}_{\ast'}\mathbf{\bar{w}}_{\ast} = \mathbf{B}_{\ast}\mathbf{\bar{w}}_{\ast} \in \col(\mathbf{B}_0)$, thus $\mathbf{B}_{\ast'}\in \mathcal{B}$. Next, 
\begin{align}
    \dist(\mathbf{B}_\ast, \mathbf{B}_{\ast'}) &= \|(\mathbf{I}_d - \mathbf{B}_\ast\mathbf{B}_\ast^\top)\mathbf{B}_{\ast'} \|\nonumber \\
    &= \|(\mathbf{I}_d \!-\!  \tfrac{1}{\|\mathbf{\bar{w}}_{\ast}\|^2} \mathbf{B}_\ast \mathbf{\bar{w}}_\ast  \mathbf{\bar{w}}_\ast^\top \mathbf{B}_\ast^\top \! -\! \mathbf{\tilde{B}}_\ast\mathbf{\tilde{B}}_\ast^\top)\big(\tfrac{1}{\|\mathbf{\bar{w}}_{\ast}\|^2} \mathbf{B}_\ast \mathbf{\bar{w}}_\ast \mathbf{\bar{w}}_\ast^\top +  (2 \mathbf{\tilde{B}}_0 \mathbf{\tilde{B}}_0^\top \mathbf{\tilde{B}}_\ast -  \mathbf{\tilde{B}}_\ast)\mathbf{\tilde{V}}_\ast^\top\big)\|\nonumber \\ 
    &= \|(\mathbf{I}_d \! -\! \mathbf{\tilde{B}}_\ast\mathbf{\tilde{B}}_\ast^\top)  (2 \mathbf{\tilde{B}}_0 \mathbf{\tilde{B}}_0^\top \mathbf{\tilde{B}}_\ast \!-\!  \mathbf{\tilde{B}}_\ast)\| \nonumber \\
    &= 2\|(\mathbf{I}_d \! -\! \mathbf{\tilde{B}}_\ast\mathbf{\tilde{B}}_\ast^\top)  \mathbf{\tilde{B}}_0 \mathbf{\tilde{B}}_0^\top \mathbf{\tilde{B}}_\ast\| \nonumber \\
    &\geq 2 \|(\mathbf{I}_d \! -\! \mathbf{\tilde{B}}_\ast\mathbf{\tilde{B}}_\ast^\top)  \mathbf{\tilde{B}}_0\|\sigma_{\min}( \mathbf{\tilde{B}}_0^\top \mathbf{\tilde{B}}_\ast) \nonumber \\
    &= 2 \dist(\mathbf{\tilde{B}}_0,\mathbf{\tilde{B}}_\ast)\sqrt{1 - \dist^2( \mathbf{\tilde{B}}_0,\mathbf{\tilde{B}}_\ast ) } \label{swrt} \\
    &= 2 \delta_0\sqrt{1-\delta_0} \nonumber
\end{align}
where \eqref{swrt} follows since $\sigma_{\min}^2(\mathbf{\tilde{B}}_1^\top \mathbf{\tilde{B}}_2 ) + \sigma_{\max}^2( (\mathbf{I}_d \! -\! \mathbf{\tilde{B}}_1\mathbf{\tilde{B}}_1^\top)  \mathbf{\tilde{B}}_2) = 1$ for any $\mathbf{\tilde{B}}_1,\mathbf{\tilde{B}}_2\in\mathcal{O}^{d,k-1}$.

Note that for D-GD the global update for the representation is 
\begin{align}
    \mathbf{B}_{t+1} = \mathbf{B}_t - \frac{\alpha}{M}\sum_{i=1}^M \nabla_{\mathbf{B}} f_i(\mathbf{B}_t, \mathbf{w}_t) = \mathbf{B}_t - \alpha (\mathbf{B}_t \mathbf{w}_t -\mathbf{B}_\ast \mathbf{\bar{w}}_{\ast}) \mathbf{w}_t^\top,
\end{align}
and similarly, the update for the head is $\mathbf{w}_{t+1} = \mathbf{w}_t - \alpha\mathbf{B}_t^\top(\mathbf{B}_t\mathbf{w}_t - \mathbf{B}_\ast \mathbf{\bar{w}}_{\ast})$. Thus, the behavior of D-GD is indistinguishable in the settings with ground-truth representations $\mathbf{B}_{\ast'}, \mathbf{B}_\ast$ since   $\mathbf{B}_{\ast'} \mathbf{\bar{w}}_\ast = \mathbf{B}_\ast \mathbf{\bar{w}}_\ast$.
In particular, $\mathbf{B}_T^{\text{D-GD}}(\mathbf{B}_0,\mathbf{B}_{\ast},\{\mathbf{w}_{\ast,i}\}_{i=1}^M, \alpha) = \mathbf{B}_T^{\text{D-GD}}(\mathbf{B}_0,\mathbf{B}_{\ast'},\{\mathbf{w}_{\ast,i}\}_{i=1}^M, \alpha)$
 Using this equality along with the triangle inequality yields
\begin{align}
    \dist(\mathbf{B}_T^{\text{D-GD}}(\mathbf{B}_0,\mathbf{B}_{\ast'},\{\mathbf{w}_{\ast,i}\}_{i=1}^M, \alpha), \mathbf{B}_{\ast'}  ) &= \dist(\mathbf{B}_T^{\text{D-GD}}(\mathbf{B}_0,\mathbf{B}_{\ast},\{\mathbf{w}_{\ast,i}\}_{i=1}^M, \alpha), \mathbf{B}_{\ast'}  ) \nonumber \\
    &\geq \dist( \mathbf{B}_\ast,  \mathbf{B}_{\ast'}) - \dist(\dist(\mathbf{B}_T^{\text{D-GD}}(\mathbf{B}_0,\mathbf{B}_{\ast},\{\mathbf{w}_{\ast,i}\}_{i=1}^M, \alpha), \mathbf{B}_\ast  ) \nonumber \\
    &\geq 2 \delta_0 \sqrt{1-\delta_0^2} - 0.7\delta_0 \label{lk} \\
    &\geq (\sqrt{3}-0.7)\delta_0 \label{klk} \\
    &\geq \delta_0 \nonumber
\end{align}
as desired, where \eqref{lk} follows by the definition of case (2) and \eqref{swrt}, and \eqref{klk} follows by $\delta_0 \in (0,1/2]$.
\end{proof}



\newpage
\section{Experimental Details} \label{app:experiments}

\subsection{Multi-task linear regression}

The multi-task linear regression experiments consist of two stages: training and fine-tuning. During training, we track $\dist(\mathbf{B}_t, \mathbf{B}_\ast)$ in Figure \ref{fig:1} and the Frobenius norm of the gradient of \ref{glob_pop}, i.e. $(\|\mathbf{B}_t^\top(\mathbf{B}_t\mathbf{w}_t - \mathbf{B}_\ast \mathbf{\bar{w}}_{\ast})\|^2_2 + \|(\mathbf{B}_t\mathbf{w}_t - \mathbf{B}_\ast \mathbf{\bar{w}}_{\ast})\mathbf{w}_t^\top\|_F^2)^{1/2}$, in Figure \ref{fig:linear}(left). For fine-tuning, we track the squared Euclidean distance of the post-fine-tuned model from the ground-truth in Figure \ref{fig:linear}(right) for various numbers of fine-tuning samples $n$.

Each training trial consists of first sampling $M$ ground truth heads $\mathbf{w}_{\ast,i} \sim \mathcal{N}(\mathbf{0},\mathbf{I}_k)$ and a ground truth representation $\mathbf{\check{B}}_\ast \in \mathbb{R}^{d \times k}$ such that each element is i.i.d. sampled from a standard Gaussian distribution. Then, $\mathbf{B}_\ast$ is formed by computing the QR factorization of $\mathbf{\check{B}}_\ast$, i.e. $\mathbf{B}_\ast\mathbf{R}_\ast = \mathbf{\check{B}}$, where $\mathbf{R}_\ast \in \mathbb{R}^{k\times k}$ is upper triangular and $\mathbf{B}_\ast\in \mathcal{O}^{d \times k}$ has orthonormal columns. To initialize the model we set $\mathbf{w}_0= \mathbf{0}\in \mathbb{R}^k$ and sample $\mathbf{\check{B}}_0 \in \mathbb{R}^{d \times k}$ such that each element is i.i.d. sampled from a standard Gaussian distribution, then compute  $\mathbf{{B}}_0 = \tfrac{1}{\sqrt{\alpha}}\mathbf{\hat{B}}_0$ where $\mathbf{\hat{B}}_0 \in \mathcal{O}^{d\times k}$ is the matrix with orthonormal columns resulting from the QR factorization of $\mathbf{\check{B}}_0$. Then we run FedAvg with $\tau=2$ and D-SGD on the population objective \ref{glob_pop}, with both sampling $m=M$ clients per round and using step size $\alpha = 0.4$. The training plots show quantities averaged over 10 independent trials.

For each fine-tuning trial, we similarly draw a new head $\mathbf{w}_{\ast,M+1}\sim \mathcal{N}(\mathbf{0},\mathbf{I}_k)$, and data $\mathbf{x}_{M+1,j}\sim \mathcal{N}(\mathbf{0},\mathbf{I}_k)$, $y_{M+1,j} = \langle \mathbf{B}_\ast \mathbf{w}_{\ast,i}, \mathbf{x}_{M+1,j}\rangle+\zeta_{M+1,j}$ where $\zeta_{M+1,j} \sim \mathcal{N}(0,0.01)$.
Then we run GD on the empirical loss $\frac{1}{2n}\sum_{j=1}^n (\langle \mathbf{Bw}, \mathbf{x}_{M+1,j}\rangle  - \mathbf{y}_{M+1,j})^2$ for $\tau'=200$ iterations with step size $\alpha = 0.01$. The fine-tuning plot shows average results over 10 independent, end-to-end trials (starting with training), and the error bars give standard deviations.

\subsection{Image classification}

The CNN used in the image classification experiments has six convolutional layers with ReLU activations and max pooling after every other layer. On top of the six convolutional layers  is a 3-layer MLP with ReLU activations.



All models are trained with a step size of $\alpha = 0.1$ after tuning in $\{ 0.5,0.1, 0.05, 0.01,0.005\}$ and selecting the best $\alpha$ that yields the smallest training loss. In all cases, $m = 0.1M$. We use the SGD optimizer with weight decay $10^{-4}$ and momentum 0.5. We train all models such that $T\tau = 125000$. Thus, D-SGD trains for $T=125000$ rounds, since $\tau=1$ in this case. Likewise, for FedAvg with $\tau=50$, $T=2500$ training rounds are executed. The batch size is 10 in all cases. We also experimented with larger batch sizes for D-SGD but they did not improve performance.

Each client has  500 training samples in all cases. Thus, FedAvg with $\tau=50$ is equivalent to FedAvg with one local epoch. For the experiments with $C$ classes per client for all clients, each client has the same number of images from each class. 
For the experiment testing fine-tuning performance on new classes from the same dataset, the first 80 classes for CIFAR100 are used for training, while classes 80-99 are reserved for new clients. For fine-tuning, 10 epochs of SGD are executed on the training data for the new client. For fine-tuning on CIFAR10, each new client has images from all 10 classes, and equal numbers of samples per class for training. For fine-tuning on CIFAR100, each new client has images from all 20 classes, with equal numbers of samples from each class for training when possible, and either 2 or 3 samples from each class otherwise (when the number of fine-tuning samples equals 50). Accuracies are top 1 accuracies evaluated on 2000 test samples per client for CIFAR10 and 400 test samples per client for the last 20 classes of CIFAR100.

To compute the layer-wise similarities in {Figure \ref{fig:2}},  we use the Centered Kernel Alignment (CKA)  similarity metric, which is  the most common metric used to measure the similarity between neural networks \cite{kornblith2019similarity}. CKA similarity between  model layers is evaluated by feeding the same input through both networks and computing the similarity between the outputs of  the layers. The similarity metric is invariant to rotations and isotropic scaling of the layer outputs \cite{kornblith2019similarity}. We use the code from \cite{subramanian2021torch_cka} to compute CKA similarity.

For {Figure \ref{fig:sim}}, the cosine similarity is evaluated by first feeding $n$ images from class $c$ through the trained network and storing the output of the network layer before the final linear layer to obtain the features $\{ \mathbf{f}_1^c,\dots,\mathbf{f}_n^c \}$, where each $\mathbf{f}_i^c\in\mathbb{R}^{512}$. Then, to obtain the average cosine similarity between features from classes $c$ and $c'$, we compute $\frac{1}{n}\sum_{i=1}^n\frac{|\langle \mathbf{f}_{i}^c , \mathbf{f}_{i}^{c'} \rangle |}{\|\mathbf{f}_{i}^{c} \|\|\mathbf{f}_{i}^{c'} \|}$. We use $n=25$.



All experiments were performed on two 8GB NVIDIA GeForce RTX 2070 GPUs.

\end{document}